\newtheorem{theorem}{Theorem}
\declaretheorem[name=Proposition, sibling=theorem, refname={proposition,propositions}, Refname={Proposition,Propositions}]{proposition}
\newtheorem{lemma}{Lemma}
\newtheorem{remark}{Remark}
\newtheorem{definition}{Definition}
\newtheorem{corollary}{Corollary}
\newcommand{\dacop}{T^{dac}_{\widetilde \Acal, \Xb}}
\newcommand{\dau}{d_{\textit{aug}}}
\title{Sample Efficiency of Data Augmentation Consistency Regularization}
\author[1]{Shuo Yang\thanks{Equal contribution. Correspondence to: \href{mailto:yangshuo_ut@utexas.edu}{yangshuo\_ut@utexas.edu}, \href{mailto:ydong@utexas.edu}{ydong@utexas.edu}, \href{mailto:qilei@princeton.edu}{qilei@princeton.edu}}}
\author[1]{Yijun Dong$^*$}
\author[1]{Rachel Ward}
\author[1]{Inderjit S. Dhillon}
\author[1]{Sujay Sanghavi}
\author[2]{\\Qi Lei}
\affil[1]{The University of Texas at Austin}
\affil[2]{Princeton University}
\begin{document}

\maketitle
\begin{abstract}

Data augmentation is popular in the training of large neural networks; currently, however, there is no clear theoretical comparison between different algorithmic choices on how to use augmented data. In this paper, we take a step in this direction -- we first present a simple and novel analysis for linear regression with label invariant augmentations, demonstrating that data augmentation consistency (DAC) is intrinsically more efficient than empirical risk minimization on augmented data (DA-ERM). The analysis is then extended to misspecified augmentations (i.e., augmentations that change the labels), which again demonstrates the merit of DAC over DA-ERM. 
Further, we extend our analysis to non-linear models (e.g., neural networks) and present generalization bounds. Finally, we perform experiments that make a clean and apples-to-apples comparison (i.e., with no extra modeling or data tweaks) between DAC and DA-ERM using CIFAR-100 and WideResNet; these together demonstrate the superior efficacy of DAC.

\end{abstract}
\section{Introduction}

Modern machine learning models, especially deep learning models, require abundant training samples. Since data collection and human annotation are expensive, data augmentation has become a ubiquitous practice in creating artificial labeled samples and improving generalization performance. This practice is corroborated by the fact that the semantics of images remain the same through simple translations like obscuring, flipping, rotation, color jitter, rescaling~\citep{shorten2019survey}. Conventional algorithms use data augmentation to expand the training data set~\citep{simard1998transformation,krizhevsky2012imagenet,simonyan2014very,he2016deep,cubuk2018autoaugment}. 

Data Augmentation Consistency (DAC) regularization, as an alternative, enforces the model to output similar predictions on the original and augmented samples and has contributed to many recent state-of-the-art supervised or semi-supervised algorithms. This idea was first proposed in~\cite{bachman2014learning} and popularized by~\cite{laine2016temporal,sajjadi2016regularization}, and gained more attention recently with the success of FixMatch~\citep{sohn2020fixmatch} for semi-supervised few-shot learning as well as AdaMatch~\citep{berthelot2021adamatch} for domain adaptation. DAC can utilize unlabeled samples, as one can augment the training samples and enforce consistent predictions without knowing the true labels. This bypasses the limitation of the conventional algorithms that can only augment labeled samples and add them to the training set (referred to as DA-ERM). However, it is not well-understood whether DAC has additional algorithmic benefits compared to DA-ERM. We are, therefore, seeking a theoretical answer. 

Despite the empirical success, the theoretical understanding of data augmentation (DA) remains limited. Existing work \citep{chen2020group,mei2021learning,lyle2019analysis} focused on establishing that augmenting data saves on the number of labeled samples needed for the same level of accuracy. However, none of these explicitly compare the efficacy (in terms of the number of augmented samples) between different algorithmic choices on {\em how to use the augmented samples} in an apples-to-apples way.

In this paper, we focus on the following research question:
\begin{center}

\textit{Is DAC intrinsically more efficient than DA-ERM (even without unlabeled samples)? }
\end{center}

We answer the question affirmatively. We show that DAC is intrinsically more efficient than DA-ERM with a simple and novel analysis for linear regression under label invariant augmentations. We then extend the analysis to misspecified augmentations (i.e., those that change the labels). We further provide generalization bounds under consistency regularization for non-linear models like two-layer neural networks and DNN-based classifiers with expansion-based augmentations. Intuitively, we show DAC is better than DA-ERM in the following sense: 1) DAC enforces stronger invariance in the learned models, yielding smaller estimation error; and 2) DAC better tolerates mis-specified augmentations and incurs smaller approximation error. Our theoretical findings can also explain and guide some technical choices, e.g. why we can use stronger augmentation in consistency regularization but only weaker augmentation when creating pseudo-labels~\citep{sohn2020fixmatch}.  

Specifically, our \textbf{main contributions} are:
\begin{itemize}
    \item \textbf{Theoretical comparisons between DAC and DA-ERM.} We first present a simple and novel result for linear regression, which shows that DAC yields a strictly smaller generalization error than DA-ERM using the same augmented data. Further, we demonstrate that with with the flexibility of hyper-parameter tuning, DAC can better handle data augmentation with small misspecification in the labels. 
    \item \textbf{Extended analysis for non-linear models.} 
    We derive generalization bounds for DAC under two-layer neural networks, and classification with expansion-based augmentations. 
    \item \textbf{Empirical comparisons between DAC and DA-ERM.} We perform experiments that make a clean and apples-to-apples comparison (i.e., with no extra modeling or data tweaks) between DAC and DA-ERM using CIFAR-100 and WideResNet. Our empirical results demonstrate the superior efficacy of DAC.
\end{itemize}

\section{Related Work}

\textbf{Empirical findings. }
Data augmentation (DA) is an essential ingredient for almost every state-of-the-art supervised learning algorithm since the seminal work of \cite{krizhevsky2012imagenet} (see reference therein \citep{simard1998transformation,simonyan2014very,he2016deep,cubuk2018autoaugment,kuchnik2018efficient}). It started from adding augmented data to the training samples via (random) perturbations, distortions, scales, crops, rotations, and horizontal flips. More sophisticated variants were subsequently designed; a non-exhaustive list includes Mixup \citep{zhang2017mixup}, Cutout \citep{devries2017improved}, and Cutmix \citep{yun2019cutmix}. The choice of data augmentation and their combinations require domain knowledge and experts' heuristics, which triggered some automated search algorithms to find the best augmentation strategies~\citep{lim2019fast,cubuk2019autoaugment}. The effects of different DAs are systematically explored in \cite{tensmeyer2016improving}. 

Recent practices not only add augmented data to the training set but also enforce similar predictions by adding consistency regularization~\citep{bachman2014learning,laine2016temporal,sohn2020fixmatch}. One benefit of DAC is the feasibility of exploiting unlabeled data. Therefore input consistency on augmented data also formed a major component to state-of-the-art algorithms for semi-supervised learning~\citep{laine2016temporal,sajjadi2016regularization,sohn2020fixmatch,xie2020self}, self-supervised learning~\citep{chen2020simple}, and unsupervised domain adaptation~\citep{french2017self,berthelot2021adamatch}.

\textbf{Theoretical studies. }
Many interpret the effect of DA as some form of regularization~\citep{he2019data}. Some work focuses on linear transformations and linear models \citep{wu2020generalization} or kernel classifiers \citep{dao2019kernel}. Convolutional neural networks by design enforce translation equivariance symmetry \citep{benton2020learning,li2019enhanced};  further studies have hard-coded CNN's invariance or equivariance to rotation~\citep{cohen2016group,marcos2017rotation,worrall2017harmonic,zhou2017oriented}, scaling~\citep{sosnovik2019scale,worrall2019deep} and other types of transformations.

Another line of work views data augmentation as invariant learning by averaging over group actions~\citep{lyle2019analysis,chen2020group,mei2021learning}. They consider an ideal setting that is equivalent to ERM with all possible augmented data, bringing a clean mathematical interpretation. We are interested in a more realistic setting with limited augmented data. In this setting, it is crucial to utilize the limited data with proper training methods, the difference of which cannot be revealed under previously studied settings.  

Some more recent work investigates the feature representation learning procedure with DA for self-supervised learning tasks~\citep{garg2020functional,wen2021toward,haochen2021provable,von2021self}. \cite{cai2021theory,wei2021theoretical} studied the effect of data augmentation with label propagation. Data augmentation is also deployed to improve robustness \citep{rajput2019does}, to facilitate domain adaptation and domain generalization~\citep{cai2021theory,sagawa2019distributionally}.

\section{Problem Setup and Data Augmentation Consistency}\label{sec:general}

Consider the standard supervised learning problem setup: $\xb \in \Xcal$ is input feature, and $y \in \Ycal$ is its label (or response). Let $\Pgt$ be the true distribution of $\rbr{\xb, y}$ (i.e., the label distribution follows $y \sim \Pgt(y | \xb)$). We have the following definition for label invariant augmentation.

\begin{definition}[Label Invariant Augmentation] \label{def:causal_invar_data_aug}
For any sample $\xb \in \Xcal$, we say $A(\xb)\in \Xcal$ is a label invariant augmentation if and only if $\Pgt(y | \xb) = \Pgt(y | A(\xb))$.
\end{definition}

Our work largely relies on label invariant augmentation but also extends to augmentations that incur small misspecification in their labels. Therefore our results apply to the augmentations achieved via certain transformations (e.g., random cropping, rotation), and we do not intend to cover augmentations that can largely alter the semantic meanings (e.g., MixUp \citep{zhang2017mixup}).

Now we introduce the learning problem on an augmented dataset. Let $(\Xb, \yb) \in \Xcal^N \times \Ycal^N$ be a training set consisting of $N$ $\iid$ samples. Besides the original $\rbr{\Xb,\yb}$, each training sample is provided with $\alpha$ augmented samples. The features of the augmented dataset $\widetilde\Acal(\xb) \in \Xcal^{(1+\alpha) N}$ is:
\begin{align*}
    \widetilde\Acal(\Xb) = \sbr{\xb_{1}; \cdots; \xb_{N}; \xb_{1,1}; \cdots; \xb_{N,1}; \cdots; \xb_{1, \alpha}; \cdots; \xb_{N, \alpha}} \in \Xcal^{(1+\alpha) N},
\end{align*}
where $\xb_i$ is in the original training set and $\xb_{i, j}, \forall j \in [\alpha]$ are the augmentations of $\xb_i$. The labels of the augmented samples are kept the same, which can be denoted as $\widetilde \Mb \yb \in \Ycal^{(1+\alpha)N}$, where $\wt\Mb \in \RR^{(1+\alpha) N \times N}$ is a vertical stack of $(1+\alpha)$ identity mappings. 

\textbf{Data Augmentation Consistency Regularization.} Let $\Hcal = \cbr{h: \Xcal \rightarrow \Ycal}$ be a well-specified function class (e.g., for linear regression problems, $\exists h^* \in \Hcal$, s.t. $h^*(\xb) = \EE[y |\xb]$) that we hope to learn from. Without loss of generality, we assume that each function $h \in \Hcal$ can be expressed as $h = f_h \circ \phi_h$, where $\phi_h \in \Phi = \cbr{\phi: \Xcal \rightarrow \Wcal}$ is a proper representation mapping and $f_h \in \Fcal = \cbr{f:\Wcal \rightarrow \Ycal}$ is a predictor on top of the learned representation. We tend to decompose $h$ such that $\phi_h$ is a powerful feature extraction function whereas $f_h$ can be as simple as a linear combiner. For instance, in a deep neural network, all the layers before the final layer can be viewed as feature extraction $\phi_h$, and the predictor $f_h$ is the final linear combination layer.

For a loss function $l: \Ycal \times \Ycal \rightarrow \RR$ and a metric $\varrho$ properly defined on the representation space $\Wcal$, learning with data augmentation consistency (DAC) regularization is:
\begin{align}\label{eq:dac_soft}
    \argmin_{h\in\Hcal}\sum_{i=1}^{N}l(h(\xb_i), y_i) + \underbrace{\lambda\sum_{i=1}^N\sum_{j=1}^{\alpha} \varrho\rbr{\phi_h(\xb_i), \phi_h(\xb_{i, j})}}_{\textit{DAC regularization}}.
\end{align}

Note that the DAC regularization in \Cref{eq:dac_soft} can be easily implemented empirically as a regularizer. Intuitively, DAC regularization penalizes the representation difference between the original sample $\phi_h(\xb_i)$ and the augmented sample $\phi_h(\xb_{i,j})$, with the belief that similar samples (i.e., original and augmented samples) should have similar representations. When the data augmentations do not alter the labels, it is reasonable to enforce a strong regularization (i.e., $\lambda \rightarrow \infty$) -- since the conditional distribution of $y$ does not change. The learned function $\widehat h^{dac}$ can then be written as the solution of a constrained optimization problem:
\begin{align}\label{eq:dac_hard}
\begin{split}
    & \widehat h^{dac} \triangleq \argmin_{h \in \Hcal} \sum_{i=1}^N l(h(\xb_i), y_i)\quad \text{s.t.}\quad \phi_h(\xb_i) = \phi_h(\xb_{i,j}),~ \forall i \in [N], j \in [\alpha].
\end{split}
\end{align}

In the rest of the paper, we mainly focus on the data augmentations satisfying \Cref{def:causal_invar_data_aug} and our analysis relies on the formulation of \Cref{eq:dac_hard}. When the data augmentations alter the label distributions (i.e., not satisfying \Cref{def:causal_invar_data_aug}), it becomes necessary to adopt a finite $\lambda$ for \Cref{eq:dac_soft}, and such extension is discussed in \Cref{subsec:finite_lambda}.
\section{Linear Model and Label Invariant Augmentations}\label{sec:linear_regression_label_invariant}

In this section, we show the efficacy of DAC regularization with linear regression under label invariant augmentations (\Cref{def:causal_invar_data_aug}). 

To see the efficacy of DAC regularization (i.e., \Cref{eq:dac_hard}), we revisit a more commonly adopted training method here -- empirical risk minimization on augmented data (DA-ERM):
\begin{align}\label{eq:plain_erm}
    \wh h^{\herm} \triangleq \argmin_{h\in \Hcal} \sum_{i=1}^N l (h(\xb_i), y_i) + \sum_{i=1}^N\sum_{j=1}^{\alpha}l (h(\xb_{i,j}), y_i).
\end{align}
Now we show that the DAC regularization (\Cref{eq:dac_hard}) learns more efficiently than DA-ERM. Consider the following setting: given $N$ observations $\Xb \in \RR^{N \times d}$, the responses $\yb \in \RR^N$ are generated from a linear model $\yb = \Xb\thetab^* + \epsb$, where $\epsb\in \RR^N$ is zero-mean noise with $\EE\sbr{\epsb\epsb^\top} = \sigma^2 \Ib_N$. Recall that $\widetilde \Acal(\Xb)$ is the entire augmented dataset, and $\widetilde \Mb \yb$ corresponds to the labels. We focus on the fixed design excess risk of $\thetab$ on $\widetilde \Acal(\Xb)$, which is defined as $L(\thetab) \triangleq \frac{1}{(1+\alpha) N}\norm{\widetilde \Acal(\Xb)\thetab - \widetilde \Acal(\Xb)\thetab^*}_2^2$.

Let $\dau \triangleq \rank\rbr{\widetilde \Acal(\Xb) - \widetilde \Mb \Xb}$ measure the number of dimensions perturbed by augmentation (i.e., large $\dau$ implies that $\wt\Acal(\Xb)$ well perturbs the original dataset). Assuming that $\wt\Acal(\Xb)$ has full column rank (such that the linear regression problem has a unique solution), we have the following result for learning by DAC versus DA-ERM.

\begin{theorem}[Informal result on linear regression (formally in \Cref{thm:formal_linear_regression})]\label{thm:informal_linear_regression}
    Learning with DAC regularization, we have $\EE_{\epsb}\sbr{L(\widehat \thetab^{dac}) - L(\thetab^*)} = \frac{(d - \dau)\sigma^2}{N}$, while learning with ERM directly on the augmented dataset, we have $\EE_{\epsb}\sbr{L(\widehat \thetab^{\herm}) - L(\thetab^*)} = \frac{(d - \dau + d')\sigma^2}{N}$, where $d' \in [0, \dau]$. 
\end{theorem}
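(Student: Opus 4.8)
The plan is to reframe DAC as subspace-constrained least squares, reduce both excess risks to traces of projection (hat) matrices, and compare the two traces. \emph{Step 1 (geometry of the constraint).} For linear regression the representation map is the identity, so the consistency constraints $\phi_h(\xb_i)=\phi_h(\xb_{i,j})$ become $(\xb_i-\xb_{i,j})^\top\thetab=0$; collecting them, $\wh\thetab^{dac}$ is the minimizer of $\|\Xb\thetab-\yb\|_2^2$ over the subspace $\Ncal\triangleq\kernel(\wt\Acal(\Xb)-\wt\Mb\Xb)$, which has dimension $d-\dau$. By \Cref{def:causal_invar_data_aug}, $\xb_i^\top\thetab^*=\EE[y\mid\xb_i]=\EE[y\mid\xb_{i,j}]=\xb_{i,j}^\top\thetab^*$, so $\thetab^*\in\Ncal$. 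The key observation is that for any $\vb\in\Ncal$ one has $\wt\Acal(\Xb)\vb=\wt\Mb\Xb\vb$ (on $\Ncal$ the augmented rows equal copies of the original rows), so for $\thetab$ with $\thetab-\thetab^*\in\Ncal$,
\[
L(\thetab)-L(\thetab^*)=\tfrac{1}{(1+\alpha)N}\bigl\|\wt\Acal(\Xb)(\thetab-\thetab^*)\bigr\|_2^2=\tfrac1N\bigl\|\Xb(\thetab-\thetab^*)\bigr\|_2^2 .
\]

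\emph{Step 2 (DAC side).} Since $\thetab^*\in\Ncal$ and $\wh\thetab^{dac}\in\Ncal$, choosing an orthonormal basis $\Vb\in\RR^{d\times(d-\dau)}$ of $\Ncal$ turns DAC into ordinary least squares for the model with noise $\epsb$ and design $\Xb\Vb$, which has full column rank $d-\dau$ (if $\Xb\Vb\betab=0$ with $\betab\ne0$ then $\wt\Acal(\Xb)(\Vb\betab)=\wt\Mb\Xb(\Vb\betab)=0$, contradicting the standing full-column-rank assumption on $\wt\Acal(\Xb)$). This estimator is unbiased, and by Step 1 its excess risk equals $\tfrac1N\|\Xb(\wh\thetab^{dac}-\thetab^*)\|_2^2$, whose expectation is $\tfrac{\sigma^2}{N}\tr(\Hb_0)$ where $\Hb_0$ is the rank-$(d-\dau)$ hat matrix of $\Xb\Vb$; hence $\EE[L(\wh\thetab^{dac})-L(\thetab^*)]=(d-\dau)\sigma^2/N$.

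\emph{Step 3 (DA-ERM side and comparison).} $\wh\thetab^{\herm}$ is OLS with design $\wt\Acal(\Xb)$ (full column rank $d$, hat matrix $\wt\Hb$, $\tr\wt\Hb=d$) and target $\wt\Mb\yb=\wt\Acal(\Xb)\thetab^*+\wt\Mb\epsb$, using $\wt\Mb\Xb\thetab^*=\wt\Acal(\Xb)\thetab^*$ from $\thetab^*\in\Ncal$. It is unbiased, and since $\wt\Acal(\Xb)(\wh\thetab^{\herm}-\thetab^*)=\wt\Hb\wt\Mb\epsb$ and $\wt\Hb$ is an orthogonal projection, a direct computation gives
\[
\EE\bigl[L(\wh\thetab^{\herm})-L(\thetab^*)\bigr]=\tfrac{\sigma^2}{(1+\alpha)N}\tr\bigl(\wt\Mb^\top\wt\Hb\wt\Mb\bigr)=\tfrac{\sigma^2}{N}\tr\bigl(\Pib\wt\Hb\Pib\bigr),
\]
where $\Pib\triangleq\tfrac{1}{1+\alpha}\wt\Mb\wt\Mb^\top$ is the orthogonal projection onto the ``diagonal'' copy subspace $\Dcal=\range(\wt\Mb)\subseteq\RR^{(1+\alpha)N}$. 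It then remains to show $\tr(\Pib\wt\Hb\Pib)\in[d-\dau,\,d]$: the upper bound follows from $\Pib\preceq\Ib$ and $\tr\wt\Hb=d$; for the lower bound, $\wt\Acal(\Xb)\Ncal\subseteq\range(\wt\Hb)\cap\Dcal$ has dimension $d-\dau$ (injectivity of $\wt\Acal(\Xb)$), and on this subspace both $\wt\Hb$ and $\Pib$ act as the identity, so it contributes exactly $d-\dau$ to $\tr(\Pib\wt\Hb\Pib)$ while the orthogonal directions contribute $\|\wt\Hb\Pib(\cdot)\|_2^2\ge0$. Setting $d'\triangleq\tr(\Pib\wt\Hb\Pib)-(d-\dau)\in[0,\dau]$ completes the proof.

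\emph{Main obstacle.} The conceptual core is Step 3: recognizing that the DA-ERM excess risk is controlled by the alignment between $\range(\wt\Acal(\Xb))$ and the copy subspace $\Dcal$, and that the directions of $\wt\Acal(\Xb)$ lying outside $\Dcal$ are precisely the $\dau$ augmentation-perturbed directions, which is the source of the extra variance $d'\sigma^2/N$ that DAC avoids by constraining onto $\Ncal$. Everything else is bookkeeping: the various full-rank and invertibility conditions all reduce to the stated hypothesis that $\wt\Acal(\Xb)$ has full column rank, and the trace identities are routine once the projection structure is set up.
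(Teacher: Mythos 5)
Your proof is correct and follows essentially the same route as the paper's: both reduce the two excess risks to traces of orthogonal projectors sandwiched between copies of $\wt\Mb$, identify the subspace $\Scal = \wt\Acal(\Xb)\,\kernel\rbr{\wt\Acal(\Xb)-\wt\Mb\Xb}$ as the part shared by DAC and DA-ERM, and arrive at exactly the same $d'$ (your $\tr(\Pib\wt\Hb\Pib)-(d-\dau)$ equals the paper's $\frac{1}{1+\alpha}\tr\rbr{\wt\Mb^\top(\projAX-\Pb_\Scal)\wt\Mb}$). The only addition is that you explicitly verify $d'\in[0,\dau]$, which the paper asserts in the theorem statement but leaves implicit in its proof.
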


Formally, $d'$ is defined as $d' \triangleq \frac{\tr\rbr{ \rbr{\projAX - \Pb_{\Scal}} {\wt\Mb \widetilde \Mb^\top} }}{1+\alpha}$, where $\projAX \triangleq \widetilde \Acal(\Xb) \widetilde \Acal(\Xb)^\pinv$, and $\Pb_\Scal$ is the projector onto $\Scal \triangleq \cbr{\widetilde \Mb \Xb \thetab~|~\forall \thetab \in \RR^d, s.t. \rbr{\widetilde\Acal(\Xb) - \widetilde \Mb \Xb}\thetab = 0}$. Under standard conditions (e.g., $\xb$ is sub-Gaussian and $N$ is not too small), it is not hard to extend \Cref{thm:informal_linear_regression} to random design (i.e., the more commonly acknowledged generalization bound) with the same order.

\begin{remark}[Why DAC is more effective]
    Intuitively, DAC reduces the dimensions from $d$ to $d - \dau$ by enforcing consistency regularization. DA-ERM, on the other hand, still learns in the original $d$-dimensional space. $d'$ characterizes such difference.
\end{remark}

\begin{wrapfigure}{t}{0.5\columnwidth}
\vspace{-1em} %1.2
	\centering
	\includegraphics[width=\linewidth]{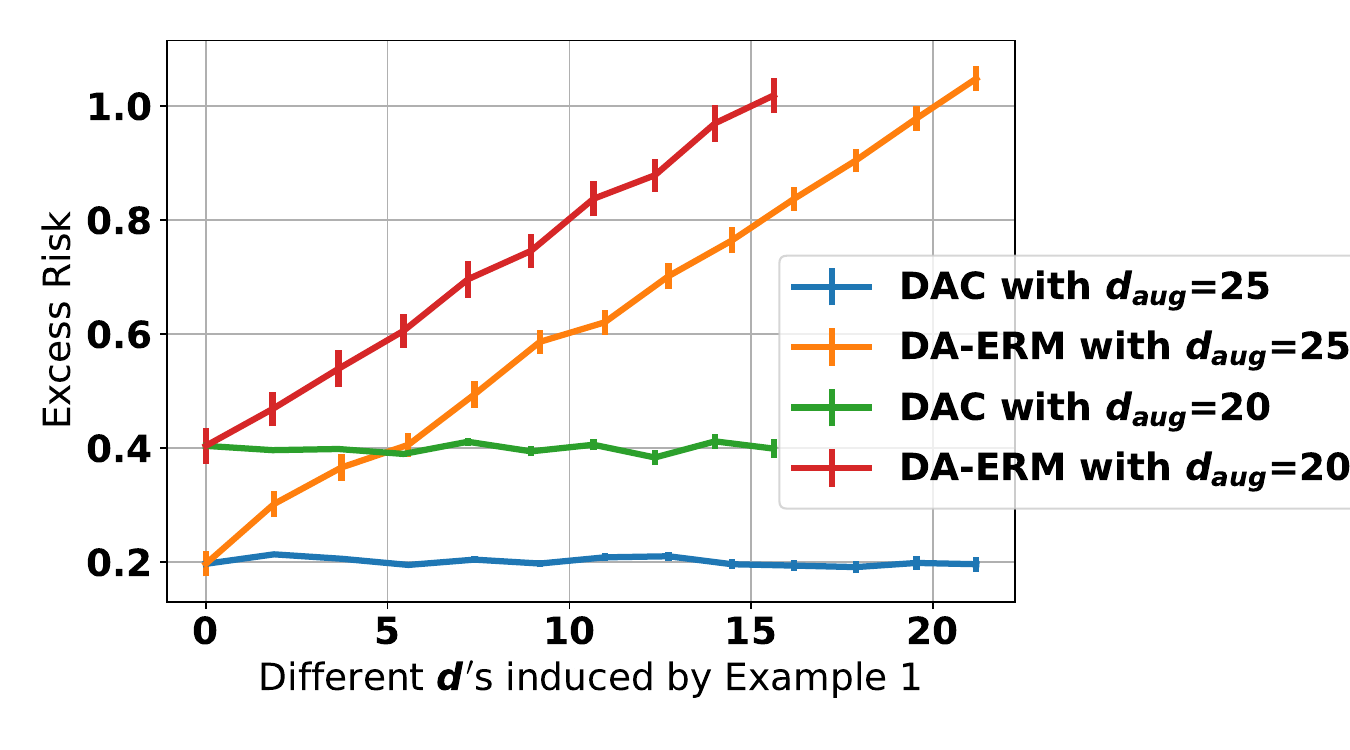}
	\vspace{-2em}
	\caption{Comparison of DAC regularization and DA-ERM (\Cref{example:linear_regression}). The results precisely match \Cref{thm:informal_linear_regression}. DA-ERM depends on the $d'$ induced by different augmentations, while the DAC regularization works equally well for all $d'$ and better than the DA-ERM. Further, both DAC and DA-ERM are affected by $\dau$, the number of dimensions perturbed by $\wt\Acal(\Xb)$.}
\label{fig:linear_regression}
\vspace{-3em}
\end{wrapfigure}

Here we present an explanation for $d'$. Note that $\sigma^2 \cdot {\wt \Mb \widetilde \Mb^\top}$ is the noise covariance matrix of the augmented dataset. $\tr\rbr{\Pb_\Scal {\wt\Mb \widetilde \Mb^\top}}$ is the variance of $\widehat\thetab^{dac}$, and $\tr\rbr{\projAX {\wt\Mb \widetilde \Mb^\top}}$ is the variance of $\widehat\thetab^{\herm}$. Thus, $d' \propto \tr\rbr{\rbr{\projAX - \Pb_\Scal} {\wt\Mb \widetilde \Mb^\top}}$ measures the difference. When $\projAX \neq \Pb_\Scal$ (a common scenario as instantiated in \Cref{example:linear_regression}), DAC is strictly better than DA-ERM.

\begin{example}\label{example:linear_regression} Consider a 30-dimensional linear regression. The original training set contains 50 samples. The inputs $\xb_i$s are generated independently from $\Ncal(0, \Ib_{30})$ and we set $\thetab^* = [\thetab_c^*; \mathbf{0}]$ with $\thetab_c^* \sim \Ncal(0, \Ib_5)$  and $\mathbf{0} \in \RR^{25}$. The noise variance $\sigma$ is set to $1$. We partition $\xb$ into 3 parts $[x_{c1}, x_{e_1}, x_{e2}]$ and take the following augmentations: $A([x_{c1}; x_{e1}; x_{e2}]) = [x_{c1}; 2x_{e1}; -x_{e2}], x_{c1}\in\RR^{d_{c1}}, x_{e1}\in \RR^{d_{e1}}, x_{e2}\in\RR^{d_{e2}}$, where $d_{c1} + d_{e1} + d_{e2} = 30$. 

Notice that the augmentation perturbs $x_{e1}$ and $x_{e2}$ and leaving $x_{c1}$ unchanged, we therefore have $\dau = 30 - d_{c1}$. By changing $d_{c1}$ and $d_{e1}$, we can have different augmentations with different $\dau, d'$. The results for $\dau \in \cbr{20, 25}$ and various $d'$s are presented in \Cref{fig:linear_regression}. The excess risks precisely match \Cref{thm:informal_linear_regression}. It confirms that the DAC regularization is strictly better than DA-ERM for a wide variety of augmentations.
\end{example}

\section{Beyond Label Invariant Augmentation}\label{subsec:finite_lambda}

In this section, we extend our analysis to misspecified augmentations by relaxing the label invariance assumption (such that $\Pgt(y | \xb) \neq \Pgt(y | A(\xb))$). With an illustrative linear regression problem, we show that DAC also brings advantages over DA-ERM for misspecified augmentations. 

We first recall the linear regression setup: given a set of $N$ $\iid$ samples $\rbr{\Xb, \yb}$ that follows $\yb = \Xb \thetab^* + \epsb$ where $\epsb$ are zero-mean independent noise with $\E\sbr{\epsb \epsb^\top} = \sigma^2 \Ib_N$, we aim to learn the unknown ground truth $\thetab^*$. For randomly generated misspecified augmentations $\wt \Acal(\Xb)$ that alter the labels (\ie, $\wt\Acal\rbr{\Xb}\thetab^* \neq \wt\Mb\Xb\thetab^*$), a proper consistency constraint is $\norm{\phi_h(\xb_i) - \phi_h(\xb_{i, j})}_2 \le \constmis$ (where $\xb_{i, j}$ is an augmentation of $\xb_i$, noticing that $\constmis = 0$ corresponds to label invariant augmentations in \Cref{def:causal_invar_data_aug}). 
For $\constmis>0$, the constrained optimization is equivalent to:
\begin{align}\label{eq:dac_soft_reg}
    \wh\thetab^{dac} = \argmin_{\thetab \in \R^d} \frac{1}{N}\nbr{\Xb \thetab - \yb}_2^2 
    + \frac{\lambda }{\rbr{1+\alpha} N} \norm{\rbr{\wt\Acal\rbr{\Xb} - \wt\Mb\Xb} \thetab}_2^2
,\end{align}
for some finite $0 < \lambda < \infty$. We compare $\wh\thetab^{dac}$ to the solution learned with ERM on augmented data (as in \Cref{eq:plain_erm}):
\begin{align*}
    \wh\thetab^{\herm} = \argmin_{\thetab \in \R^d} \frac{1}{\rbr{1+\alpha}N} \nbr{\wt\Acal\rbr{\Xb} \thetab - \wt\Mb \yb}_2^2.
\end{align*} 
Let $\covtr \triangleq \frac{1}{N} \Xb^\top \Xb$ and $\covall \triangleq \frac{\wt\Acal\rbr{\Xb}^\top \wt\Acal\rbr{\Xb}}{(1+\alpha)N}$. 
With $\Sb = \frac{\wt\Mb^\top \wt\Acal\rbr{\Xb}}{1+\alpha}$, $\Deltab \triangleq \wt\Acal\rbr{\Xb} - \wt\Mb\Xb$, and its reweighted analog $\wt\Deltab \triangleq \wt\Mb \Xb \wt\Acal\rbr{\Xb}^\pinv \Deltab$, we further introduce positive semidefinite matrices: $\covs \triangleq \frac{\Sb^\top \Sb}{N}$, $\covaug \triangleq \frac{\Deltab^\top \Deltab}{(1+\alpha)N}$, and $\covaugwt \triangleq \frac{\wt\Deltab^\top \wt\Deltab}{(1+\alpha)N}$.
For demonstration purpose, we consider fixed $\Xb$ and $\wt\Acal\rbr{\Xb}$, with respect to which we introduce distortion factors $c_X, c_S>0$ as the minimum constants that satisfy $\covall \aleq c_X \covtr$ and $\covall \aleq c_S \covs$ (notice that such $c_X, c_S$ exist almost surely when $\Xb$ and $\wt\Acal\rbr{\Xb}$ are drawn from absolutely continuous marginal distributions). 

Recall $\dau = \rank\rbr{\Deltab}$ from \Cref{sec:linear_regression_label_invariant}, let $\projrg \triangleq \Deltab^\pinv \Deltab$ denote the rank-$\dau$ orthogonal projector onto $\range\rbr{\Deltab^\top}$.
Then, for $L(\thetab) = \frac{1}{N}\nbr{\Xb\thetab - \yb}_2^2$, we have the following result:

\begin{figure}
    \begin{subfigure}{0.45\columnwidth}
    \centering
	\includegraphics[width=\linewidth]{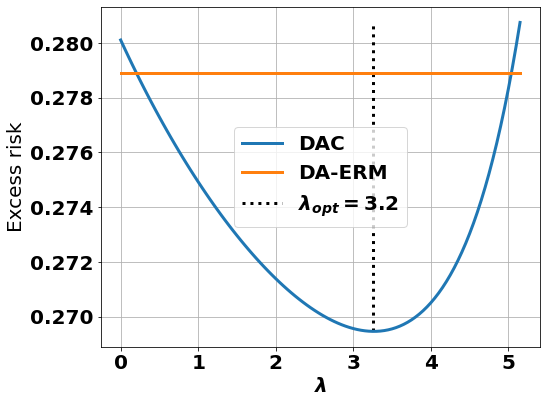}
	\vspace{-2em}
	\caption{Comparison of DAC with different $\lambda$ (optimal choice at $\lambda_{\textit{opt}}=3.2$) and DA-ERM in \Cref{example:misspec}, where $\dau=24$ and $\alpha=1$. The results demonstrate that, with a proper $\lambda$, DAC can outperform DA-ERM under misspecified augmentations.}
\label{fig:misspec_lambda}
    \end{subfigure}
    \hfill
    \begin{subfigure}{0.45\columnwidth}
    \centering
	\includegraphics[width=\linewidth]{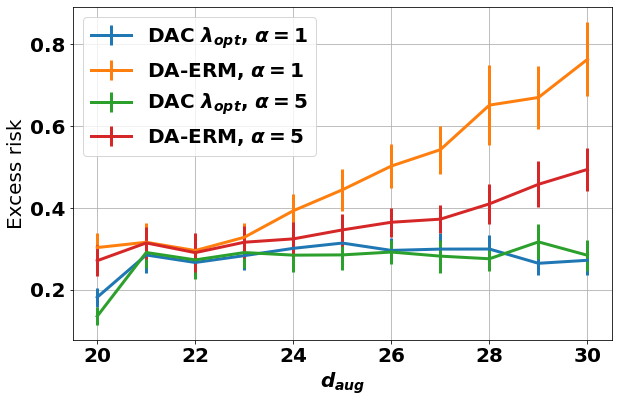}
	\vspace{-2em}
	\caption{Comparison of DAC with the optimal $\lambda$ and DA-ERM in \Cref{example:misspec} for different augmentation strength $\dau$. $\dau=20$ corresponds to the label-invariance augmentations, whereas increasing $\dau$ leads to more misspecification. 
	}
\label{fig:misspec_daug}
    \end{subfigure}
    \caption{Comparisons of DAC and DA-ERM under misspecification.}
\end{figure}

\begin{theorem}\label{thm:formal_linear_regression_soft}
Learning with DAC regularization (\Cref{eq:dac_soft_reg}), we have that $\EE_{\epsb}\sbr{L(\wh\thetab^{dac}) - L\rbr{\thetab^*}} \leq \frac{\sigma^2 \rbr{d-\dau}}{N} + \norm{\projrg \thetab^*}_{\covaug} \sqrt{\frac{\sigma^2}{N} \tr\rbr{\covtr \covaug^\pinv}}$ at the optimal $\lambda$, 
\footnote{A positive (semi)definite matrix $\Sigmab$ induces a (semi)norm: $\nbr{\ub}_\Sigmab = \rbr{\ub^\top \Sigmab \ub}^{1/2}$ for all conformable $\ub$.}
while learning with DA-ERM (\Cref{eq:plain_erm}), $\EE_{\epsb} \sbr{L(\wh\thetab^{\herm}) - L\rbr{\thetab^*}} \geq \frac{\sigma^2 d}{N c_X c_S} + \nbr{\projrg \thetab^*}_{\covaugwt}^2$.  Here, $\projrg \thetab^*$ measures the misspecification in $\thetab^*$ by the augmentations $\wt\Acal\rbr{\Xb}$.
\end{theorem}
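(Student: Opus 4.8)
Both estimators are linear in $\yb$, so I would: (i) write them in closed form; (ii) substitute $\yb=\Xb\thetab^*+\epsb$ and take expectation over $\epsb$, turning the excess risk $\EE_{\epsb}\norm{\wh\thetab-\thetab^*}_{\covtr}^2$ into a bias--variance sum; (iii) bound the two pieces -- directly for DA-ERM, and at a generic $\lambda$ followed by optimization over $\lambda$ for DAC. With $M_\lambda\triangleq\covtr+\lambda\covaug$, the minimizer of \eqref{eq:dac_soft_reg} is $\wh\thetab^{dac}=M_\lambda^{-1}(\tfrac1N\Xb^\top\yb)$; since $\wt\Acal\rbr{\Xb}$ has full column rank, $\wh\thetab^{\herm}=\wt\Acal\rbr{\Xb}^\pinv\wt\Mb\yb$ and $\wt\Acal\rbr{\Xb}^\pinv\wt\Acal\rbr{\Xb}=\Ib_d$. (Note $\covall\succ0$ together with $\covall\preceq c_X\covtr$ forces $\covtr\succ0$, so $M_\lambda\succ0$ and every principal submatrix used below is invertible.) Using $\wt\Acal\rbr{\Xb}=\wt\Mb\Xb+\Deltab$ gives $\wh\thetab^{dac}-\thetab^*=-\lambda M_\lambda^{-1}\covaug\thetab^*+\tfrac1N M_\lambda^{-1}\Xb^\top\epsb$ and $\wh\thetab^{\herm}-\thetab^*=-\wt\Acal\rbr{\Xb}^\pinv\Deltab\thetab^*+\wt\Acal\rbr{\Xb}^\pinv\wt\Mb\epsb$.

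\textbf{DA-ERM, the lower bound.} The bias piece $\tfrac1N\norm{\Xb\wt\Acal\rbr{\Xb}^\pinv\Deltab\thetab^*}_2^2$ equals $\norm{\projrg\thetab^*}_{\covaugwt}^2$ \emph{exactly}: $\Deltab\projrg=\Deltab$, and since $\wt\Mb\Xb$ stacks $1+\alpha$ copies of $\Xb$ one has $\tfrac1{(1+\alpha)N}\norm{\wt\Mb\Xb\ub}_2^2=\tfrac1N\norm{\Xb\ub}_2^2$ for all $\ub$, applied with $\ub=\wt\Acal\rbr{\Xb}^\pinv\Deltab\thetab^*$. For the variance, $\EE[(\wt\Mb\epsb)(\wt\Mb\epsb)^\top]=\sigma^2\wt\Mb\wt\Mb^\top$ and $\wt\Acal\rbr{\Xb}^\top\wt\Mb\wt\Mb^\top\wt\Acal\rbr{\Xb}=(1+\alpha)^2N\covs$, so the noise covariance of $\wh\thetab^{\herm}$ is $\tfrac{\sigma^2}{N}\covall^{-1}\covs\covall^{-1}$ and the variance contribution is $\tfrac{\sigma^2}{N}\tr\rbr{\covtr\covall^{-1}\covs\covall^{-1}}$. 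As $\covall^{-1}\covs\covall^{-1}\succeq0$ and $\covall\preceq c_X\covtr$, this is $\ge\tfrac{\sigma^2}{Nc_X}\tr\rbr{\covs\covall^{-1}}$; and as $\covall^{-1}\succeq0$ and $\covall\preceq c_S\covs$, it is $\ge\tfrac{\sigma^2 d}{Nc_Xc_S}$. Summing the two pieces is the claimed lower bound.

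\textbf{DAC, the upper bound at the optimal $\lambda$.} Here the expected excess risk is $\mathrm{Bias}(\lambda)+\mathrm{Var}(\lambda)$ with $\mathrm{Bias}(\lambda)=\lambda^2\norm{\covtr^{1/2}M_\lambda^{-1}\covaug\thetab^*}_2^2$ and $\mathrm{Var}(\lambda)=\tfrac{\sigma^2}{N}\tr\rbr{(M_\lambda^{-1}\covtr)^2}$, and $\covaug\thetab^*=\covaug\projrg\thetab^*$ makes only $\projrg\thetab^*$ appear. For the bias I would prove the AM--GM-type operator bound $\norm{\covtr^{1/2}(\covtr+\lambda\covaug)^{-1}(\lambda\covaug)^{1/2}}_2\le\tfrac12$: writing this operator as $T=U^\top W$ with $U=M_\lambda^{-1/2}\covtr^{1/2}$, $W=M_\lambda^{-1/2}(\lambda\covaug)^{1/2}$, we get $TT^\top=U^\top(\Ib-P)U$ for $P\triangleq M_\lambda^{-1/2}\covtr M_\lambda^{-1/2}$ (which satisfies $\zero\preceq P\preceq\Ib$ since $\covtr\preceq M_\lambda$), and aligning the singular vectors of $U$ with the eigenvectors of $UU^\top=P$ shows $TT^\top$ has spectrum in $\cbr{p(1-p):p\in\mathrm{spec}(P)}\subseteq[0,\tfrac14]$; hence $\mathrm{Bias}(\lambda)\le\tfrac{\lambda}{4}\norm{\projrg\thetab^*}_{\covaug}^2$. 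For the variance, $(M_\lambda^{-1}\covtr)^2$ is similar to $P^2$, so $\tr\rbr{(M_\lambda^{-1}\covtr)^2}=\tr(P^2)\le\tr(P)=d-\lambda\tr\rbr{M_\lambda^{-1}\covaug}$; choosing an orthonormal basis whose first $\dau$ vectors span $\range(\Deltab^\top)$ and expressing the top-left block of $M_\lambda^{-1}$ via the Schur complement of its lower-right block reduces the remaining inequality $\lambda\tr\rbr{M_\lambda^{-1}\covaug}\ge\dau-\tfrac1\lambda\tr\rbr{\covtr\covaug^\pinv}$ to the scalar fact $\tfrac{s}{s+\lambda}\le\tfrac{s}{\lambda}$ ($s\ge0$); hence $\mathrm{Var}(\lambda)\le\tfrac{\sigma^2(d-\dau)}{N}+\tfrac1\lambda\cdot\tfrac{\sigma^2}{N}\tr\rbr{\covtr\covaug^\pinv}$. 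With $A\triangleq\norm{\projrg\thetab^*}_{\covaug}^2$ and $B\triangleq\tfrac{\sigma^2}{N}\tr\rbr{\covtr\covaug^\pinv}$, minimizing $\tfrac{\lambda}{4}A+\tfrac B\lambda$ at $\lambda=2\sqrt{B/A}$ gives $\sqrt{AB}$, so the total is at most $\tfrac{\sigma^2(d-\dau)}{N}+\norm{\projrg\thetab^*}_{\covaug}\sqrt{\tfrac{\sigma^2}{N}\tr\rbr{\covtr\covaug^\pinv}}$, matching the claim.

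\textbf{Where the difficulty is.} The DA-ERM direction is essentially identity-chasing plus one trace-monotonicity chain through $c_X,c_S$. The real work is the DAC side: the bias bound and the variance bound are immediate when $\covtr$ and $\covaug$ commute, but in general they require the operator-monotonicity / Schur-complement arguments above -- and in the bias bound it is precisely the sharp constant $\tfrac12$ (rather than $1$) that produces the coefficient $1$ (rather than $2$) on the misspecification cross term.
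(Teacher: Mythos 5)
Your proof is correct and follows the same skeleton as the paper's: identical closed forms for both estimators, the same bias--variance split of $\EE_{\epsb}\nbr{\wh\thetab-\thetab^*}_{\covtr}^2$, the same pair of intermediate bounds $\mathrm{Bias}(\lambda)\lesssim\lambda\nbr{\projrg\thetab^*}_{\covaug}^2$ and $\mathrm{Var}(\lambda)-\tfrac{\sigma^2(d-\dau)}{N}\lesssim\tfrac{1}{\lambda}\cdot\tfrac{\sigma^2}{N}\tr\rbr{\covtr\covaug^\pinv}$ followed by optimization over $\lambda$, and an essentially verbatim DA-ERM half (same variance chain through $c_X,c_S$ and same exact bias identity). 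The only divergence is in how the DAC-side bounds are certified: the paper diagonalizes $\covtr^{-1/2}\covaug\covtr^{-1/2}$ and applies per-eigenvalue scalar inequalities (obtaining constants $\lambda/2$ and $1/(2\lambda)$, where you get $\lambda/4$ and $1/\lambda$ --- the optimized product $\sqrt{AB}$ is identical either way), whereas you conjugate by $M_\lambda^{-1/2}$ and argue with operator inequalities ($TT^\top=U^\top(\Ib-P)U$ with spectrum contained in $[0,\tfrac14]$, and $\tr(P^2)\le\tr(P)$). One genuine value-add of your route: the paper asserts $\sum_{i}\gamma_i^{-1}\le\tr\rbr{\covtr\covaug^\pinv}$ without justification, and your Schur-complement reduction of $\lambda\tr\rbr{M_\lambda^{-1}\covaug}\ge\dau-\tfrac{1}{\lambda}\tr\rbr{\covtr\covaug^\pinv}$ supplies exactly the missing argument --- it checks out, since in a basis adapted to $\range\rbr{\Deltab^\top}$ the relevant block of $M_\lambda^{-1}$ is $(\lambda D+S)^{-1}$ with $D$ the restriction of $\covaug$ and $S$ the Schur complement satisfying $\zero\aleq S\aleq\covtr_{11}$, whence $\tr\rbr{(\lambda D+S)^{-1}S}\le\tfrac{1}{\lambda}\tr\rbr{D^{-1}S}\le\tfrac{1}{\lambda}\tr\rbr{\covaug^\pinv\covtr}$.
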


One advantage of DAC regularization derives from its flexibility in choosing regularization parameter $\lambda$. With a proper $\lambda$ (\eg, see \Cref{fig:misspec_lambda}) that matches the misspecification $\constmis^2 = \frac{1}{\rbr{1+\alpha} N} \norm{\rbr{\wt\Acal\rbr{\Xb} - \wt\Mb\Xb} \thetab^*}^2_2 = \nbr{\projrg \thetab^*}^2_{\covaug}$, DAC effectively reduces the function class from $\R^d$ to $\csepp{\thetab}{\nbr{\projrg \thetab}_{\covaug} \le \constmis}$ and improves sample efficiency.

Another advantage of DAC is that, in contrast to DA-ERM, consistency regularization in \Cref{eq:dac_soft_reg} refrains from learning the original labels with misspecified augmentations. This allows DAC to learn from fewer but stronger (potentially more severely misspecified) augmentations (\eg, \Cref{fig:misspec_daug}). Specifically, as $N \to \infty$, the excess risk of DAC with the optimal $\lambda$ converges to zero by learning from unbiased labels $\E_{\epsb}\sbr{\yb} = \Xb\thetab^*$, whereas DA-ERM suffers from a bias term $\nbr{\projrg \thetab^*}_{\covaugwt}^2 > 0$ due to the bias from misspecified augmentations $\E_{\epsb}\sbr{\wt\Mb\yb} \neq \wt\Acal\rbr{\Xb}\thetab^*$. 

\begin{example}\label{example:misspec} 
As in \Cref{example:linear_regression}, we consider a linear regression problem of dimension $d=30$ with $\alpha \ge 1$ misspecified augmentations on $N=50$ $\iid$ training samples drawn from $\Ncal(\b0, \Ib_{d})$.  
We aim to learn $\thetab^* = [\thetab_c^*; \mathbf{0}] \in \RR^{d}$ (where $\thetab_c^* \in \cbr{-1,+1}^{d_c}$, $d_c=10$) under label noise $\sigma=0.1$. 
The misspecified augmentations mimic the effect of color jitter by adding $\iid$ Gaussian noise entry-wisely to the last $\dau$ feature coordinates: $\wt\Acal\rbr{\Xb} = \sbr{\Xb;\Xb'}$ where $\Xb'_{ij} = \Xb_{ij} + \Ncal\rbr{0, 0.1}$ for all $i \in [N]$, $d-\dau+1 \le j \le d$ -- such that $\dau=\rank\rbr{\Deltab}$ with probability $1$. The $(d-\dau+1),\dots,d_c$-th coordinates of $\thetab^*$ are misspecified by the augmentations.

As previously discussed on \Cref{thm:formal_linear_regression_soft}, DAC is more robust than DA-ERM to misspecified augmentations, and therefore can learn with fewer (smaller $\alpha$) and stronger (larger $\dau$) augmentations. In addition, DAC generally achieves better generalization than DA-ERM with limited samples.
\end{example}
\section{Beyond Linear Model}\label{sec:beyond_linear}

In this section, we extend our analysis of DAC regularization to non-linear models, including the two-layer neural networks, and DNN-based classifiers with expansion-based augmentations.  

Further, in addition to the popular in-distribution setting where we consider a unique distribution $\Pgt$ for both training and testing, DAC regularization is also known to improve out-of-distribution generalization for settings like domain adaptation. We defer detailed discussion on such advantage of DAC regularization for linear regression in the domain adaptation setting to \Cref{apx:case_ood}.

\subsection{Two-layer Neural Network} \label{sec:example_2relu}
We first generalize our analysis to an illustrative nonlinear model -- two-layer ReLU network.
With $\Xcal = \R^d$ and $\Ycal = \R$, we consider a ground truth distribution $\Pgt\rsep{y}{\xb}$ induced by $y = \rbr{\xb^{\top} \Bb^*}_+ \wb^* + \eps$. 
For the unknown ground truth function $h^*\rbr{\xb} \triangleq \rbr{\xb^{\top} \Bb^*}_+ \wb^*$, $(\cdot)_+ \triangleq \max(0,\cdot)$ denotes the element-wisely ReLU function; ${\Bb^*} = \bmat{\bb_1^* \dots \bb_k^* \dots \bb_q^*} \in \R^{d \times q}$ consists of $\bb_k^* \in \SSS^{d-1}$ for all $k \in [q]$; and $\eps \sim \Ncal\rbr{0, \sigma^2}$ is $\iid$ Gaussian noise.
In terms of the function class $\Hcal$, for some constant $C_w \geq \norm{\wb^*}_1$, let
\begin{align*}
    \Hcal = \csepp{h(\xb) = (\xb^{\top} \Bb)_+ \wb}
    {\Bb=[\bb_1 \dots \bb_q] \in \R^{d \times q}, \norm{\bb_k}_2=1\ \forall\ j \in [q], \norm{\wb}_1 \leq C_w},
\end{align*}
such that $h^* \in \Hcal$. For regression, we again consider square loss $l(h(\xb), y) = \frac{1}{2}(h(\xb)-y)^2$ and learn with DAC on the first layer: $\rbr{\xb_{i}^{\top} \Bb}_{+} = \rbr{\xb_{i,j}^{\top} \Bb}_{+}$.  

Let $\Deltab \triangleq \wt\Acal(\Xb)-\wt\Mb\Xb$, and $\projnull$ be the projector onto the null space of $\Deltab$. Under mild regularity conditions (\ie, $\alpha N$ being sufficiently large, $\xb$ being subgaussian, and distribution of $\Deltab$ being absolutely continuous, as specified in \Cref{apx:pf_case_2layer_relu}), regression over two-layer ReLU networks with the DAC regularization generalizes as following:
\begin{theorem}[Informal result on two-layer neural network with DAC (formally in \Cref{thm:case_2layer_relu_risk})]\label{thm:case_2layer_relu_risk_informal}
Conditioned on $\Xb$ and $\Deltab$, with $L(h) = \frac{1}{N}\nbr{h(\Xb) - h^*(\Xb)}_2^2$ and $\sqrt{\frac{1}{N}\sum_{i=1}^N \nbr{\projnull \xb_i}^2_2} \le \Cnull$, for any $\delta \in (0,1)$, with probability at least $1-\delta$ over $\epsb$, 
\begin{align*}
    L\rbr{\wh{h}^{dac}} - L\rbr{h^*}
    \lesssim 
    \sigma C_w \Cnull \rbr{\frac{1}{\sqrt{N}} + \sqrt{\frac{\log(1/\delta)}{N}}}.
\end{align*}
\end{theorem} 

Recall $\dau = \rank(\Deltab)$. With a sufficiently large $N$ (as specified in \Cref{apx:pf_case_2layer_relu}), we have $\Cnull \lesssim \sqrt{d-\dau}$ with high probability\footnote{Here we only account for the randomness in $\Xb$ but not that in $\Deltab|\Xb$ which characterizes $\dau$ for conciseness. We refer the readers to \Cref{apx:pf_case_2layer_relu} for a formal tail bound on $\Cnull$.}. Meanwhile, applying DA-ERM directly on the augmented samples achieves no better than $L(\wh h^{\herm}) - L(h^{*}) \lesssim \sigma C_w {\max\rbr{\sqrt{\frac{d}{(\alpha+1) N}}, \sqrt{\frac{d - \dau}{N}}}}$, where the first term corresponds to the generalization bound for a $d$-dimensional regression with $(\alpha+1) N$ samples, and the second term follows as the augmentations fail to perturb a $(d - \dau)$-dimensional subspace (and in which DA-ERM can only rely on the $N$ original samples for learning). In specific, the first term will dominate the $\max$ with limited augmented data (i.e., $\alpha$ being small). 

Comparing the two, we see that DAC tends to be more efficient than DA-ERM, and such advantage is enhanced with strong but limited data augmentations (i.e., large $\dau$ and small $\alpha$). For instance, with $\alpha = 1$ and $\dau = d - 1$, the generalization error of DA-ERM scales as $\sqrt{{d}/{N}}$, while DAC yields a dimension-free $\sqrt{{1}/{N}}$ error.

As a synopsis for the regression cases in \Cref{sec:linear_regression_label_invariant}, \Cref{subsec:finite_lambda}, and \Cref{sec:example_2relu} generally, the effect of DAC regularization can be casted as a dimension reduction by $\dau$ -- dimension of the subspace perturbed by data augmentations where features contain scarce label information.

\subsection{Classification with Expansion-based Augmentations}\label{subsec:expansion_based}

A natural generalization of the dimension reduction viewpoint on DAC regularization in the regression setting is the complexity reduction for general function classes. Here we demonstrate the power of DAC on function class reduction in a DNN-based classification setting.

Concretely, we consider a multi-class classification problem: given a probability space $\Xcal$ with marginal distribution $\Pgt(\xb)$ and $K$ classes $\Ycal=[K]$, let $h^*: \Xcal \to [K]$ be the ground truth classifier, partitioning $\Xcal$ into $K$ disjoint sets $\cbr{\Xcal_k}_{k \in [K]}$ such that $\Pgt\rbr{y|\xb} = \b{1}\cbr{y = h^*\rbr{\xb}} = \b{1}\cbr{\xb \in \Xcal_y}$.
In the classification setting, we concretize \Cref{def:causal_invar_data_aug} with \textit{expansion-based data augmentations} introduced in \cite{wei2021theoretical, cai2021theory}.

\begin{definition}[Expansion-based augmentations (formally in \Cref{def:generalized-causal-invariant-data-augmentation})]
\label{def:generalized-causal-invariant-data-augmentation_informal}
With respect to an augmentation function $\Acal:\Xcal \to 2^{\Xcal}$, let $\nbh(S) \triangleq \cup_{\xb \in S} \cbr{\xb' \in \Xcal ~\big|~ \Acal(\xb) \cap \Acal(\xb') \neq \emptyset}$ be the neighborhood of $S \subseteq \Xcal$.
For any $c>1$, we say that $\Acal$ induces $c$-expansion-based data augmentations if (a) $\cbr{\xb} \subsetneq \Acal(\xb) \subseteq \cbr{\xb' \in \Xcal ~|~ h^*(\xb) = h^*(\xb')}$ for all $\xb \in \Xcal$; and (b) for all $k \in [K]$, given any $S \subseteq \Xcal$ with $\Pgt\rbr{S \cap \Xcal_k} \leq \frac{1}{2}$, $\Pgt\rbr{\nbh\rbr{S} \cap \Xcal_k} \geq \min\cbr{c \cdot \Pgt\rbr{S \cap \Xcal_k},1}$.
\end{definition}

Particularly, \Cref{def:generalized-causal-invariant-data-augmentation_informal}(a) enforces that the ground truth classifier $h^*$ is invariant throughout each neighborhood. Meanwhile, the expansion factor $c$ in \Cref{def:generalized-causal-invariant-data-augmentation_informal}(b) serves as a quantification of augmentation strength -- a larger $c$ implies a stronger augmentation $\Acal$.

We aim to learn $h(\xb) \triangleq \argmax_{k \in [K]}\ f(\xb)_k$ with loss $l_{01}\rbr{h(\xb),y} = \b1\cbr{h(\xb) \neq y}$ from $\Hcal$ induced by the class of $p$-layer fully connected neural networks with maximum width $q$, $\Fcal = \csepp{f: \Xcal \to \R^K}{f = f_{2p-1} \circ \dots \circ f_1,}$ where $f_{2\iota-1}(\xb) = \Wb_{\iota} \xb,\ f_{2\iota}(\epsb)=\varphi(\epsb)$, $\Wb_{\iota} \in \R^{d_{\iota} \times d_{\iota-1}}$ $\forall \iota \in [p]$, $q \triangleq \max_{\iota\in[p]} d_{\iota}$, and $\varphi$ is the activation function. 

Over a general probability space $\Xcal$, DAC with expansion-based augmentations requires stronger conditions than merely consistent classification over $\Acal(\xb_i)$ for all labeled training samples $i \in [N]$. Instead, we enforce a large robust margin $m_{\Acal}(f,\xb^u)$ (adapted from \cite{wei2021theoretical}, see \Cref{apx:generalized_DAC}) over an finite set of unlabeled samples $\Xb^u$ that is independent of $\Xb$ and drawn $\iid$ from $P(\xb)$. Intuitively, $m_{\Acal}(f, \xb^u)$ measures the maximum allowed perturbation in all parameters of $f$ such that predictions remain consistent throughout $\Acal\rbr{\xb^u}$ ($\eg$, $m_{\Acal}(f,\xb^u) > 0$ is equivalent to enforcing consistent classification outputs).
For any $0< \tau \leq \max_{f \in \Fcal}\ \inf_{\xb^u \in \Xcal} m_{\Acal}(f, \xb^u)$, the DAC regularization reduces the function class $\Hcal$ to
\begin{align*}
    \Hred \triangleq \csepp{h \in \Hcal}{m_{\Acal}(f,\xb^u)> \tau \quad \forall\ \xb^u \in \Xb^u}
\end{align*}
such that for $\hgdacfin = \argmin_{h \in \Hred} \frac{1}{N} \sum_{i=1}^N l_{01}\rbr{h(\xb_i),y_i}$, we have the following.

\begin{theorem}[Informal result on classification with DAC (formally in \Cref{thm:generalized-dac-finite-unlabeled})]
\label{thm:generalized-dac-finite-unlabeled_informal}
Let $\mu \triangleq \sup_{h \in \Hred} \PP_{\Pgt} \sbr{\exists\ \xb' \in \Acal(\xb): h(\xb) \neq h(\xb')} \leq \frac{c-1}{4}$. For any $\delta \in (0,1)$, with probability at least $1-\delta$, we have $\mu \leq \wt O \rbr{\frac{\sum_{\iota=1}^p \sqrt{q} \norm{\Wb_{\iota}}_F}{\tau \sqrt{\abbr{\Xb^u}}} + \sqrt{\frac{p \log \abbr{\Xb^u}}{\abbr{\Xb^u}}}}$ such that
\begin{align*}
    L_{01}\rbr{\hgdacfin} - L_{01}\rbr{h^*} 
    \lesssim & \sqrt{\frac{K \log K}{N} + \frac{K \mu}{\min\cbr{c-1,1}}} + \sqrt{\frac{\log(1/\delta)}{N}}.
\end{align*}
\end{theorem}  

In particular, DAC regularization leverages the unlabeled samples $\Xb^u$ and effectively decouples the labeled sample complexity $N = O\rbr{K \log K}$ from complexity of the function class $\Hcal$ (characterized by $\cbr{\Wb_\iota}_{\iota \in [p]}$ and $q$ and encapsulated in $\mu$) via the reduced function class $\Hred$. Notably, \Cref{thm:generalized-dac-finite-unlabeled_informal} is reminiscent of \cite{wei2021theoretical} Theorem 3.6, 3.7, and \cite{cai2021theory} Theorem 2.1, 2.2, 2.3. We unified the existing theories under our function class reduction viewpoint to demonstrate its generality.

\section{Experiments}\label{sec:experiments}

In this section, we empirically verify that training with DAC learns more efficiently than DA-ERM. The dataset is derived from CIFAR-100, where we randomly select 10,000 labeled data as the training set (i.e., 100 labeled samples per class). During the training time, given a training batch, we generate augmentations by RandAugment \citep{cubuk2020randaugment}. We set the number of augmentations per sample to 7 unless otherwise mentioned.

The experiments focus on comparisons of 1) training with consistency regularization (DAC), and 2) empirical risk minimization on the augmented dataset (DA-ERM). We use the same network architecture (a WideResNet-28-2 \citep{zagoruyko2016wide}) and the same training settings (e.g., optimizer, learning rate schedule, etc) for both methods. We defer the detailed experiment settings to \Cref{apdx:exp_detail}. Our test set is the standard CIFAR-100 test set, and we report the average and standard deviation of the testing accuracy of 5 independent runs. The consistency regularizer is implemented as the $l_2$ distance of the model's predictions on the original and augmented samples.

\textbf{Efficacy of DAC regularization.} We first show that the DAC regularization learns more efficiently than DA-ERM. The results are listed in \Cref{table:different_lambda}. In practice, the augmentations almost always alter the label distribution, we therefore follow the discussion in \cref{subsec:finite_lambda} and adopt a finite $\lambda$ (i.e., the multiplicative coefficient before the DAC regularization, see \Cref{eq:dac_soft}). With proper choice of $\lambda$, training with DAC significantly improves over DA-ERM.

\begin{table}[h]
\centering
\begin{tabular}{c|ccccc}
\hline
\multirow{2}{*}{DA-ERM} & \multicolumn{5}{c}{DAC Regularization}                                              \\
                                       & $\lambda=0$ & $\lambda=1$ & $\lambda=5$ & $\lambda=10$ & $\lambda=20$            \\ \hline
$69.40 \pm 0.05$                                  & $62.82 \pm 0.21$       & $68.63 \pm 0.11$       & $\mathbf{70.56 \pm 0.07}$       & $\mathbf{70.52 \pm 0.14}$        & $68.65 \pm 0.27$     \\ \hline
\end{tabular}
\caption{Testing accuracy of DA-ERM and DAC with different $\lambda$'s (regularization coeff.).}
\label{table:different_lambda}
\end{table}

\textbf{DAC regularization helps more with limited augmentations.} Our theoretical results suggest that the DAC regularization learns efficiently with a limited number of augmentations. While keeping the number of labeled samples to be 10,000, we evaluate the performance of the DAC regularization and DA-ERM with different numbers of augmentations. The number of augmentations for each training sample ranges from 1 to 15, and the results are listed in \Cref{table:different_number_of_aug}. The DAC regularization offers a more significant improvement when the number of augmentations is small. This clearly demonstrates that the DAC regularization learns more efficiently than DA-ERM.

\begin{table}[h]
\centering
\begin{tabular}{c|cccc}
\hline
Number of Augmentations & 1 & 3 & 7 & 15 \\ \hline
DA-ERM                     & $67.92 \pm 0.08$  & $69.04 \pm 0.05$  & $69.25 \pm 0.16$  &  $69.30 \pm 0.11$  \\
DAC ($\lambda=10$)       & $\mathbf{70.06 \pm 0.08}$  & $\mathbf{70.77 \pm 0.20}$  & $\mathbf{70.74 \pm 0.11}$  &  $\mathbf{70.31 \pm 0.12}$  \\ \hline
\end{tabular}
\caption{Testing accuracy of DA-ERM and DAC with different numbers of augmentations.}
\label{table:different_number_of_aug}
\end{table}

\begin{wrapfigure}{r}{0.4\columnwidth}
\vspace{-.5em}
	\centering
	\includegraphics[clip, trim={180 30 280 20}, width=\linewidth]{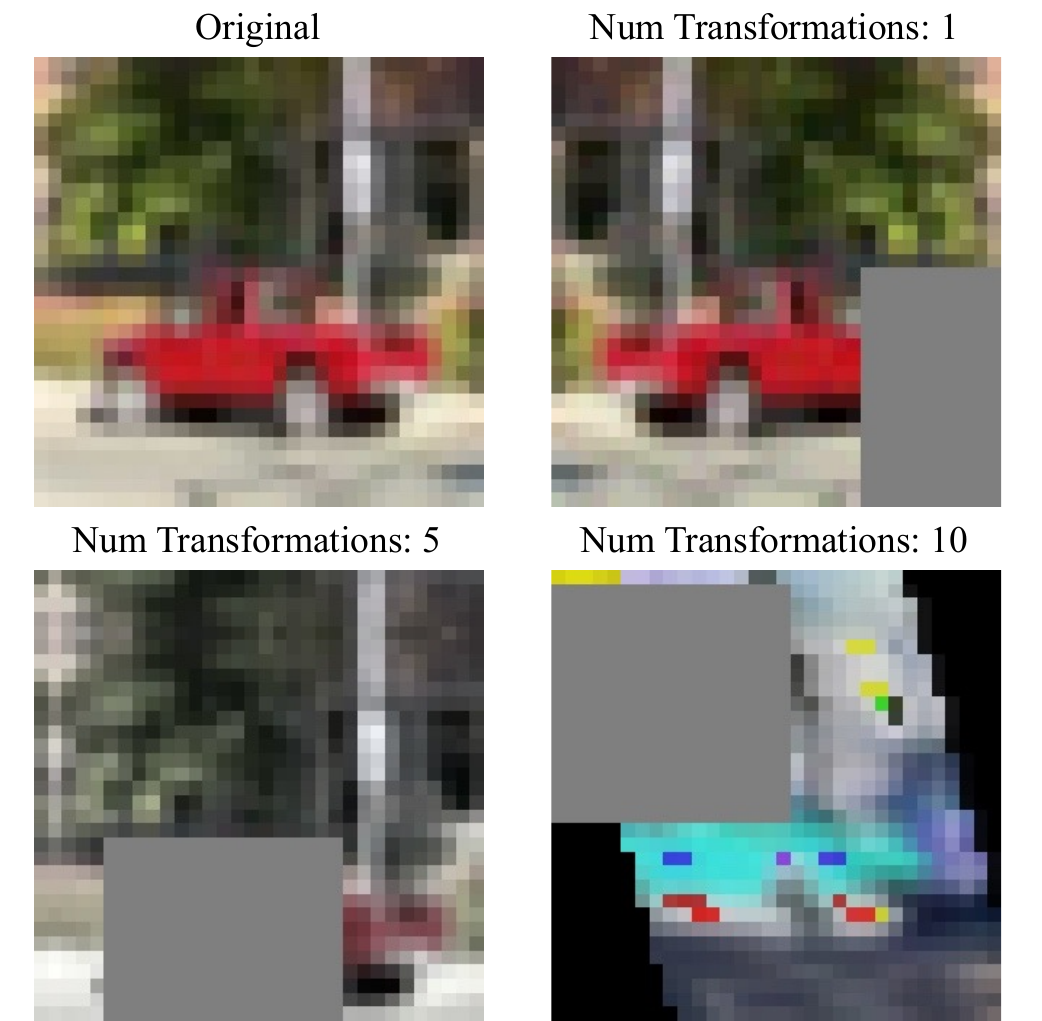}
	\vspace{-1em}
	\caption{Examples of different numbers of transformations.}
\label{fig:augmentation_strength}
\vspace{-1em}
\end{wrapfigure}

\textbf{Proper augmentation brings good performance.} Proper data augmentation is important - it needs to well perturb the input features, but it should also leave the label distribution unchanged. Here we experiment with different numbers of random transformations (e.g., random cropping, flipping, etc.) applied to the training samples sequentially. More transformations perturb the input more, but it is more likely to alter the label distribution. The number of transformations ranges from 1 to 10, and the results are listed in \Cref{table:different_aug_strength}. We see that both DA-ERM and the DAC regularization benefit from a proper augmentation. When too many transformations are applied (e.g., Num Transformations: 10, as shown in \Cref{fig:augmentation_strength}), the DAC regularization gives a worse performance. We believe the reason is that DAC falsely enforces consistency regularization where the labels have changed.
\begin{table}[h]
\centering
\begin{tabular}{c|cccc}
\hline
Num Transformations & 1 & 2 & 5 & 10 \\ \hline
DA-ERM                   & $68.56 \pm 0.12$ & $69.32 \pm 0.11$  & $\mathbf{69.97 \pm 0.14}$  & $\mathbf{69.66 \pm 0.16}$ \\
DAC ($\lambda=10$)     & $\mathbf{70.66 \pm 0.14}$  & $\mathbf{70.65 \pm 0.07}$  & $\mathbf{70.01 \pm 0.10}$  &  $68.95 \pm 0.27$ \\ \hline
\end{tabular}
\caption{Testing accuracy of DA-ERM and DAC with various numbers of transformations.}
\label{table:different_aug_strength}
\end{table}

\begin{wraptable}{hr}{0.65\columnwidth}
% \vspace{-1em}
\centering
\begin{tabular}{c|ccc}
\hline
Number of Unlabeled Data     & 5000 & 10000 & 20000 \\ \hline
FixMatch                     &  67.74    &  69.23     & 70.76          \\
FixMatch + DAC ($\lambda=1$) &  \textbf{71.24}    &   \textbf{72.7}    &  \textbf{74.04}     \\ \hline
\end{tabular}
\caption{DAC helps FixMatch when the unlabeled data is scarce.}
\vspace{-0.5em}
\label{table:combining_with_SSL}
\end{wraptable}

\textbf{Combining with a semi-supervised learning algorithm.} Here we show that the DAC regularization can be easily extended to the semi-supervised learning setting. We take the previously established semi-supervised learning method FixMatch \citep{sohn2020fixmatch} as the baseline and adapt the FixMatch by combining it with the DAC regularization. Specifically, besides using FixMatch to learn from the unlabeled data, we additionally generate augmentations for the labeled samples and apply DAC. In particular, we focus on the data-scarce regime by only keeping 10,000 labeled samples and at most 20,000 unlabeled samples. Results are listed in \Cref{table:combining_with_SSL}. We see that the DAC regularization also improves the performance of FixMatch when the unlabeled samples are scarce. This again demonstrates the efficiency of learning with DAC.
\section{Conclusion}

In this paper, we take a step toward understanding the statistical efficiency of DAC with limited data augmentations. At the core, DAC is statistically more efficient because it reduces problem dimensions by enforcing consistency regularization. 

We demonstrate the benefits of DAC compared to DA-ERM (expanding training set with augmented samples) both theoretically and empirically. Theoretically, we show a strictly smaller generalization error under linear regression, and explicitly characterize the generalization upper bound for two-layer neural networks and expansion-based data augmentations. We further show that DAC better handles the label misspecification caused by strong augmentations. Empirically, we provide apples-to-apples comparisons between DAC and DA-ERM. These together demonstrate the superior efficacy of DAC over DA-ERM.

\paragraph{Acknowledgement}
SY's research is supported by NSF grants 1564000 and 1934932. YD's research is supported by AFOSR MURI FA9550-19-1-0005, NSF DMS 1952735, NSF HDR-1934932, and NSF 2019844.

\bibliography{ref}
\bibliographystyle{abbrvnat}

\newpage

\appendix
\section{Linear Regression Models}\label{apdx:general}

In this section, we present formal proofs for the results on linear regression in the fixed design where the training samples $\rbr{\Xb,\yb}$ and their augmentations $\wt\Acal\rbr{\Xb}$ are considered to be fixed. We discuss two types of augmentations: the label invariant augmentations in \Cref{sec:linear_regression_label_invariant} and the misspecified augmentations in \Cref{subsec:finite_lambda}.

\subsection{Linear Regression with Label Invariant Augmentations}\label{apdx:linear_regression}
 
For fixed $\wt\Acal(\Xb)$, let $\Deltab \triangleq \widetilde \Acal(\Xb) - \widetilde \Mb \Xb$ in this section. We recall that $\dau = \rank\rbr{\Deltab}$ since there is no randomness in $\widetilde \Acal, \Xb$ in fix design setting. Assuming that $\widetilde \Acal(\Xb)$ admits full column rank, we have the following theorem on the excess risk of DAC and ERM:
\begin{theorem}[Formal restatement of \Cref{thm:informal_linear_regression} on linear regression.]\label{thm:formal_linear_regression}
Learning with DAC regularization, we have $\EE\sbr{L(\widehat \thetab^{dac}) - L(\thetab^*)} = \frac{(d - \dau)\sigma^2}{N}$, while learning with ERM directly on the augmented dataset, we have $\EE\sbr{L(\widehat \thetab^{\herm}) - L(\thetab^*)} = \frac{(d - \dau + d')\sigma^2}{N}$. $d'$ is defined as
\begin{align*}
    d' \triangleq \frac{\tr\rbr{\widetilde\Mb^\top\rbr{\projAX - \Pb_{\Scal}}\widetilde\Mb}}{1+\alpha},
\end{align*}
where $d' \in [0, \dau]$ with $\projAX = \widetilde \Acal(\Xb)\rbr{\widetilde \Acal(\Xb)^\top\widetilde \Acal(\Xb)}^{-1}\widetilde \Acal(\Xb)^\top$ and $\Pb_\Scal \in \RR^{(\alpha+1)N \times (\alpha+1)N}$ is the orthogonal projector onto $\Scal \triangleq \cbr{\widetilde \Mb \Xb \thetab~|~\forall \thetab \in \RR^d, s.t. \rbr{\widetilde\Acal(\Xb) - \widetilde \Mb \Xb}\thetab = \b0}$.
\end{theorem}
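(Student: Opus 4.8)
The plan is to put both estimators in closed form, reduce each fixed-design excess risk to the expectation of a quadratic form in the noise $\epsb$, and evaluate the resulting traces; the only step beyond routine linear algebra is identifying $\tr\rbr{\wt\Mb^\top \Pb_\Scal \wt\Mb}$.

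First I would record the structural facts that drive the reduction. Label invariance (\Cref{def:causal_invar_data_aug}) applied to the linear model forces $\wt\Acal(\Xb)\thetab^* = \wt\Mb\Xb\thetab^*$, i.e.\ $\Deltab\thetab^*=\b0$ with $\Deltab \triangleq \wt\Acal(\Xb)-\wt\Mb\Xb$, so $\thetab^* \in \kernel(\Deltab)$; write $\projnull$ for the orthogonal projector onto $\kernel(\Deltab)$, so $\rank(\projnull)=d-\dau$. The DAC constraint $\phi_h(\xb_i)=\phi_h(\xb_{i,j})$ reads $\Deltab\thetab=\b0$, hence \Cref{eq:dac_hard} is least squares on $(\Xb,\yb)$ restricted to $\kernel(\Deltab)$. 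Since $\wt\Acal(\Xb)$ has full column rank and $\wt\Mb^\top\wt\Mb=(1+\alpha)\Ib_N$, for $\thetab\in\kernel(\Deltab)$ we have $\nbr{\wt\Acal(\Xb)\thetab}_2^2=(1+\alpha)\nbr{\Xb\thetab}_2^2$, so $\Xb$ is injective on $\kernel(\Deltab)$; consequently $\widehat\thetab^{dac}$ is the unique vector of $\kernel(\Deltab)$ obeying the reduced normal equations $\projnull\Xb^\top(\Xb\widehat\thetab^{dac}-\yb)=\b0$. DA-ERM is ordinary least squares on $(\wt\Acal(\Xb),\wt\Mb\yb)$, so $\widehat\thetab^{\herm}=\wt\Acal(\Xb)^\pinv\wt\Mb\yb$.

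Next I would reduce to quadratic forms in $\epsb$. For any $\thetab\in\kernel(\Deltab)$ the augmented design collapses to $(1+\alpha)$ copies of $\Xb$, so $L(\thetab)=\tfrac1N\nbr{\Xb(\thetab-\thetab^*)}_2^2$; since $\widehat\thetab^{dac},\thetab^*\in\kernel(\Deltab)$, substituting $\yb=\Xb\thetab^*+\epsb$ into the normal equations yields $\widehat\thetab^{dac}-\thetab^*=M^\pinv\Xb^\top\epsb$ with $M\triangleq\projnull\Xb^\top\Xb\projnull$, and hence
\[
\EE\sbr{L(\widehat\thetab^{dac})-L(\thetab^*)}=\tfrac{1}{N}\EE\sbr{\epsb^\top\Xb M^\pinv\Xb^\top\epsb}=\tfrac{\sigma^2}{N}\tr\rbr{M^\pinv\Xb^\top\Xb}=\tfrac{\sigma^2}{N}\tr\rbr{M^\pinv M}=\tfrac{\sigma^2(d-\dau)}{N},
\]
using $M^\pinv\Xb^\top\Xb M^\pinv=M^\pinv$ and $\tr(M^\pinv M)=\rank(M)=\dim\kernel(\Deltab)$. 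For DA-ERM, $\wt\Acal(\Xb)\thetab^*=\wt\Mb\Xb\thetab^*\in\col(\wt\Acal(\Xb))$ gives $\wt\Acal(\Xb)(\widehat\thetab^{\herm}-\thetab^*)=\projAX\wt\Mb\yb-\wt\Acal(\Xb)\thetab^*=\projAX\wt\Mb\epsb$, so $\EE[L(\widehat\thetab^{\herm})-L(\thetab^*)]=\tfrac{\sigma^2}{(1+\alpha)N}\tr(\wt\Mb^\top\projAX\wt\Mb)$. It then remains to split $\projAX=\Pb_\Scal+(\projAX-\Pb_\Scal)$. Every element $\wt\Mb\Xb\thetab$ of $\Scal$ equals $\wt\Acal(\Xb)\thetab$ for $\thetab\in\kernel(\Deltab)$, so $\Scal\subseteq\col(\wt\Acal(\Xb))$; thus $\projAX-\Pb_\Scal$ is an orthogonal projector of rank $\rank(\wt\Acal(\Xb))-\dim\Scal=d-(d-\dau)=\dau$ (using $\dim\Scal=d-\dau$ by injectivity of $\wt\Mb\Xb$ on $\kernel(\Deltab)$). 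Choosing an orthonormal basis $\Vb\in\RR^{d\times(d-\dau)}$ of $\kernel(\Deltab)$ so that $\Pb_\Scal=\wt\Mb\Xb\Vb(\Vb^\top\Xb^\top\wt\Mb^\top\wt\Mb\Xb\Vb)^{-1}\Vb^\top\Xb^\top\wt\Mb^\top$, substituting $\wt\Mb^\top\wt\Mb=(1+\alpha)\Ib_N$ and using cyclicity of the trace yields $\tfrac{1}{1+\alpha}\tr(\wt\Mb^\top\Pb_\Scal\wt\Mb)=\tr\bigl((\Vb^\top\Xb^\top\Xb\Vb)^{-1}\Vb^\top\Xb^\top\Xb\Vb\bigr)=d-\dau$. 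Hence $\EE[L(\widehat\thetab^{\herm})-L(\thetab^*)]=\tfrac{\sigma^2}{N}(d-\dau+d')$ with $d'=\tfrac{1}{1+\alpha}\tr(\wt\Mb^\top(\projAX-\Pb_\Scal)\wt\Mb)$. Finally, $d'\ge0$ since $\projAX-\Pb_\Scal\succeq\b0$ and $\wt\Mb^\top(\cdot)\wt\Mb\succeq\b0$; and $d'\le\dau$ because $\wt\Mb\wt\Mb^\top\preceq(1+\alpha)\Ib_{(1+\alpha)N}$ (its nonzero eigenvalues all equal $1+\alpha$), which gives $\tr(\wt\Mb^\top(\projAX-\Pb_\Scal)\wt\Mb)=\tr((\projAX-\Pb_\Scal)\wt\Mb\wt\Mb^\top)\le(1+\alpha)\rank(\projAX-\Pb_\Scal)=(1+\alpha)\dau$.

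The main obstacle — indeed the only step that is more than bookkeeping — is the evaluation of $\tr(\wt\Mb^\top\Pb_\Scal\wt\Mb)$: one must carry the \emph{non}-orthonormal generating set $\{\wt\Mb\Xb\vb_k\}$ of $\Scal$ through the projector formula and collapse the result via $\wt\Mb^\top\wt\Mb=(1+\alpha)\Ib_N$. A close second is making sure the pseudoinverse manipulations for $\widehat\thetab^{dac}$ are legitimate; this is exactly where the full-column-rank hypothesis on $\wt\Acal(\Xb)$ enters (it gives injectivity of $\Xb$ on $\kernel(\Deltab)$, hence invertibility of $M$ on $\kernel(\Deltab)$ and of $\Vb^\top\Xb^\top\Xb\Vb$), and I would flag it at each use.
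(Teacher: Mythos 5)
Your proposal is correct and follows essentially the same route as the paper: closed-form estimators, reduction of the excess risk to $\sigma^2$ times a trace of a projector-weighted quadratic form in $\epsb$, and the split $\projAX = \Pb_\Scal + (\projAX - \Pb_\Scal)$. The only differences are presentational — you parametrize the DAC estimator via the restricted normal equations with $M^\pinv$ where the paper writes its fitted values directly as $\Pb_\Scal\wt\Mb\yb$, and you supply the explicit computation of $\tr(\wt\Mb^\top\Pb_\Scal\wt\Mb)=(1+\alpha)(d-\dau)$ and the bound $d'\in[0,\dau]$, which the paper asserts without detail.
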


\begin{proof}
With $L(\thetab) \triangleq \frac{1}{(1+\alpha) N}\norm{\widetilde \Acal(\Xb)\thetab - \widetilde \Acal(\Xb)\thetab^*}_2^2$, the excess risk of ERM on the augmented training set satisfies that:
\begin{align*}
    \EE\sbr{L(\widehat \thetab^{\herm})} &= \frac{1}{(1 + \alpha) N}\EE\sbr{\norm{\widetilde \Acal(\Xb)\widehat\thetab^{\herm} - \widetilde \Acal(\Xb)\thetab^{*}}_{2}^{2}} \\
    &= \frac{1}{(1 + \alpha) N}\EE\sbr{\norm{\widetilde \Acal(\Xb)(\widetilde \Acal(\Xb)^{\top}\widetilde \Acal(\Xb))^{-1}\widetilde \Acal(\Xb)^{\top}(\widetilde \Acal(\Xb)\thetab^{*} + \widetilde \Mb\epsb) - \widetilde \Acal(\Xb)\thetab^{*}}_{2}^{2}} \\
    &=\frac{1}{(1 + \alpha) N}\EE\sbr{\norm{\projAX\widetilde\Acal(\Xb)\thetab^{*} + \projAX \widetilde \Mb\epsb - \widetilde \Acal(\Xb)\thetab^{*}}_{2}^{2}} \\
    &=\frac{1}{(1 + \alpha) N}\EE\sbr{\norm{\projAX \widetilde \Mb\epsb}_{2}^{2}} \\
    &=\frac{1}{(1 + \alpha) N}\EE\sbr{\tr(\epsb^{\top} \widetilde \Mb^{\top}\projAX \widetilde \Mb\epsb)} \\
    &=\frac{\sigma^{2}}{(1 + \alpha) N} \tr\rbr{\wt\Mb^{\top} \projAX \wt\Mb}
.\end{align*}
Let $\Ccal_{\widetilde \Acal(\Xb)}$ and $\Ccal_{\widetilde \Mb}$ denote the column space of $\widetilde \Acal(\Xb)$ and $\widetilde \Mb$, respectively. Notice that $\Scal$ is a subspace of both $\Ccal_{\widetilde \Acal(\Xb)}$ and $\Ccal_{\widetilde \Mb}$. Observing that $\dau = \rank\rbr{\Deltab} = \rank\rbr{\Pb_{\Scal}}$, we have 
\begin{align*}
\EE\sbr{L(\widehat \thetab^{\herm})}
=&\frac{\sigma^{2}}{(1 + \alpha) N} \tr(\widetilde\Mb^{\top}\projAX\widetilde\Mb)
\\
= &\frac{\sigma^{2}}{(1 + \alpha) N} \tr(\widetilde\Mb^{\top}\Pb_\Scal\widetilde\Mb) + \frac{\sigma^{2}}{(1+\alpha)N} \tr(\widetilde\Mb^{\top}(\projAX - \Pb_\Scal)\widetilde\Mb)
\\
= &\frac{\sigma^{2}}{(1 + \alpha) N} \tr(\widetilde\Mb^{\top}\Pb_\Scal\widetilde\Mb) + \frac{\sigma^{2}}{N}\cdot \frac{\tr(\widetilde\Mb^{\top}(\projAX -\Pb_\Scal)\widetilde\Mb)}{1+\alpha}
\end{align*}

By the data augmentation consistency constraint, we are essentially solving the linear regression on the $(d-\dau)$-dimensional space $\cbr{\thetab~|~\Deltab \thetab = 0}$. The rest of proof is identical to standard regression analysis, with features first projected to $\Scal$: 
\begin{align*}
\EE\sbr{L(\widehat \thetab^{dac})} &= \frac{1}{(1 + \alpha)N}\EE\sbr{\norm{\widetilde \Acal(\Xb)\widehat\thetab^{dac} - \widetilde \Acal(\Xb)\thetab^{*}}_{2}^{2}} 
\\
&= \frac{1}{(1 + \alpha) N}\EE\sbr{\norm{\widetilde \Acal(\Xb)(\widetilde \Acal(\Xb)^{\top}\widetilde \Acal(\Xb))^{-1}\widetilde \Acal(\Xb)^{\top}\Pb_\Scal(\widetilde \Acal(\Xb)\thetab^{*} + \widetilde \Mb\epsb) - \widetilde \Acal(\Xb)\thetab^{*}}_{2}^{2}} 
\\
&=\frac{1}{(1 + \alpha) N}\EE\sbr{\norm{ \projAX \Pb_\Scal \widetilde\Acal(\Xb)\thetab^{*} + \projAX \Pb_\Scal \widetilde \Mb\epsb - \widetilde \Acal(\Xb)\thetab^{*}}_{2}^{2}} 
\\
& \quad \rbr{\t{since}\ \wt\Acal(\Xb) \thetab^* \in \Scal,\ \t{and}\ \projAX \Pb_\Scal = \Pb_\Scal\ \t{since}\ \Scal \subseteq \Ccal_{\wt\Acal(\Xb)}}
\\
&=\frac{1}{(1 + \alpha)N}\EE\sbr{\norm{\Pb_\Scal \widetilde \Mb\epsb}_{2}^{2}} \\
&=\frac{\sigma^{2}}{(1 + \alpha) N} \tr(\widetilde\Mb^{\top}\Pb_\Scal\widetilde\Mb) \\
&= \frac{(d - \dau)\sigma^{2}}{N}.
\end{align*}

\end{proof}

\subsection{Linear Regression Beyond Label Invariant Augmentations}\label{apx:finite_lambda}

\begin{proof}[Proof of \Cref{thm:formal_linear_regression_soft}]
With $L(\thetab) \triangleq \frac{1}{N}\norm{\Xb\thetab - \Xb\thetab^*}_2^2 = \nbr{\thetab - \thetab^*}_{\covtr}^2$, we start by partitioning the excess risk into two parts -- the variance from label noise and the bias from feature-label mismatch due to augmentations ($\ie$, $\wt\Acal\rbr{\Xb}\thetab^* \neq \wt\Mb\Xb\thetab^*$): 
\begin{align*}
    \E_{\epsb}\sbr{L\rbr{\thetab} - L\rbr{\thetab^*}} 
    = \E_{\epsb}\sbr{\norm{\thetab - \thetab^*}_{\covtr}^2} 
    = \underbrace{\E_{\epsb}\sbr{\nbr{\thetab - \E_{\epsb}\sbr{\thetab}}_{\covtr}^2}}_{\t{Variance}} + 
    \underbrace{\nbr{\E_{\epsb}\sbr{\thetab} - \thetab^*}_{\covtr}^2}_{\t{Bias}}
.\end{align*}

First, we consider learning with DAC regularization with some finite $0<\lambda<\infty$,
\begin{align*}
    \wh\thetab^{dac} = \argmin_{\thetab \in \R^d} \frac{1}{N} \nbr{\Xb \thetab - \yb}_2^2
    + \frac{\lambda}{\rbr{1+\alpha} N} \norm{\rbr{\wt\Acal\rbr{\Xb} - \wt\Mb\Xb} \thetab}_2^2.
\end{align*}
By setting the gradient of \Cref{eq:dac_soft_reg} with respect to $\thetab$ to $\b0$, with $\yb = \Xb \thetab^* + \epsb$, we have
\begin{align*}
    \wh\thetab^{dac} = \frac{1}{N} \rbr{\covtr + \lambda \covaug}^{\pinv} \Xb^\top \rbr{\Xb \thetab^* + \epsb},
\end{align*}    
Then with $\E_{\epsb}\sbr{\wh\thetab^{dac}} = \rbr{\covtr + \lambda \covaug}^{\pinv} \covtr \thetab^*$, 
\begin{align*}
    \t{Var} = \E_{\epsb}\sbr{\nbr{\frac{1}{N} \rbr{\covtr + \lambda \covaug}^{\pinv} \Xb^\top \epsb}_{\covtr}^2},
    \quad
    \t{Bias} = \nbr{\rbr{\covtr + \lambda \covaug}^{\pinv} \covtr \thetab^* - \thetab^*}_{\covtr}^2.
\end{align*}

For the variance term, we have
\begin{align*}
    \t{Var} 
    = &\frac{\sigma^2}{N} \tr\rbr{\rbr{\covtr + \lambda \covaug}^{\pinv} \covtr \rbr{\covtr + \lambda \covaug}^{\pinv} \covtr}
    \\
    = &\frac{\sigma^2}{N} \tr\rbr{\sbr{\covtr^{1/2} \rbr{\covtr + \lambda \covaug}^{\pinv} \covtr^{1/2}}^2}
    \\
    = &\frac{\sigma^2}{N} \tr\rbr{ \rbr{\Ib_d + \lambda \covtr^{-1/2} \covaug \covtr^{-1/2}}^{-2} }
\end{align*}
For the semi-positive definite matrix $\covtr^{-1/2} \covaug \covtr^{-1/2}$, we introduce the spectral decomposition:
\begin{align*}
    \covtr^{-1/2} \covaug \covtr^{-1/2} = \underset{d \times \dau}{\Qb}\ \underset{\dau \times \dau}{\Gammab}\ \Qb^\top, 
    \quad
    \Gammab = \diag\rbr{\gamma_1,\dots,\gamma_{\dau}},
\end{align*}
where $\Qb$ consists of orthonormal columns and $\gamma_1 \geq \dots \geq \gamma_{\dau} > 0$. Then
\begin{align*}
    \t{Var}
    = \frac{\sigma^2}{N} \tr\rbr{\rbr{\Ib_d - \Qb\Qb^\top} + \Qb \rbr{\Ib_{\dau} + \lambda\Gammab}^{-2} \Qb^\top}
    = \frac{\sigma^2 \rbr{d-\dau}}{N} + \frac{\sigma^2}{N} \sum_{i=1}^{\dau} \frac{1}{\rbr{1+\lambda \gamma_i}^2}.
\end{align*}    

For the bias term, we observe that
\begin{align*}
    \t{Bias} 
    = &\nbr{\rbr{\covtr + \lambda \covaug}^{\pinv} \covtr \thetab^* - \thetab^*}_{\covtr}^2
    \\
    = &\nbr{\rbr{\covtr + \lambda \covaug}^{\pinv} \rbr{-\lambda\covaug} \thetab^*}_{\covtr}^2
    \\
    = &\nbr{ \rbr{\Ib_d + \lambda \covtr^{-\frac{1}{2}} \covaug \covtr^{-\frac{1}{2}}}^{-1} \rbr{\lambda \covtr^{-\frac{1}{2}} \covaug \covtr^{-\frac{1}{2}}} \rbr{\covtr^{1/2} \projrg \thetab^*} }_2^2.
\end{align*}
Then with $\vthetab \triangleq \covtr^{1/2} \projrg \thetab^*$, we have
\begin{align*}
    \t{Bias} 
    = \sum_{i=1}^{\dau} \vartheta_i^2 \rbr{\frac{\lambda \gamma_i}{1 + \lambda \gamma_i}}^2
\end{align*}

To simply the optimization of regularization parameter $\lambda$, we leverage upper bounds of the variance and bias terms:
\begin{align*}
    & \t{Var} - \frac{\sigma^2 \rbr{d-\dau}}{N} 
    \leq \frac{\sigma^2}{N} \sum_{i=1}^{\dau} \frac{1}{\rbr{1+\lambda \gamma_i}^2} 
    \leq \frac{\sigma^2}{2 N \lambda} \sum_{i=1}^{\dau} \frac{1}{\gamma_i} 
    \leq \frac{\sigma^2}{2 N \lambda} \tr\rbr{\covtr \covaug^\pinv},
    \\
    & \t{Bias} 
    = \sum_{i=1}^{\dau} \vartheta_i^2 \rbr{\frac{\lambda \gamma_i}{1 + \lambda \gamma_i}}^2
    \leq \frac{\lambda}{2} \sum_{i=1}^{\dau} \vartheta_i^2 \gamma_i
    = \frac{\lambda}{2} \nbr{\projrg \thetab^*}^2_{\covaug}
.\end{align*}
Then with $\lambda = \sqrt{ \frac{\sigma^2 \tr\rbr{\covtr \covaug^\pinv}}{N \norm{\projrg \thetab^*}_{\covaug}^{2}} }$, we have the generalization bound for $\wh\thetab^{dac}$ in \Cref{thm:formal_linear_regression_soft}.

Second, we consider learning with DA-ERM:
\begin{align*}
    \wh\thetab^{\herm} = \argmin_{\thetab \in \R^d} \frac{1}{\rbr{1+\alpha}N} \nbr{\wt\Acal\rbr{\Xb} \thetab - \wt\Mb \yb}_2^2.
\end{align*} 
With 
\begin{align*}
    \wh\thetab^{\herm} = \frac{1}{(1+\alpha)N} \covall^{-1} \wt\Acal\rbr{\Xb}^\top \wt\Mb \rbr{\Xb \thetab^* + \epsb}
,\end{align*}  
we again partition the excess risk into the variance and bias terms.
For the variance term, with the assumptions $\covall \aleq c_X \covtr$ and $\covall \aleq c_S \covs$, we have
\begin{align*}
    \t{Var} 
    = & \E_{\epsb}\sbr{\nbr{\frac{1}{(1+\alpha)N} \covall^{-1} \wt\Acal\rbr{\Xb}^\top \wt\Mb \epsb}_{\covtr}^2}
    \\
    = & \E_{\epsb}\sbr{\nbr{\frac{1}{N} \covall^{-1} \Sb^\top \epsb}_{\covtr}^2}
    \\
    = & \frac{\sigma^2}{N} \tr\rbr{\covtr \covall^{-1} \covs \covall^{-1}}
    \\
    \geq & \frac{\sigma^2}{N} \tr\rbr{\frac{1}{c_X c_S} \Ib_d}
    = \frac{\sigma^2 d}{N c_X c_S}
.\end{align*}

Additionally, for the bias term, we have
\begin{align*}
    \t{Bias} 
    = & \nbr{\frac{1}{(1+\alpha)N} \covall^{-1} \wt\Acal\rbr{\Xb}^\top \wt\Mb\Xb \thetab^* - \thetab^*}_{\covtr}^2
    \\
    = & \nbr{\rbr{\wt\Acal\rbr{\Xb}^\top \wt\Acal\rbr{\Xb}}^{-1} \wt\Acal\rbr{\Xb}^\top \Deltab \rbr{\projrg \thetab^*}}^2_{\covtr}
    \\
    = & \nbr{\wt\Acal\rbr{\Xb}^\pinv \Deltab \rbr{\projrg \thetab^*}}^2_{\covtr}
    = \nbr{\projrg \thetab^*}_{\covaugwt}^2
.\end{align*}
Combining the variance and bias leads to the generalization bound for $\wh\thetab^{\herm}$ in \Cref{thm:formal_linear_regression_soft}.
\end{proof}

\section{Two-layer Neural Network Regression}\label{apx:pf_case_2layer_relu}

In the two-layer neural network regression setting with $\Xcal = \R^d$ described in \Cref{sec:example_2relu}, let $\Xb \sim P^N(\xb)$ be a set of $N$ $\iid$ samples drawn from the marginal distribution $P(\xb)$ that satifies the following.
\begin{assumption}[Regularity of marginal distribution]
\label{ass:observable_marginal_distribution}
Let $\xb \sim \Pgt(\xb)$ be zero-mean $\E[\xb]=\b{0}$, with covairance matrix $\E[\xb \xb^{\top}]=\Sigmab_{\xb} \succ 0$ whose eigenvalues are bounded by constant factors $\Omega(1)=\sigma_{\min}(\Sigmab_\xb) \le \sigma_{\max}(\Sigmab_\xb) = O(1)$, such that $(\Sigmab_{\xb}^{-1/2} \xb)$ is $\rho^2$-subgaussian
\footnote{A random vector $\vb \in \R^d$ is $\rho^2$-subgaussian if for any unit vector $\ub \in \mathbb{S}^{d-1}$, $\ub^{\top} \vb$ is $\rho^2$-subgaussian, $\E \sbr{\exp(s \cdot \ub^{\top} \vb)} \leq \exp\rbr{s^2 \rho^2/2}$.}.
\end{assumption}    

For the sake of analysis, we isolate the augmented part in  $\wt\Acal(\Xb)$ and denote the set of these augmentations as
\begin{align*}
    \Acal(\Xb) = \sbr{\xb_{1,1}; \cdots; \xb_{N,1}; \cdots; \xb_{1, \alpha}; \cdots; \xb_{N, \alpha}} \in \Xcal^{\alpha N},
\end{align*}
where for each sample $i \in [N]$, $\cbr{\xb_{i,j}}_{j \in [\alpha]}$ is a set of $\alpha$ augmentations generated from $\xb_i$, and $\Mb \in \R^{\alpha N \times N}$ is the vertical stack of $\alpha$ $N \times N$ identity matrices.
Analogous to the notions with respect to $\wt\Acal(\Xb)$ in the linear regression cases in \Cref{apdx:general}, in this section, we denote $\Deltab \triangleq \Acal(\Xb) - \Mb\Xb$ and quantify the augmentation strength as
\begin{align*}
    \dau \triangleq \rank\rbr{\Deltab} = \rank\rbr{\wt\Acal\rbr{\Xb} - \wt\Mb\Xb}
\end{align*}
such that $0 \leq \dau \leq \min\rbr{d, \alpha N}$ can be intuitively interpreted as the number of dimensions in the span of the unlabeled samples, $\row(\Xb)$, perturbed by the augmentations.

Then, to learn the ground truth distribution $\yb = h^*(\Xb) + \epsb = \rbr{\Xb\Bb^*}_+ \wb^* + \epsb$ where $\epsb \sim \Ncal(\b0, \sigma^2 \Ib_N)$, training with the DAC regularization can be formulated explicitly as
\begin{align*}
    \wh{\Bb}^{dac}, \wh{\wb}^{dac} ~=~
    &\underset{\Bb \in \R^{d \times q}, \wb \in \R^q}{\argmin}\ \frac{1}{N} \norm{\yb - \rbr{\Xb\Bb}_+ \wb}_2^2 \\
    &\t{s.t.} \quad
    \Bb = \bmat{\bb_1 \dots \bb_k \dots \bb_q},
    \ \bb_k \in \mathbb{S}^{d-1}\ \forall\ k \in [q],
    \quad
    \norm{\wb}_1 \leq C_w \\
    &\rbr{\Aemp\rbr{\Xb} \Bb}_+ = \rbr{\Mb\Xb\Bb}_+.
\end{align*}
For the resulted minimizer $\wh{h}^{dac}(\xb) \triangleq (\xb^{\top} \wh\Bb^{dac})_+ \wh\wb^{dac}$, we have the following.
\begin{theorem}[Formal restatement of \Cref{thm:case_2layer_relu_risk_informal} on two-layer neural network with DAC]
\label{thm:case_2layer_relu_risk}
Under \Cref{ass:observable_marginal_distribution}, we suppose $\Xb$ and $\Deltab$ satisfy that
(a) $\alpha N \geq 4 \dau$; and
(b) $\Deltab$ admits an absolutely continuous distribution. Then conditioned on $\Xb$ and $\Deltab$, with $L(h) = \frac{1}{N}\nbr{h(\Xb) - h^*(\Xb)}_2^2$ and $\frac{1}{N}\sum_{i=1}^N \nbr{\projnull \xb_i}^2_2 \leq \Cnull^2$ for some $\Cnull>0$, for all $\delta \in (0,1)$, with probability at least $1-\delta$ (over $\epsb$), 
\begin{align*}
    L\rbr{\wh{h}^{dac}} - L\rbr{h^*}
    \lesssim 
    \sigma C_w \Cnull \rbr{\frac{1}{\sqrt{N}} + \sqrt{\frac{\log(1/\delta)}{N}}}.
\end{align*}
\end{theorem}

Moreover, to account for randomness in $\Xb$ and $\Deltab$, we introduce the following notion of augmentation strength.
\begin{definition}[Augmentation strength]\label{def:daug} 
For any $\delta \in [0,1)$, let
\begin{align*}
    \dau(\delta) \triangleq \argmax_{d'}\ \PP_{\Deltab} \sbr{\rank \rbr{\Deltab}<d'} \le \delta.
\end{align*}
\end{definition}
Intuitively, the \textit{augmentation strength} $\dau$ ensures that the feature subspace perturbed by the augmentations in $\Aemp(\Xb)$ has a minimum dimension $\dau(\delta)$ with probability at least $1 - \delta$. A larger $\dau(\delta)$ corresponds to stronger augmentations. For instance, when $\Aemp(\Xb)=\Mb\Xb$ almost surely ($\eg$, when the augmentations are identical copies of the original samples, corresponding to the weakest augmentation -- no augmentations at all), we have $\dau(\delta) = \dau = 0$ for all $\delta<1$. Whereas for randomly generated augmentations, $\dau$ is likely to be larger (i.e., with more dimensions being perturbed). For example in \Cref{example:misspec}, for a given $\dau$, with random augmentations $\Aemp\rbr{\Xb} = \Xb'$ where $\Xb'_{ij} = \Xb_{ij} + \Ncal\rbr{0, 0.1}$ for all $i \in [N]$, $d-\dau+1 \le j \le d$, we have $\rank\rbr{\Deltab}=\dau$ with probability $1$. That is $\dau(\delta)=\dau$ for all $\delta \ge 0$.

Leveraging the notion of augmentation strength in \Cref{def:daug}, we show that the stronger augmentations lead to the better generalization by reducing $\Cnull$ in \Cref{thm:case_2layer_relu_risk}.
\begin{corollary}\label{coro:two_layer_daug_bound}
When $N \gg \rho^4 d$ and $\alpha N \ge d$, for any $\delta \in (0,1)$, with probability at least $1-\delta$ (over $\Xb$ and $\Deltab$),
we have $\Cnull \lesssim \sqrt{d - \dau(\delta)}$.
\end{corollary}

To prove \Cref{thm:case_2layer_relu_risk}, we start by showing that, with sufficient samples ($\alpha N \ge 4 \dau$), consistency of the first layer outputs over the samples implies consistency of those over the population.
\begin{lemma}
\label{lemma:total_invertible_meas_zero}
Under the assumptions in \Cref{thm:case_2layer_relu_risk}, every size-$\dau$ subset of rows in $\Deltab = \Aemp(\Xb) - \Mb \Xb$ is linearly independent almost surely.
\end{lemma}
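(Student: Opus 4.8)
The plan is to reduce the statement to the elementary fact that the zero set of a polynomial that is not identically zero is Lebesgue-null, after first stripping away the (possibly random) low-rank structure of $\Deltab$ by conditioning on its row space. Write $\Deltab = \Aemp(\Xb^u) - \Mb\Xb^u \in \RR^{\alpha N \times d}$ and index its rows by $(i,j) \in [N]\times[\alpha]$, with row $(i,j)$ equal to $\deltab_{i,j}$; recall $\daug = \rank(\Deltab)$. First I would condition on $\{\daug = k\}$ (one of the finitely many values $0 \le k \le \min(d,\alpha N)$) and, within that event, on the row space $V := \row(\Deltab)$, which is then a fixed $k$-dimensional subspace of $\RR^d$ in which every row $\deltab_{i,j}$ lies. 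The role of hypothesis (b) of the theorem (``$\Deltab$ admits an absolutely continuous distribution'') is precisely this: conditionally on $V$, the $\alpha N$ rows of $\Deltab$, viewed as a single point of $V^{\alpha N} \cong \RR^{k\alpha N}$, have a joint distribution that is absolutely continuous with respect to Lebesgue measure on $V^{\alpha N}$. Note that we use none of hypotheses (a) or (c) here -- (a) is automatically implied in the weak form $\alpha N \ge \daug$ that is needed to even speak of size-$\daug$ subsets of rows.

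Next, fix a subset $J \subseteq [N]\times[\alpha]$ with $|J| = k$. Choosing an orthonormal basis of $V$ gives a linear isometry $V \cong \RR^k$, under which the $k$ rows $\{\deltab_{i,j}\}_{(i,j)\in J}$ become the rows of a $k\times k$ matrix $\bar\Deltab_J$; the rows indexed by $J$ are linearly dependent if and only if $\det(\bar\Deltab_J)=0$. Marginalizing the conditional joint density down to the coordinates of the rows in $J$ (Fubini) yields a density on $\RR^{k\times k}$, and since $M \mapsto \det(M)$ is a polynomial that does not vanish identically, the set $\{M \in \RR^{k\times k} : \det M = 0\}$ has Lebesgue measure zero. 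Hence $\PP[\,\{\deltab_{i,j}\}_{(i,j)\in J}\text{ linearly dependent}\mid V\,] = 0$ for each fixed $J$.

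Finally, there are only $\binom{\alpha N}{k}$ such subsets $J$ -- a finite number -- so a union bound gives that, conditionally on $V$, with probability one every size-$k$ subset of rows of $\Deltab$ is linearly independent; integrating over $V$ and summing over the finitely many possible values $k = \daug$ removes the conditioning and yields the claim almost surely. I expect the main obstacle to be not the computation but the bookkeeping around hypothesis (b): one must make precise that ``absolutely continuous distribution'' for the rank-deficient random matrix $\Deltab$ is to be read as a genuine Lebesgue density for the rows once attention is restricted to (equivalently, we condition on) their span $V$, and verify that this disintegration over the random subspace $V$ is legitimate. Everything downstream of that is the routine ``measure-zero event plus finite union bound'' argument and does not need to be spelled out in detail.
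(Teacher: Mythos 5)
Your proposal is correct and follows essentially the same route as the paper's proof: the zero set of the (non-identically-zero) determinant polynomial is Lebesgue-null, absolute continuity transfers this to probability zero for each fixed square submatrix, and a finite union bound over all size-$\daug$ row subsets finishes the argument. The only difference is that you make explicit the conditioning on the row space $V=\row(\Deltab)$ needed to interpret the absolute-continuity hypothesis for the rank-deficient matrix $\Deltab$, a point the paper glosses over by directly asserting that an absolutely continuous random matrix is totally invertible almost surely.
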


\begin{proof}
[Proof of \Cref{lemma:total_invertible_meas_zero}]
Since $\alpha N > \dau$, it is sufficient to show that a random matrix with an absolutely continuous distribution is totally invertible \footnote{A matrix is totally invertible if all its square submatrices are invertible.} almost surely.

It is known that for any dimension $m \in \N$, an $m \times m$ square matrix $\Sb$ is singular if $\det(\Sb) = 0$ where entries of $\Sb$ lie within the roots of the polynomial equation specified by the determinant.
Therefore, the set of all singular matrices in $\R^{m \times m}$ has Lebesgue measure zero, 
\begin{align*}
    \lambda\rbr{\csepp{\Sb \in \R^{m \times m}}{\det(\Sb) = 0}} = 0
.\end{align*}
Then, for an absolutely continuous probability measure $\mu$ with respect to $\lambda$, we also have
\[
    \PP_{\mu}\sbr{\Sb \in \R^{m \times m}\ \t{is singular}} = 
    \mu \rbr{\csepp{\Sb \in \R^{m \times m}}{\det(\Sb) = 0}} = 0.
\]
Since a general matrix $\Rb$ contains only finite number of submatrices, when $\Rb$ is drawn from an absolutely continuous distribution, by the union bound, $\PP\sbr{\Rb\ \t{cotains a singular submatrix}} = 0$. 
That is, $\Rb$ is totally invertible almost surely.
\end{proof}

\begin{lemma}
\label{lemma:2layer-relu-input-consistency-exclude-spurious}
Under the assumptions in \Cref{thm:case_2layer_relu_risk}, the hidden layer in the two-layer ReLU network learns $\kernel\rbr{\Deltab}$, the invariant subspace under data augmentations : with high probability,
\begin{align*}
    \rbr{\xb^{\top} \wh{\Bb}^{dac}}_+ = \rbr{\xb^{\top} \projnull \wh{\Bb}^{dac}}_+
    \quad \forall ~ \xb \in \Xcal.
\end{align*}    
\end{lemma}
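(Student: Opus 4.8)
The plan is to prove the cleaner equivalent claim that every column of $\wh\Bb^{dac}$ lies in $\kernel(\Deltab)$. Writing $\wh\Bb^{dac}=\sbr{\bb_1\cdots\bb_q}$ and decomposing $\bb_k=\projnull\bb_k+\projrg\bb_k$ with $\projrg\bb_k\in\row(\Deltab)$, note that if $\projrg\bb_k=\b0$ for all $k$ then $\bb_k=\projnull\bb_k$ and the identity $\rbr{\xb^\top\wh\Bb^{dac}}_+=\rbr{\xb^\top\projnull\wh\Bb^{dac}}_+$ is immediate for every $\xb\in\R^d$; conversely, evaluating at $\xb\propto\projrg\bb_k$ (which is orthogonal to $\projnull\bb_k$) shows a nonzero $\projrg\bb_k$ would break the identity, so the reduction loses nothing. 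Thus it suffices to show $\projrg\bb_k=\b0$ for each $k\in[q]$, using only that $\wh\Bb^{dac}$ is feasible for the DAC constraint $\rbr{\Aemp(\Xb^u)\Bb}_+=\rbr{\Mb\Xb^u\Bb}_+$ (recall the function class forces $\nbr{\bb_k}_2=1$).

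Fix $k$; write $\dvec_{(i,j)}\triangleq\xb^u_{i,j}-\xb^u_i$ for the $(i,j)$-th row of $\Deltab$, and $G_k\triangleq\csepp{i\in[N]}{(\xb^u_i)^\top\bb_k>0}$. The first step is a sign-structure consequence of feasibility: for $i\in G_k$ we have $\rbr{(\xb^u_i)^\top\bb_k}_+=(\xb^u_i)^\top\bb_k>0$, so the $(i,j)$-entry of the DAC constraint forces $(\xb^u_{i,j})^\top\bb_k=(\xb^u_i)^\top\bb_k$, i.e.\ $\dvec_{(i,j)}^\top\bb_k=0$, for every $j\in[\alpha]$. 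Hence $\bb_k$ is orthogonal to all $\alpha\abbr{G_k}$ distinct rows $\csepp{\dvec_{(i,j)}}{i\in G_k,\,j\in[\alpha]}$ of $\Deltab$; since each such row lies in $\row(\Deltab)$, $\bb_k$ and $\projrg\bb_k$ have the same inner product with them, so $\projrg\bb_k$ is orthogonal to these rows too.

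The second step lower-bounds $\abbr{G_k}$ uniformly over $\SSS^{d-1}$, so the bound applies to the data-dependent learned $\bb_k$. The plan is: (i) for fixed $\bb\in\SSS^{d-1}$, $(\xb^u_i)^\top\bb$ is mean-zero, $\rho^2 C$-sub-Gaussian, and of variance at least $c$ (by \Cref{ass:observable_marginal_distribution}), so a Paley--Zygmund-type anti-concentration bound gives $\PP\sbr{(\xb^u_i)^\top\bb>0}\ge\gamma$ for a constant $\gamma>0$; (ii) a Hoeffding bound concentrates $\abbr{\cbr{i:(\xb^u_i)^\top\bb>0}}$ around $\gamma N$; (iii) a union bound over an $\epsilon$-net of $\SSS^{d-1}$, together with a truncation of $\nbr{\xb^u_i}$ to control the discontinuous indicator off a low-probability tail event, upgrades this to a uniform statement valid for $N$ at least of order $d$ up to logarithmic factors. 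Combining with condition (a) of \Cref{thm:case_2layer_relu_risk} ($\alpha N\ge 4\daug$) then yields $\alpha\abbr{G_k}\ge\daug=\rank(\Deltab)=\dim\rbr{\row(\Deltab)}$ for every $k$.

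The final step is a linear-independence contradiction. Suppose $\projrg\bb_k\ne\b0$ for some $k$. By the previous steps, $\csepp{\dvec_{(i,j)}}{i\in G_k,\,j\in[\alpha]}$ is a set of at least $\daug$ distinct rows of $\Deltab$, so by \Cref{lemma:total_invertible_meas_zero} any $\daug$ of them are linearly independent almost surely and therefore span the $\daug$-dimensional space $\row(\Deltab)$. Since $\projrg\bb_k$ is orthogonal to all of them and lies in $\row(\Deltab)$, we get $\projrg\bb_k=\b0$, a contradiction. Hence $\projrg\bb_k=\b0$ for all $k\in[q]$, proving the lemma. I expect step two to be the main obstacle: since $\bb_k$ is learned from data, the halfspace count must be controlled \emph{uniformly} over $\SSS^{d-1}$, and because $\b1\cbr{\cdot>0}$ is discontinuous the net argument is delicate and is where sub-Gaussianity, the abundant-unlabeled-data requirement, and condition (a) (calibrating the constant against the anti-concentration level $\gamma$) all enter.
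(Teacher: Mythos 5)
Your proposal follows the same overall architecture as the paper's proof: reduce to showing $\projrg\bb_k=\b0$ for each column, use the DAC constraint on the samples with strictly positive pre-activation to conclude that $\projrg\bb_k$ is orthogonal to the corresponding rows of $\Deltab$, lower-bound the number of such samples, and invoke \Cref{lemma:total_invertible_meas_zero} so that these rows span $\row\rbr{\Deltab}$. The one genuine divergence is in the counting step. The paper treats $\bb_k$ as an \emph{arbitrary fixed} unit vector, asserts $\PP\sbr{\xb^\top\bb_k>0}=\tfrac12$ (which implicitly uses symmetry of $\Pgt$ beyond what \Cref{ass:observable_marginal_distribution} states), and applies a pointwise Chernoff bound to get $N_k\ge N/4$, so that condition (a) $\alpha N\ge 4\daug$ closes the argument with no requirement that $N\gtrsim d$. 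You instead insist on a bound on $\abbr{G_k}$ that is \emph{uniform} over $\SSS^{d-1}$, via anti-concentration plus an $\epsilon$-net (or, more cleanly, VC uniform convergence for origin-centered halfspaces); this correctly addresses the fact that the learned $\bb_k$ depends on $\Xb^u$, a subtlety the paper's pointwise Chernoff step glosses over. What your route buys is rigor on that data-dependence; what it costs is an extra sample-size requirement $N\gtrsim d$ (up to logs) not present in the theorem's hypotheses, and a possible recalibration of the constant $4$ in condition (a) against your anti-concentration level $\gamma$ (which, absent symmetry, may be smaller than $\tfrac12$). Both of these you flag explicitly, so the proposal is sound as an outline, with step two being the only part that needs the advertised additional assumptions to be carried out in full.
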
 

\begin{proof}[Proof of \Cref{lemma:2layer-relu-input-consistency-exclude-spurious}]
We will show that for all $\bb_k = \projnull \bb_k + \projrg \bb_k$, $k \in [q]$, $\projrg \bb_k = \b{0}$ with high probability, which then implies that given any $\xb \in \Xcal$, $(\xb^{\top} \bb_k)_+ = (\xb^{\top} \projnull \bb_k)_+$ for all $k \in [q]$.

For any $k \in [q]$ associated with an arbitrary fixed $\bb_k \in \mathbb{S}^{d-1}$, let $\Xb_k \triangleq \Xb_k \projnull + \Xb_k \projrg \in \Xcal^{N_k}$ be the inclusion-wisely maximum row subset of $\Xb$ such that $\Xb_k \bb_k > \b{0}$ element-wisely. 
Meanwhile, we denote $\Aemp(\Xb_k) = \Mb_k \Xb_k \projnull + \Aemp(\Xb_k) \projrg \in \Xcal^{\alpha N_k}$ as the augmentation of $\Xb_k$ where $\Mb_k \in \R^{\alpha N_k \times N_k}$ is the vertical stack of $\alpha$ identity matrices with size $N_k \times N_k$.
Then the DAC constraint implies that $(\Aemp(\Xb_k) - \Mb_k \Xb_k) \projrg \bb_k = \b{0}$.

With \Cref{ass:observable_marginal_distribution}, for a fixed $\bb_k \in \mathbb{S}^{d-1}$, $\PP[\xb^{\top} \bb_k > 0] = \frac{1}{2}$. Then, with the Chernoff bound,
\begin{align*}
    \PP\sbr{N_k < \frac{N}{2} - t} \leq e^{-\frac{2 t^2}{N}},
\end{align*}
which implies that, $N_k \geq \frac{N}{4}$ with high probability.

Leveraging the assumptions in \Cref{thm:case_2layer_relu_risk}, $\alpha N \geq 4 \dau$ implies that $\alpha N_k \geq \dau$. 
Therefore by \Cref{lemma:total_invertible_meas_zero}, $\row\rbr{\Aemp(\Xb_k) - \Mb_k \Xb_k} = \row\rbr{\Deltab}$ with probability $1$.
Thus, $(\Aemp(\Xb_k) - \Mb_k \Xb_k) \projrg \bb_k = \b{0}$ enforces that $\projrg \bb_k = \b{0}$.
\end{proof}

\begin{proof}[Proof of \Cref{thm:case_2layer_relu_risk}]
Conditioned on $\Xb$ and $\Deltab$, we are interested in the excess risk $L(\wh h^{dac}) - L(h^*) = \frac{1}{N} \norm{(\Xb \wh{\Bb}^{dac})_+ \wh{\wb}^{dac} - (\Xb\Bb^*)_+ \wb^*}_2^2$ with randomness on $\epsb$.

We first recall that \Cref{lemma:2layer-relu-input-consistency-exclude-spurious} implies $\wh h^{dac} \in \Hred = \csepp{h(\xb) = \rbr{\xb^\top \Bb}_+ \wb}{\Bb \in \Bcal,~\norm{\wb}_1 \leq C_w}$ where 
\begin{align*}
    \Bcal \triangleq \cbr{\Bb=[\bb_1 \dots \bb_q] ~|~ \norm{\bb_k}=1\ \forall\ k \in [q], (\Xb \Bb)_+ = (\Xb \projnull \Bb)_+ }.
\end{align*}    
Leveraging Equation (21) and (22) in \cite{du2020fewshot},
since $(\Bb^*, \wb^*)$ is feasible under the constraint, by the basic inequality,
\begin{align}
    \label{eq:2layer-relu-basic-ineq}
    \norm{\yb - (\Xb \wh{\Bb}^{dac})_+ \wh{\wb}^{dac}}_2^2 
    \leq
    \norm{\yb - (\Xb \Bb^*)_+ \wb^*}_2^2.
\end{align}
Knowing that $\yb = (\Xb \Bb^*)_+ \wb^* + \epsb$ with $\epsb \sim \Ncal\rbr{\b0, \sigma^2\Ib_N}$, we can rewrite \Cref{eq:2layer-relu-basic-ineq} as
\begin{align*}
    \frac{1}{N} \norm{(\Xb \wh{\Bb}^{dac})_+ \wh{\wb}^{dac} - (\Xb\Bb^*)_+ \wb^*}_2^2
    \le &\frac{2}{N} \epsb^\top \rbr{(\Xb \wh{\Bb}^{dac})_+ \wh{\wb}^{dac} - (\Xb\Bb^*)_+ \wb^*}
    \\
    \le & 4 \sup_{h \in \Hred} \frac{1}{N} \epsb^\top h(\Xb)
\end{align*}    
First, we observe that $\sigma^{-1}\E_{\epsb}\sbr{\sup_{h \in \Hred} \frac{1}{N} \epsb^\top h(\Xb)} = \wh{\fG}_{\Xb}\rbr{\Hred}$ measures the empirical Gaussian width of $\Hred$ over $\Xb$. Moreover, by observing that for any $h \in \Hred$ and $\xb_i \in \Xb$,
\begin{align*}
    &\abbr{h(\xb_i)} 
    \le \nbr{\rbr{\Bb^\top \xb_i}_+}_{\infty} \nbr{\wb}_1 
    \le \max_{k \in [q]} \abbr{\bb_k^\top \projnull \xb_i} \nbr{\wb}_1 
    \le \nbr{\projnull \xb_i}_2 \nbr{\wb}_1,
    \\
    &\frac{1}{N}\nbr{h(\Xb)}^2_2
    = \frac{1}{N} \sum_{i=1}^N \abbr{h(\xb_i)}^2
    \le \nbr{\wb}_1^2 \cdot \frac{1}{N}\sum_{i=1}^N \nbr{\projnull \xb_i}_2^2
    \le C_w^2 \Cnull^2
\end{align*}
and 
\begin{align*}
    &\abbr{\sup_{h \in \Hred} \frac{1}{N} \epsb_1^\top h(\Xb) - \sup_{h \in \Hred} \frac{1}{N} \epsb_2^\top h(\Xb)}
    \\
    \le &\abbr{\sup_{h \in \Hred} \frac{1}{N} h(\Xb)^\top \rbr{\epsb_1 - \epsb_2}}
    \\
    \le &\frac{1}{\sqrt{N}} \nbr{\frac{1}{\sqrt{N}} h(\Xb)}_2 \nbr{\epsb_1 - \epsb_2}_2
    \\
    \le &\frac{C_w \Cnull}{\sqrt{N}} \nbr{\epsb_1 - \epsb_2}_2,
\end{align*}    
we know that the function $\epsb \to \sup_{h \in \Hred} \frac{1}{N} \epsb^\top h(\Xb)$ is $\frac{\Cnull C_w}{\sqrt{N}}$-Lipschitz in $\ell_2$ norm. Therefore, by \cite{wainwright2019} Theorem 2.26, we have that with probability at least $1-\delta$,
\begin{align*}
    \sup_{h \in \Hred} \frac{1}{N} \epsb^\top h(\Xb) \le \sigma \cdot \rbr{\wh{\fG}_{\Xb}\rbr{\Hred} + C_w \Cnull \sqrt{\frac{2 \log(1/\delta)}{N}}}
,\end{align*}
where the empirical Gaussian complexity is upper bounded by
\begin{align*}
    \wh{\fG}_{\Xb}\rbr{\Hred} 
    =
    & \underset{\gb \sim \Ncal(\b{0}, \Ib_N)}{\E} \sbr{
    \underset{\Bb \in \Bcal, \norm{\wb}_1 \leq R}{\sup}\ 
    \frac{1}{N} \gb^{\top} (\Xb\Bb)_+ \wb } \\ 
    \leq
    & \frac{C_w}{N}\ \underset{\gb}{\E} \sbr{
    \underset{\Bb \in \Bcal}{\sup}\ 
    \norm{(\Xb\Bb)_+^{\top} \gb}_{\infty}} \\ 
    =
    & \frac{C_w}{N}\ \underset{\gb}{\E} \sbr{
    \underset{\bb \in \mathbb{S}^{d-1}}{\sup}\ 
    \gb^{\top} \rbr{\Xb \projnull \bb}_+} 
    \quad
    \rbr{\t{\Cref{lemma:tech_gaussian_width_lipschitz}, $(\cdot)_+$ is $1$-Lipschitz} } \\
    \leq 
    & \frac{C_w}{N}\ \underset{\gb}{\E} \sbr{
    \underset{\bb \in \mathbb{S}^{d-1}}{\sup}\ 
    \gb^{\top} \Xb \projnull \bb} \\
    =
    & \frac{C_w}{N}\ \underset{\gb}{\E} 
    \sbr{\norm{\projnull \Xb^{\top} \gb}_2} \\
    \leq
    & \frac{C_w}{N}\ \rbr{\underset{\gb}{\E} \sbr{\norm{\projnull \Xb^{\top} \gb}_2^2}}^{1/2} \\ = 
    & \frac{C_w}{N}\ \sqrt{\tr(\projnull \Xb^{\top} \Xb \projnull)} \\ = 
    & \frac{C_w \Cnull}{\sqrt{N}}.
\end{align*}
\end{proof}

\begin{proof}[Proof of \Cref{coro:two_layer_daug_bound}]
By \Cref{def:daug}, we have with probability at least $1-\delta$ that $\dau = \rank(\projrg) \ge \dau(\delta)$ and $\rank(\projnull) \le d-\dau(\delta)$.
Meanwhile, leveraging \Cref{lemma:sample-population-covariance}, we have that under \Cref{ass:observable_marginal_distribution} and with $N \gg \rho^4 d$, with high probability,
\begin{align*}
    \norm{\frac{1}{N} \projnull \Xb^{\top} \Xb \projnull}_2 
    \leq \norm{\frac{1}{N} \Xb^{\top} \Xb }_2
    \le 1.1 C \lesssim 1.
\end{align*}
Therefore, there exists $\Cnull > 0$ with $\frac{1}{N} \sum_{i=1}^n \norm{\projnull \xb_i}_2^2 \leq \Cnull^2$ such that, with probability at least $1-\delta$,
\begin{align*}
    \Cnull^2 \leq \rbr{d-\dau} \cdot \norm{\frac{1}{N} \projnull \Xb^{\top} \Xb \projnull}_2 \lesssim d-\dau(\delta).
\end{align*}
\end{proof}

\section{Classification with Expansion-based Augmentations}\label{apx:generalized_DAC}

We first recall the multi-class classification problem setup in \Cref{subsec:expansion_based}, while introducing some helpful notions.
For an arbitrary set $\Xcal$, let $\Ycal=[K]$, and $h^*: \Xcal \to [K]$ be the ground truth classifier that partitions $\Xcal$: for each $k \in [K]$, let $\Xcal_k \triangleq \cbr{\xb \in \Xcal ~|~ h^*(\xb)=k}$, with $\Xcal_i \cap \Xcal_j = \emptyset, \forall i \neq j$. 
In addition, for an arbitrary classifier $h: \Xcal \to [K]$, we denote the majority label with respect to $h$ for each class,
\begin{align*}
    \wh{y}_k \triangleq \underset{y \in [K]}{\argmax}\ \PP_{\Pgt} \sbr{h(\xb)=y ~\big|~ \xb \in \Xcal_k} 
    \quad \forall\ k \in [K],
\end{align*}
along with the respective class-wise local and global minority sets,
\begin{align*}
    M_k \triangleq \cbr{\xb \in \Xcal_k ~\big|~ h(\xb) \neq \wh{y}_k} \subsetneq \Xcal_k
    \quad \forall\ k \in [K], \quad
    M \triangleq \bigcup_{k=1}^K M_k.
\end{align*}

Given the marginal distribution $\Pgt\rbr{\xb}$, we introduce the \textit{expansion-based data augmentations} that concretizes \Cref{def:causal_invar_data_aug} in the classification setting:

\begin{definition}[Expansion-based data augmentations, \cite{cai2021theory}]
\label{def:generalized-causal-invariant-data-augmentation}
We call $\Acal:\Xcal \to 2^{\Xcal}$ an augmentation function that induces expansion-based data augmentations if $\Acal$ is class invariant: $\cbr{\xb} \subsetneq \Acal(\xb) \subseteq \cbr{\xb' \in \Xcal ~|~ h^*(\xb) = h^*(\xb')}$ for all $\xb \in \Xcal$.
Let 
\begin{align*}
    \nbh(\xb) \triangleq \cbr{\xb' \in \Xcal ~\big|~ \Acal(\xb) \cap \Acal(\xb') \neq \emptyset},
    \quad
    \nbh(S) \triangleq \cup_{\xb \in S} \nbh(\xb)
\end{align*}
be the neighborhoods of $\xb \in \Xcal$ and $S \subseteq \Xcal$ with respect to $\Acal$.
Then, $\Acal$ satisfies
\begin{enumerate}[nosep,leftmargin=*,label=(\alph*)]
    \item \underline{$(q,\xi)$-constant expansion} if given any $S \subseteq \Xcal$ with $\Pgt\rbr{S} \geq q$ and $\Pgt\rbr{S \cap \Xcal_k} \leq \frac{1}{2}$ for all $k \in [K]$,
    $\Pgt\rbr{\nbh\rbr{S}} \geq \min\cbr{\Pgt\rbr{S},\xi} + \Pgt\rbr{S}$;
    \item \underline{$(a,c)$-multiplicative expansion} if for all $k \in [K]$, given any $S \subseteq \Xcal$ with $\Pgt\rbr{S \cap \Xcal_k} \leq a$,
    $\Pgt\rbr{\nbh\rbr{S} \cap \Xcal_k} \geq \min\cbr{c \cdot \Pgt\rbr{S \cap \Xcal_k},1}$.
\end{enumerate}
\end{definition}

On \Cref{def:generalized-causal-invariant-data-augmentation}, we first point out that the ground truth classifier is invariant throughout the neighborhood: given any $\xb \in \Xcal$, $h^*\rbr{\xb} = h^*\rbr{\xb'}$ for all $\xb' \in \nbh(\xb)$. 
Second, in contrast to the linear regression and two-layer neural network cases where we assume $\Xcal \subseteq R^d$, with the expansion-based data augmentation over a general $\Xcal$, the notion of $\dau$ in \Cref{def:daug} is not well-established. Alternatively, we leverage the concept of constant / multiplicative expansion from \cite{cai2021theory}, and quantify the augmentation strength with parameters $(q,\xi)$ or $(a,c)$. Intuitively, the strength of expansion-based data augmentations is characterized by expansion capability of $\Acal$: for a neighborhood $S \subseteq \Xcal$ of proper size (characterized by $q$ or $a$ under measure $\Pgt$), the stronger augmentation $\Acal$ leads to more expansion in $\nbh(S)$, and therefore larger $\xi$ or $c$. For example in \Cref{def:generalized-causal-invariant-data-augmentation_informal}, we use an expansion-based augmentation function $\Acal$ that satisfies $\rbr{\frac{1}{2}, c}$-multiplicative expansion.

Adapting the existing setting in \cite{wei2021theoretical, cai2021theory}, we concretize the classifier class $\Hcal$ with a function class $\Fcal \subseteq \cbr{f:\Xcal \to \R^K}$ of fully connected neural networks such that $\Hcal = \csepp{h(\xb) \triangleq \argmax_{k \in [K]}\ f(\xb)_k}{f \in \Fcal}$.
To constrain the feasible hypothesis class through the DAC regularization with finite unlabeled samples, we recall the notion of all-layer-margin, $m: \Fcal \times \Xcal \times \Ycal \to \R_{\geq 0}$ (from \cite{wei2021theoretical}) that measures the maximum possible perturbation in all layers of $f$ while maintaining the prediction $y$. 
Precisely, given any $f \in \Fcal$ such that $f\rbr{\xb} = \Wb_p \varphi\rbr{\dots\varphi\rbr{\Wb_1 \xb}\dots}$ for some activation function $\varphi: \R \to \R$ and parameters $\cbr{\Wb_{\iota} \in \R^{d_{\iota} \times d_{\iota-1}}}_{\iota=1}^p$, we can write $f = f_{2p-1} \circ \dots \circ f_1$ where $f_{2\iota-1}(\xb) = \Wb_{\iota} \xb$ for all $\iota \in [p]$ and $f_{2\iota}(\zb)=\varphi(\zb)$ for $\iota \in [p-1]$.
For an arbitrary set of perturbation vectors $\deltab = \rbr{\deltab_1,\dots,\deltab_{2p-1}}$ such that $\deltab_{2\iota-1}, \deltab_{2\iota} \in \R^{d_{\iota}}$ for all $\iota$, let $f(\xb,\deltab)$ be the perturbed neural network defined recursively such that
\begin{align*}
    & \wt{\zb}_1 = f_1\rbr{\xb} + \norm{\xb}_2 \deltab_1, \\
    & \wt{\zb}_{\iota} = f_{\iota}\rbr{\wt{\zb}_{\iota-1}} + \norm{\wt{\zb}_{\iota-1}}_2 \deltab_{\iota}
    \quad \forall\ \iota=2,\dots,2p-1, \\
    & f(\xb,\deltab) = \wt{\zb}_{2p-1}.
\end{align*}   
The all-layer-margin $m(f,\xb,y)$ measures the minimum norm of the perturbation $\deltab$ such that $f(\xb,\deltab)$ fails to provide the classification $y$,
\begin{align}
    \label{eq:def-all-layer-margin}
    m(f,\xb,y) \triangleq
    \underset{\deltab = \rbr{\deltab_1,\dots,\deltab_{2p-1}}}{\min}
    \sqrt{\sum_{\iota=1}^{2p-1} \norm{\deltab_{\iota}}_2^2}
    \quad \t{s.t.} \quad
    \underset{k \in [K]}{\argmax}\ f(\xb,\deltab)_k \neq y.
\end{align}
With the notion of all-layer-margin established, for any $\Acal:\Xcal \to 2^{\Xcal}$ that satisfies conditions in \Cref{def:generalized-causal-invariant-data-augmentation}, the robust margin is defined as
\begin{align*}
    m_{\Acal}(f,\xb) \triangleq \underset{\xb' \in \Acal(\xb)}{\sup}\  
    m\rbr{f, \xb', \argmax_{k \in [K]}\ f(\xb)_k}.
\end{align*}
Intuitively, the robust margin measures the maximum possible perturbation in all-layer weights of $f$ such that predictions on all data augmentations of $\xb$ remain consistent. For instance, $m_{\Acal}(f,\xb) > 0$ is equivalent to enforcing $h(\xb) = h(\xb')$ for all $\xb' \in \Acal\rbr{\xb}$.

To achieve finite sample guarantees, DAC regularization requires stronger consistency conditions than merely consistent classification outputs ($\ie$, $m_{\Acal}(f,\xb) > 0$). Instead, we enforce $m_{\Acal}(f,\xb) > \tau$ for any $0 < \tau < \max_{f \in \Fcal}\ \inf_{\xb \in \Xcal} m_{\Acal}(f, \xb)$\footnote{The upper bound on $\tau$ ensures the proper learning setting, $\ie$, there exists $f \in \Fcal$ such that $m_\Acal\rbr{f,\xb} > \tau$ for all $\xb \in \Xcal$.} over an finite set of unlabeled samples $\Xb^u$ that is independent of $\Xb$ and drawn $\iid$ from $P(\xb)$.
Then, learning the classifier with zero-one loss $l_{01}\rbr{h(\xb),y} = \b1\cbr{h(\xb) \neq y}$ from a class of $p$-layer fully connected neural networks with maximum width $q$,
\begin{align*}
    \Fcal = 
    \csepp{f: \Xcal \to \R^K}
    {f = f_{2p-1} \circ \dots \circ f_1,\ f_{2\iota-1}(\xb) = \Wb_{\iota} \xb,\ f_{2\iota}(\zb)=\varphi(\zb)},
\end{align*}
where $\Wb_{\iota} \in \R^{d_{\iota} \times d_{\iota-1}}$ for all $\iota \in [p]$, and $q \triangleq \max_{\iota=0,\dots,p} d_{\iota}$, we solve
\begin{align}\label{eq:gdac_finite}
    \hgdacfin\ \triangleq\ 
    & \underset{h \in \Hcal}{\argmin}\ 
    \wh{L}^{dac}_{01}(h) = 
    \frac{1}{N} \sum_{i=1}^N \b{1}\cbr{h\rbr{\xb_i} \neq h^*\rbr{\xb_i}} 
    \\
    & \t{s.t.} \quad
    m_{\Acal}(f,\xb^u_i)> \tau \quad \forall\ i \in [\abbr{\Xb^u}] \nonumber 
\end{align}
for any $0 < \tau < \max_{f \in \Fcal}\ \inf_{\xb \in \Xcal} m_{\Acal}(f, \xb)$.
The corresponding reduced function class is given by
\begin{align*}
    \Hred \triangleq 
    \csepp{h \in \Hcal}
    {m_{\Acal}(f,\xb^u_i)> \tau \quad \forall\ i \in [\abbr{\Xb^u}]}.
\end{align*}
Specifically, with $\mu \triangleq \sup_{h \in \Hred} \PP_{\Pgt} \sbr{\exists\ \xb' \in \Acal(\xb): h(\xb) \neq h(\xb')}$, \cite{wei2021theoretical, cai2021theory} demonstrate the following for $\Hred$:
\begin{proposition}
[\cite{wei2021theoretical} Theorem 3.7, \cite{cai2021theory} Proposition 2.2]
\label{prop:non-robust-upper-bound}
For any $\delta \in (0,1)$, with probability at least $1-\delta/2$ (over $\Xb^u$), 
\begin{align*}
    \mu 
    \leq \wt{O} \rbr{\frac{\sum_{\iota=1}^p \sqrt{q} \norm{\Wb_{\iota}}_F}{\tau \sqrt{\abbr{\Xb^u}}} + \sqrt{\frac{\log\rbr{1/\delta} + p \log \abbr{\Xb^u}}{\abbr{\Xb^u}}}}
,\end{align*}
where $\wt{O}\rbr{\cdot}$ hides polylogarithmic factors in $\abbr{\Xb^u}$ and $d$.
\end{proposition}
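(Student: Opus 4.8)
The plan is to reduce the population inconsistency probability $\mu$ to a generalization gap of a margin-based loss and then invoke the all-layer-margin machinery of \cite{wei2021theoretical,cai2021theory}. First I would observe that for any $h \in \gdacopu(\Hcal)$ with underlying network $f \in \Fcal$, the event $\cbr{\exists\ \xb' \in \Acal(\xb): h(\xb) \neq h(\xb')}$ coincides, up to a $\Pgt$-null set of argmax ties, with $\cbr{m_{\Acal}(f,\xb) \le 0}$, since $m_{\Acal}(f,\xb) > 0$ is by construction equivalent to consistent classification over all of $\Acal(\xb)$. Introduce the $\tau$-ramp $r_{\tau}(t) \triangleq \min\cbr{1, \max\cbr{0, 1 - t/\tau}}$, which is $1/\tau$-Lipschitz and sandwiched as $\b1\cbr{t \le 0} \le r_{\tau}(t) \le \b1\cbr{t \le \tau}$. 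Then $\PP_{\Pgt}\sbr{\exists\ \xb': h(\xb) \neq h(\xb')} \le \E_{\Pgt}\sbr{r_{\tau}(m_{\Acal}(f,\xb))}$, while the DAC constraint $m_{\Acal}(f,\xb^u_i) > \tau$ for all $i \in [N]$ forces the empirical $\tau$-ramp loss $\frac1N \sum_{i=1}^N r_{\tau}(m_{\Acal}(f,\xb^u_i))$ to vanish identically over $\gdacopu(\Hcal)$. Taking a supremum over the constrained class, $\mu$ is then bounded by a one-sided uniform deviation of the $\tau$-ramp robust-margin loss over $\Fcal$.

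Next I would run the standard symmetrization and McDiarmid argument already used in the proof of \Cref{prop:informal_general_bounded_loss} (see \cite{bartlett2003,wainwright2019}): with probability at least $1-\delta/2$ over $\Xb^u$, this deviation is at most $2\,\fR_N\rbr{r_{\tau} \circ m_{\Acal} \circ \Fcal} + O\rbr{\sqrt{\log(1/\delta)/N}}$. Since $r_{\tau}$ depends on its argument only up to the threshold $\tau$ and is $1/\tau$-Lipschitz, Ledoux--Talagrand contraction (\cite{ledoux2013}) reduces this to $\tfrac1\tau \fR_N\rbr{\cbr{\xb \mapsto \min\cbr{\tau, m_{\Acal}(f,\xb)}: f \in \Fcal}}$ plus the same $O\rbr{\sqrt{\log(1/\delta)/N}}$ term, so everything now rests on controlling the Rademacher complexity of the clipped robust all-layer-margin class.

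That last step is the crux, and here I would import the covering-number bound of \cite{wei2021theoretical} essentially verbatim. The all-layer margin $m(f,\xb,y)$ is $1$-Lipschitz with respect to a perturbation inserted at any single layer of $f$, and forming $m_{\Acal}(f,\xb) = \sup_{\xb' \in \Acal(\xb)} m(f, \xb', h(\xb))$ only affects the data-dependent side of the argument, not the function class itself; a layer-by-layer $\epsilon$-net over the $p$ weight matrices then shows the clipped complexity is $\wt{O}\rbr{\frac{\sum_{\iota=1}^p \sqrt{q}\,\norm{\Wb_{\iota}}_F}{\sqrt{N}} + \tau\sqrt{\frac{p \log N}{N}}}$, the Frobenius norms recording the per-layer Lipschitz constants, the $\sqrt{q}$ the maximum width, and the additive term coming from discretizing the $p$ nets. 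Dividing by $\tau$ and absorbing the $\sqrt{\log(1/\delta)/N}$ and discretization terms into $\wt{O}(\cdot)$ (which hides polylogarithmic factors) yields $\mu \le \wt{O}\rbr{\frac{\sum_{\iota=1}^p \sqrt{q}\,\norm{\Wb_{\iota}}_F}{\tau\sqrt{N}} + \sqrt{\frac{\log(1/\delta) + p\log N}{N}}}$, the claimed bound. I expect the real technical obstacle to be exactly this robust all-layer-margin covering estimate — the contribution of \cite{wei2021theoretical,cai2021theory} that I would cite rather than reprove; by contrast, the bookkeeping in the first two steps (the ramp sandwich, the vanishing empirical loss, and the symmetrization) is routine, and since $\tau$ is fixed here rather than optimized, no union bound over a grid of margins is needed, which keeps the polylog overhead mild.
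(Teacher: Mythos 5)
The paper does not actually prove this proposition: it is imported verbatim from \cite{wei2021theoretical} (Theorem~3.7) and \cite{cai2021theory} (Proposition~2.2), so there is no internal proof to compare your attempt against. Your sketch is a faithful reconstruction of how those works establish the bound --- sandwiching the inconsistency indicator by a $\tau$-ramp of the robust all-layer margin, noting that the DAC constraint $m_{\Acal}(f,\xb^u_i) > \tau$ makes the empirical ramp loss vanish identically on $\gdacopu(\Hcal)$, running symmetrization plus Lipschitz contraction to peel off the $1/\tau$ factor, and then invoking the covering-number estimate for the clipped all-layer-margin class. Like the paper, you ultimately defer the one genuinely hard step (the $\wt{O}\rbr{\sum_{\iota}\sqrt{q}\,\nbr{\Wb_\iota}_F/\sqrt{N}}$ complexity bound for the robust all-layer margin) to the citation rather than reproving it, so your argument is a proof modulo exactly the same external lemma the paper leans on.

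Two small points worth flagging. First, the paper's displayed definition of $m_{\Acal}(f,\xb)$ takes a supremum over $\xb' \in \Acal(\xb)$, under which $m_{\Acal}(f,\xb)>0$ would \emph{not} imply consistency over all of $\Acal(\xb)$ and your opening equivalence would fail; the surrounding prose (and the cited works) make clear an infimum is intended, so your reading is the correct one, but the reduction does depend on that correction. Second, the per-layer norms $\nbr{\Wb_\iota}_F$ in the final bound depend on the learned $f$, so stating the result uniformly over $\Fcal$ requires the usual union bound over a grid of weight-norm balls; that, rather than a grid over margins $\tau$, is where the $p\log N$ term and the polylogarithmic overhead originate. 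It is absorbed in the covering argument you cite, but your remark that ``no union bound is needed'' slightly understates what the $\wt{O}(\cdot)$ is hiding.
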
   

Leveraging the existing theory above on finite sample guarantee of the maximum possible inconsistency, we have the following.
\begin{theorem}[Formal restatement of \Cref{thm:generalized-dac-finite-unlabeled_informal} on classification with DAC]
\label{thm:generalized-dac-finite-unlabeled}
Learning the classifier with DAC regularization in \Cref{eq:gdac_finite} provides that, for any $\delta \in (0,1)$, with probability at least $1-\delta$,
\begin{align}
    \label{eq:generalization-bound-generalized-dac-finite}
    L_{01}\rbr{\hgdacfin} - L_{01}\rbr{h^*} \leq 
    4 \Rndac + \sqrt{\frac{2 \log(4/\delta)}{N}},
\end{align}
where with $0 < \mu < 1$ defined in \Cref{prop:non-robust-upper-bound}, for any $0 \leq q < \frac{1}{2}$ and $c > 1+4\mu$, 
\begin{enumerate}[label=(\alph*),nosep]
    \item when $\Acal$ satisfies $(q,2\mu)$-constant expansion, $\Rndac \leq \sqrt{\frac{2K \log K}{N} + 2K \max\cbr{q,2\mu}}$;
    \item when $\Acal$ satisfies $(\frac{1}{2},c)$-multiplicative expansion, $\Rndac \leq \sqrt{\frac{2K \log K}{N} + \frac{4 K \mu}{\min\cbr{c-1,1}}}$.
\end{enumerate}
\end{theorem}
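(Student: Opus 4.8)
The plan is to instantiate \Cref{prop:informal_general_bounded_loss} for the $1$-bounded zero-one loss $l_{01}$ and then control the Rademacher complexity of the constrained classifier class via the expansion machinery of \cite{wei2021theoretical,cai2021theory}. Condition throughout on the unlabeled set $\Xb^u$, which fixes $\gdacopu(\Hcal)$. Since $l_{01}$ is $B$-bounded with $B=1$ and no Lipschitz-contraction step is needed, the symmetrization-and-McDiarmid argument in the proof of \Cref{prop:informal_general_bounded_loss} gives, for any $\delta\in(0,1)$, with probability at least $1-\delta/2$ over the labeled sample,
\[
L_{01}\rbr{\hgdacfin} - L_{01}\rbr{h^*} \leq 2 \sup_{h\in\gdacopu(\Hcal)} \abs{L_{01}(h) - \wh L_{01}(h)} \leq 4\,\Rndac + \sqrt{\frac{2\log(4/\delta)}{n}},
\]
where $\Rndac$ denotes the population Rademacher complexity of $l_{01}\circ\gdacopu(\Hcal)$ on $n$ samples, and the first inequality uses $\wh L_{01}\rbr{\hgdacfin}\leq \wh L_{01}(h^*)=0$ (basic inequality, since $h^*\in\gdacopu(\Hcal)$ in the proper-learning regime). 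This is \Cref{eq:generalization-bound-generalized-dac-finite}, so it remains to bound $\Rndac$ in cases (a) and (b).

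\emph{From the margin constraint to a small minority set.} The constraint $m_{\Acal}(f,\xb^u_i)>\tau$ on all $N$ unlabeled points, with \Cref{prop:non-robust-upper-bound}, bounds the worst-case inconsistency $\mu$ with probability at least $1-\delta/2$ over $\Xb^u$. Fix $h\in\gdacopu(\Hcal)$, let $\wh y_k$ be its per-class majority labels, $\bar h$ the induced class-constant classifier, and $M(h)=\cup_{k}\cbr{\xb\in\Xcal_k: h(\xb)\neq\wh y_k}$ its minority set. Class-invariance of $\Acal$ forces the non-robust set $B=\cbr{\xb:\exists\,\xb'\in\Acal(\xb),\ h(\xb)\neq h(\xb')}$ to have $\Pgt(B)\leq\mu$ and the neighborhood to respect the partition, from which a short argument gives $\Pgt\rbr{\nbh(M_k\setminus B)\cap\Xcal_k}\leq\Pgt(M_k)+\mu$. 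Feeding this into $(q,2\mu)$-constant expansion (case (a)) or $(\frac{1}{2},c)$-multiplicative expansion (case (b)) and bootstrapping as in \cite{cai2021theory} produces a uniform bound $\Pgt(M(h))\leq\rho$ over all $h\in\gdacopu(\Hcal)$, with $\rho\leq 2\max\cbr{q,2\mu}$ in case (a) and $\rho\leq \frac{4\mu}{\min\cbr{c-1,1}}$ in case (b); the hypothesis $c>1+4\mu$ is exactly the threshold that keeps the multiplicative recursion non-vacuous.

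\emph{Rademacher complexity of the loss class.} By the previous step, each $g_h\in l_{01}\circ\gdacopu(\Hcal)$ agrees, off a set of $\Pgt$-measure at most $\rho$, with one of the at most $K^K$ class-constant loss functions $\xb\mapsto\b1\cbr{\bar h(\xb)\neq h^*(\xb)}$. Writing $g_h=g_{\bar h}+\eta_h$ with $\eta_h\in\cbr{-1,0,1}$ supported on $M(h)$, the finite family $\cbr{g_{\bar h}}$ contributes $\sqrt{2K\log K/n}$ via Massart's lemma, while the correction $\eta_h$ contributes a term governed by the measure of the minority sets; carrying the two contributions through jointly---rather than splitting by a triangle inequality, which would only give the weaker $\sqrt{2K\log K/n}+\sqrt{2K\rho}$---yields $\Rndac\leq\sqrt{\frac{2K\log K}{n}+2K\rho}$, which is precisely (a)/(b) after substituting $\rho$. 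A union bound over the two $\delta/2$-events finishes the proof.

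\emph{Main obstacle.} The hard part is the middle step: turning the \emph{pointwise} margin constraint on the finite unlabeled sample into a \emph{uniform} bound on $\Pgt(M(h))$ across the whole high-capacity network class---which is exactly what the all-layer-margin estimate of \cite{wei2021theoretical} (\Cref{prop:non-robust-upper-bound}) buys us---and then exploiting the resulting ``almost class-constant'' structure of $l_{01}\circ\gdacopu(\Hcal)$ to get the single-radical Rademacher bound without letting the network complexity leak back into the labeled-sample term.
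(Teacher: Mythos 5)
Your overall architecture matches the paper's: instantiate \Cref{prop:informal_general_bounded_loss} with $B=1$ for the zero-one loss, use the expansion property (via the analogue of Lemma A.1 of \cite{cai2021theory}) to turn the finite-sample margin constraint plus \Cref{prop:non-robust-upper-bound} into a uniform bound on the minority-set mass over $\gdacopu(\Hcal)$, and then bound the Rademacher complexity of the constrained loss class. However, the step you yourself flag as the crux is the one you do not actually carry out. You assert that ``carrying the two contributions through jointly'' in the decomposition $g_h = g_{\bar h} + \eta_h$ yields the single-radical bound $\Rndac \leq \sqrt{\tfrac{2K\log K}{n} + 2K\rho}$, and you correctly observe that a triangle-inequality split only gives the weaker two-radical bound --- but you never say what the joint argument is. The paper's mechanism is a growth-function count: conditioned on $\gdacopu(\Hcal)$, the number of distinct label patterns on the sample $\Xb$ satisfies
\begin{align*}
\E_{\Xb}\sbr{\mathfrak{s}\rbr{\gdacopu(\Hcal),\Xb}} \leq \sum_{r=0}^n \binom{n}{r}\wt\mu^r(1-\wt\mu)^{n-r}\, K^K K^r = K^K\rbr{1-\wt\mu+K\wt\mu}^n \leq K^K e^{Kn\wt\mu},
\end{align*}
because each $h$ is determined by a choice of $K$ majority labels ($K^K$ options) together with arbitrary behavior ($K^r$ options) on the $r$ sample points that fall in the minority set, and $r$ is binomial with success probability at most $\wt\mu$. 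Massart's finite lemma plus concavity of $\sqrt{\log(\cdot)}$ then gives the single radical. Your correction class $\cbr{\eta_h}$ is exactly the ``$K^r$ behaviors on $r$ random points'' object, and without this counting (or an equivalent chaining/covering argument) the bound on its contribution is not established; small support measure alone does not control the Rademacher complexity of an otherwise unrestricted $\cbr{-1,0,1}$-valued class.

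A secondary issue: your minority-set bound carries an extra factor of $2$ ($\rho \leq 2\max\cbr{q,2\mu}$ and $\rho \leq \tfrac{4\mu}{\min\cbr{c-1,1}}$) relative to what the paper's \Cref{lemma:minority-set-upper-bound-expansion-assumption} delivers ($\Pgt(M) \leq \max\cbr{q,2\mu}$ and $\leq \tfrac{2\mu}{\min\cbr{c-1,1}}$ respectively). Substituting your $\rho$ into $\sqrt{\tfrac{2K\log K}{n}+2K\rho}$ produces $4K\max\cbr{q,2\mu}$ and $\tfrac{8K\mu}{\min\cbr{c-1,1}}$ inside the radical, which does not match the constants claimed in parts (a) and (b). You would need to reproduce the lemma's argument (in particular the inequality $\min\cbr{\Pgt(M),2\mu}+\Pgt(M) \leq \Pgt\rbr{\nbh(M)} \leq \Pgt(M)+2\mu$, which forces $\Pgt(M)\leq 2\mu$ in the expanding case) to recover the stated bounds.
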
 

First, to quantify the function class complexity and relate it to the generalization error, we leverage the notion of Rademacher complexity and the associated standard generalization bound.
\begin{lemma}\label{lemma:rademacher_generalization_bound}
Given a fixed function class $\Hred$ (\ie, conditioned on $\Xb^u$) and a $B$-bounded and $C_l$-Lipschitz loss function $l$, let $\wh L(h) = \frac{1}{N} \sum_{i=1}^N l(h(\xb_i),y_i)$, $L(h) = \E\sbr{l(h(\xb_i),y_i)}$, and $\wh h^{dac} = \argmin_{h \in \Hred} \wh L(h)$. Then for any $\delta \in (0,1)$, with probability at least $1-\delta$ over $\Xb$, 
\begin{align*}
    L(\widehat h^{dac}) - L(h^*) \le & 4 C_l \cdot \fR_N\rbr{\Hred} + \sqrt{\frac{2B^2\log(4/\delta)}{N}}.
\end{align*}
\end{lemma}

\begin{proof}[Proof of \Cref{lemma:rademacher_generalization_bound}]
We first decompose the expected excess risk as
\[
L(\wh{h}^{dac}) - L(h^*) = 
\rbr{L(\wh{h}^{dac}) - \wh{L}(\wh{h}^{dac})} + \rbr{\wh{L}(\wh{h}^{dac}) - \wh{L}(h^*)} + \rbr{\wh{L}(h^*) - L(h^*)},
\]
where $\wh{L}(\wh{h}^{dac}) - \wh{L}(h^*) \leq 0$ by the basic inequality. Since both $\wh h^{dac}, h^* \in \Hred$, we then have
\begin{align*}
    L(\wh{h}^{dac}) - L(h^*) 
    \leq 2 \sup_{h \in \Hred}\ \abbr{L(h) - \wh{L}(h)}.
\end{align*}    
Let $g^+(\Xb,\yb) = \sup_{h \in \Hred}: L(h) - \wh{L}(h)$ and 
$g^-(\Xb,\yb) = \sup_{h \in \Hred}: - L(h) + \wh{L}(h)$. Then,
\[
\PP\sbr{L(\wh{h}^{dac}) - L(h^*) \geq \epsilon} \leq
\PP\sbr{g^+(\Xb,\yb) \geq \frac{\eps}{2}} + 
\PP\sbr{g^-(\Xb,\yb) \geq \frac{\eps}{2} }.
\]
We will derive a tail bound for $g^+(\Xb,\yb)$ with the standard inequalities and symmetrization argument \cite{wainwright2019,bartlett2003}, while the analogous statement holds for $g^-(\Xb,\yb)$.

Let $(\Xb^{(1)}, \yb^{(1)})$ be a sample set generated by replacing an arbitrary sample in $(\Xb, \yb)$ with an independent sample $(\xb, y) \sim P(\xb,y)$. Since $l$ is $B$-bounded, we have $\abbr{g^+(\Xb, \yb) - g^+(\Xb^{(1)}, \yb^{(1)})} \leq B/N$. Then, via McDiarmid's inequality \cite{bartlett2003},
\[
    \PP\sbr{g^+(\Xb, \yb) \geq \E[g^+(\Xb, \yb)] + t} 
    \leq \exp\rbr{-\frac{2 N t^2}{B^2}}.
\]
For an arbitrary sample set $\rbr{\Xb,\yb}$, let $\wh{L}_{\rbr{\Xb,\yb}}\rbr{h} = \frac{1}{N} \sum_{i=1}^N l\rbr{h(\xb_i),y_i}$ be the empirical risk of $h$ with respect to $\rbr{\Xb,\yb}$.
Then, by a classical symmetrization argument (e.g., proof of \cite{wainwright2019} Theorem 4.10), we can bound the expectation: for an independent sample set $\rbr{\Xb',\yb'} \in \Xcal^N \times \Ycal^N$ drawn $\iid$ from $\Pgt$, 
\begin{align*}
    \E\sbr{g^+(\Xb, \yb)} 
    = & \E_{(\Xb,\yb)} \sbr{ \sup_{h \in \Hred}\ L(h) - \wh{L}_{(\Xb,\yb)}(h)}
    \\
    = & \E_{(\Xb,\yb)} \sbr{ \sup_{h \in \Hred}\ \E_{(\Xb',\yb')} \sbr{\wh L_{(\Xb',\yb')}(h)} - \wh{L}_{(\Xb,\yb)}(h)}
    \\
    = & \E_{(\Xb,\yb)} \sbr{ \sup_{h \in \Hred}\ \E_{(\Xb',\yb')} \sbr{\wh L_{(\Xb',\yb')}(h) - \wh{L}_{(\Xb,\yb)}(h) ~\middle|~ \rbr{\Xb,\yb}} }
    \\
    \leq & \E_{(\Xb,\yb)} \sbr{ \E_{(\Xb',\yb')} \sbr{ \sup_{h \in \Hred}\ \wh L_{(\Xb',\yb')}(h) - \wh{L}_{(\Xb,\yb)}(h) ~\middle|~ \rbr{\Xb,\yb}} } 
    \\
    & \rbr{\t{Law of iterated conditional expectation}}
    \\
    = & \E_{\rbr{\Xb,\yb,\Xb',\yb'}} \sbr{\sup_{h \in \Hred}\ \wh L_{(\Xb',\yb')}(h) - \wh{L}_{(\Xb,\yb)}(h) }
\end{align*}
Since $\rbr{\Xb,\yb}, \rbr{\Xb',\yb'}$ are drawn $\iid$ from $\Pgt$, we can introduce $\iid$ Rademacher random variables $\rb = \cbr{r_i \in \cbr{-1,+1} ~|~ i \in [N]}$ (independent of both $\rbr{\Xb,\yb}$ and $\rbr{\Xb',\yb'}$) such that
\begin{align*}
    \E\sbr{g^+(\Xb, \yb) } 
    \leq & \E_{\rbr{\Xb,\yb,\Xb',\yb',\rb}} \sbr{\sup_{h \in \Hred}\ \frac{1}{N} \sum_{i=1}^N r_i \cdot \rbr{l\rbr{h\rbr{\xb'_i},y'_i} - l\rbr{h\rbr{\xb_i},y_i}} }
    \\
    \leq & 2\ E_{\rbr{\Xb,\yb,\rb}} \sbr{ \sup_{h \in \Hred}\ \frac{1}{N} \sum_{i=1}^N r_i \cdot l\rbr{h\rbr{\xb_i},y_i} }
    \\
    \leq & 2\ \fR_N\rbr{l \circ \Hred}
\end{align*}
where $l \circ \Hred = \cbr{l(h(\cdot), \cdot): \Xcal \times \Ycal \to \R: h \in \Hred}$ is the loss function class, and 
\begin{align*}
    \fR_N\rbr{\Fcal} 
    \triangleq E_{\rbr{\Xb,\yb,\rb}} \sbr{ \sup_{f \in \Fcal}\ \frac{1}{N} \sum_{i=1}^N r_i \cdot f\rbr{\xb_i,y_i}}
\end{align*}
denotes the Rademacher complexity. 
Analogously, $\E[g^-(\Xb, \yb)] \leq 2\fR_N\rbr{l \circ \Hred}$. 
Therefore, assuming that $\dacop(\Hcal) \subseteq \Hred(\Hcal)$ holds, with probability at least $1-\delta/2$,
\[
L(\wh{h}^{dac}) - L(h^*) \leq 4 \fR_N\rbr{l \circ \Hred} + \sqrt{\frac{2 B^2 \log(4/\delta)}{N}}
\]

Finally, since $l(\cdot, y)$ is $C_l$-Lipschitz for all $y \in \Ycal$, by \cite{ledoux2013} Theorem 4.12, we have $\fR_N\rbr{l \circ \Hred} \leq C_l \cdot \fR_N\rbr{\Hred} $.
\end{proof}

\begin{lemma}
[\cite{cai2021theory}, Lemma A.1]
\label{lemma:minority-set-upper-bound-expansion-assumption}
For any $h \in \Hred$, when $\Pgt$ satisfies
\begin{enumerate}[label=(\alph*), nosep]
    \item $\rbr{q, 2\mu}$-constant expansion with $q < \frac{1}{2}$, $\Pgt\rbr{M} \leq \max\cbr{q, 2\mu}$;
    \item $\rbr{\frac{1}{2},c}$-multiplicative expansion with $c>1+4\mu$, $\Pgt\rbr{M} \leq \max\cbr{\frac{2\mu}{c-1},2\mu}$.
\end{enumerate}
\end{lemma}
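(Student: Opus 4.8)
The plan is to reproduce the expansion argument of \cite{cai2021theory} (their Lemma A.1), whose engine is a purely geometric observation: under a robustly consistent classifier, the class-wise minority set cannot grow under $\nbh(\cdot)$ except by leaking into the small set where consistency fails. First I would fix $h \in \gdacopu(\Hcal)$ and introduce the robust set $R \triangleq \cbr{\xb \in \Xcal : h(\xb) = h(\xb')\ \forall\ \xb' \in \Acal(\xb)}$. By the very definition of $\mu$, any point of $\Xcal \setminus R$ witnesses an inconsistency, so $\Pgt\rbr{\Xcal \setminus R} \le \mu$; since the classes partition $\Xcal$ this refines to $\sum_{k} \Pgt\rbr{\Xcal_k \setminus R} \le \mu$.

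Second, I would establish two geometric facts. (i) The class-invariance of $\Acal$ in \Cref{def:generalized-causal-invariant-data-augmentation} forces $\nbh$ to respect class boundaries: if $\xb \in \Xcal_k$ and $\xb' \in \nbh(\xb)$, picking $\xb'' \in \Acal(\xb) \cap \Acal(\xb')$ gives $h^*(\xb'') = h^*(\xb) = k$ and $h^*(\xb'') = h^*(\xb')$, so $\xb' \in \Xcal_k$; hence $\nbh(S) \subseteq \Xcal_k$ for every $S \subseteq \Xcal_k$. (ii) Robust consistency propagates the prediction across a shared augmentation: if $\xb, \xb' \in R$ with $\xb' \in \nbh(\xb)$ and $\xb'' \in \Acal(\xb)\cap\Acal(\xb')$, then $h(\xb) = h(\xb'') = h(\xb')$. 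Combining (i), (ii) with the definitions of $\wh{y}_k$ and $M_k$ yields the crux: writing $M_k^{R} \triangleq M_k \cap R$, every robust neighbour of a robust minority point is again a robust minority point of the same class, i.e. $\nbh(M_k^{R}) \cap R \subseteq M_k^{R}$, and therefore $\Pgt\rbr{\nbh(M_k^{R})} \le \Pgt\rbr{M_k^{R}} + \Pgt\rbr{\Xcal_k \setminus R}$.

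Third, I would play this ``no expansion inside $R$'' bound against the expansion hypothesis. For the multiplicative case I apply $\rbr{\tfrac12, c}$-expansion class-wise to $S = M_k^{R}$ (after checking $\Pgt\rbr{M_k^R} \le \tfrac12$): the lower bound $\Pgt\rbr{\nbh(M_k^{R})} \ge \min\cbr{c\,\Pgt\rbr{M_k^R}, 1}$ together with the upper bound above forces $(c-1)\,\Pgt\rbr{M_k^R} \lesssim \Pgt\rbr{\Xcal_k\setminus R}$; summing over $k$ and adding back $\Pgt\rbr{M \setminus R} \le \mu$ gives $\Pgt\rbr{M} \le \max\cbr{\tfrac{2\mu}{c-1},\, 2\mu}$. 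For the constant case I run the same comparison globally on $S = M \cap R$ with $\xi = 2\mu$: the guaranteed expansion increment $\min\cbr{\Pgt(S), 2\mu}$ cannot exceed the at-most-$\mu$ leakage into $\Xcal \setminus R$, and a short contradiction that triggers once $\Pgt(S) \ge \max\cbr{q, 2\mu}$ pins $\Pgt\rbr{M} \le \max\cbr{q, 2\mu}$.

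The points I expect to demand the most care are (a) the label-propagation claim (ii)–(iii), where one must use both the shared-augmentation definition of $\nbh$ and robustness at \emph{both} endpoints, and crucially source the expansion from the robust minority $M_k^R$ rather than all of $M$ — a non-robust source point breaks the propagation step; and (b) the bookkeeping that turns the per-class inequalities into the stated constants, including the saturated branch $c\,\Pgt\rbr{M_k^R} > 1$, the threshold $q$, and the class-size side condition $\Pgt(S\cap\Xcal_k) \le \tfrac12$. These edge cases are precisely what the $\max\cbr{\cdot,\cdot}$ forms and the factor $2\mu$ in the statement are tuned to absorb, so the remaining work after the geometric lemma is routine but constant-sensitive.
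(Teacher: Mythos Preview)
Your proposal is correct in spirit and largely recovers the bounds, but it takes a genuinely different route from the paper's proof, and there is one constant-level slip in part~(a).

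The paper works with $M$ itself rather than your robust restriction $M^R = M \cap R$. For (a), it applies the $(q,2\mu)$-constant expansion directly to $M$ (using $\Pgt(M\cap\Xcal_k)\le\tfrac12$), and for the upper bound argues that each $\xb \in \nbh(M)\setminus M$ forces an inconsistency either at $\xb$ or at its source $\xb''\in M$, concluding $\Pgt(\nbh(M)\setminus M)\le 2\mu$ and hence $\Pgt(M)\le 2\mu$ whenever $\Pgt(M)\ge q$. For (b) the paper does no class-wise work at all: it simply cites the reduction (Wei et al., Lemma~B.6) that $(\tfrac12,c)$-multiplicative expansion with $c>1+4\mu$ implies $(\tfrac{2\mu}{c-1},2\mu)$-constant expansion, and reapplies part~(a) with $q=\tfrac{2\mu}{c-1}$.

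Your robust-set decomposition buys a cleaner geometric containment --- $\nbh(M_k^R)\cap R\subseteq M_k^R$ is unambiguous once both endpoints are robust --- and a self-contained direct treatment of (b) that actually yields the slightly sharper $\Pgt(M)\le \tfrac{\mu}{c-1}+\mu$ before relaxing to the stated $\max$. The cost shows up in (a): sourcing expansion from $S=M^R$ and then adding back $\Pgt(M\setminus R)\le\mu$ at the end, your contradiction only pins $\Pgt(M^R)<\max\{q,2\mu\}$, hence $\Pgt(M)<\max\{q,2\mu\}+\mu$, which overshoots the stated bound by $\mu$ in the regime $q>\mu$. To hit the exact constant in (a) you need to apply expansion to $M$ itself, as the paper does; your propagation observation (ii) is exactly the dichotomy the paper uses to control $\Pgt(\nbh(M)\setminus M)$, so the fix is just to change the set you feed into the expansion hypothesis, not the geometric lemma.
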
 

\begin{proof}
[Proof of \Cref{lemma:minority-set-upper-bound-expansion-assumption}]
We start with the proof for \Cref{lemma:minority-set-upper-bound-expansion-assumption} (a).
By definition of $M_k$ and $\wh{y}_k$, we know that $M_k = M \cap \Xcal_k \leq \frac{1}{2}$. Therefore, for any $0 < q < \frac{1}{2}$, one of the following two cases holds:
\begin{enumerate}[label=(\roman*), nosep]
    \item $\Pgt\rbr{M} < q$;
    \item $\Pgt\rbr{M} \geq q$.
    Since $\Pgt\rbr{M \cap \Xcal_k} < \frac{1}{2}$ for all $k \in [K]$ holds by construction, with the $\rbr{q, 2\mu}$-constant expansion, $\Pgt\rbr{\nbh\rbr{M}} \geq \min\cbr{\Pgt\rbr{M}, 2\mu} + \Pgt\rbr{M}$.
    
    Meanwhile, since the ground truth classifier $h^*$ is invariant throughout the neighborhoods, $\nbh\rbr{M_k} \cap \nbh\rbr{M_{k'}} = \emptyset$ for $k \neq k'$, and therefore $\nbh\rbr{M} \backslash M = \bigcup_{k=1}^K \nbh\rbr{M_k} \backslash M_k$ with each $\nbh\rbr{M_k} \backslash M_k$ disjoint.
    Then, we observe that for each $\xb \in \nbh\rbr{M} \backslash M$, here exists some $k = h^*\rbr{\xb}$ such that $\xb \in \nbh\rbr{M_k} \backslash M_k$.
    $\xb \in \Xcal_k \backslash M_k$ implies that $h\rbr{\xb} = \wh{y}_k$, while
    $\xb \in \nbh\rbr{M_k}$ suggests that there exists some $\xb' \in \Acal\rbr{\xb} \cap \Acal\rbr{\xb''}$ where $\xb'' \in M_k$ such that either 
    $h\rbr{\xb'} = \wh{y}_k$ and $h\rbr{\xb'} \neq h\rbr{\xb''}$ for $\xb' \in \Acal\rbr{\xb''}$, 
    or 
    $h\rbr{\xb'} \neq \wh{y}_k$ and $h\rbr{\xb'} \neq h\rbr{\xb}$ for $\xb' \in \Acal\rbr{\xb}$. Therefore, we have 
    \[
    \Pgt\rbr{\nbh\rbr{M} \backslash M} \leq 
    2 \PP_{\Pgt}\sbr{\exists\ \xb' \in \Acal(\xb)\ \t{s.t.}\ h(\xb) \neq h(\xb')} \leq 2\mu.
    \]
    Moreover, since $\Pgt\rbr{\nbh\rbr{M}} - \Pgt\rbr{M} \leq \Pgt\rbr{\nbh\rbr{M} \backslash M} \leq 2 \mu$, we know that 
    \[
    \min\cbr{\Pgt\rbr{M}, 2\mu} + \Pgt\rbr{M} \leq 
    \Pgt\rbr{\nbh\rbr{M}} \leq 
    \Pgt\rbr{M} + 2 \mu. 
    \]
    That is, $\Pgt\rbr{M} \leq 2 \mu$.
\end{enumerate}
Overall, we have $\Pgt\rbr{M} \leq \max\cbr{q, 2\mu}$.

To show \Cref{lemma:minority-set-upper-bound-expansion-assumption} (b), we recall from \cite{wei2021theoretical} Lemma B.6 that for any $c>1+4\mu$, $\rbr{\frac{1}{2},c}$-multiplicative expansion implies $\rbr{\frac{2\mu}{c-1}, 2\mu}$-constant expansion. 
Then leveraging the proof for \Cref{lemma:minority-set-upper-bound-expansion-assumption} (a), with $q = \frac{2\mu}{c-1}$, we have $\Pgt\rbr{M} \leq \max\cbr{\frac{2\mu}{c-1}, 2\mu}$.
\end{proof}

\begin{proof}
[Proof of \Cref{thm:generalized-dac-finite-unlabeled}]
To show \Cref{eq:generalization-bound-generalized-dac-finite}, we leverage \Cref{lemma:rademacher_generalization_bound} and observe that $B = 1$ with the zero-one loss. Therefore, conditioned on $\Hred$ (which depends only on $\Xb^u$ but not on $\Xb$), for any $\delta \in (0,1)$, with probability at least $1-\delta/2$,
\begin{align*}
    L_{01}\rbr{\hgdacfin} - L_{01}\rbr{h^*} \leq 
    4 \fR_N\rbr{l_{01} \circ \Hred} + \sqrt{\frac{2 \log(4/\delta)}{N}}
.\end{align*}
For the upper bounds of the Rademacher complexity, let $\wt{\mu} \triangleq \sup_{h \in \Hred} \Pgt\rbr{M}$ where $M$ denotes the global minority set with respect to $h \in \Hred$.
\Cref{lemma:minority-set-upper-bound-expansion-assumption} suggests that
\begin{enumerate}[label=(\alph*), nosep]
    \item when $\Pgt$ satisfies $(q,2\mu)$-constant expansion for some $q < \frac{1}{2}$, $\wt{\mu} \leq \max\cbr{q,2\mu}$; while
    \item when $\Pgt$ satisfies $(\frac{1}{2},c)$-multiplicative expansion for some $c > 1+4\mu$,
    $\wt{\mu} \leq \frac{2\mu}{\min\cbr{c-1,1}}$.
\end{enumerate}
Then, it is sufficient to show that, conditioned on $\Hred$, 
\begin{align}\label{eq:pf_Rndac}
    \fR_N\rbr{l_{01} \circ \Hred} \leq \sqrt{\frac{2K \log K}{N} + 2K\wt{\mu}}. 
\end{align}    

To show this, we first consider a fixed set of $n$ observations in $\Xcal$, $\Xb = \sbr{\xb_1,\dots,\xb_N}^{\top} \in \Xcal^N$. 
Let the number of distinct behaviors over $\Xb$ in $\Hred$ be
\begin{align*}
    \mathfrak{s}\rbr{\Hred, \Xb} \triangleq
    \abs{\cbr{\sbr{h\rbr{\xb_1},\dots,h\rbr{\xb_N}} ~\big|~ h \in \Hred}}.
\end{align*}
Then, by the Massart's finite lemma, the empirical rademacher complexity with respect to $\Xb$ is upper bounded by
\begin{align*}
    \wh{\fR}_{\Xb}\rbr{l_{01} \circ \Hred} \leq \sqrt{\frac{2 \log \mathfrak{s}\rbr{\Hred, \Xb}}{N}}.
\end{align*}
By the concavity of $\sqrt{\log\rbr{\cdot}}$, we know that, 
\begin{align}
\label{eq:rademacher-upper-from-shatter}
    \fR_N\rbr{l_{01} \circ \Hred} = 
    & \E_{\Xb} \sbr{\wh{\fR}_{\Xb}\rbr{l_{01} \circ \Hred}} \leq 
    \E_{\Xb} \sbr{\sqrt{\frac{2 \log \mathfrak{s}\rbr{\Hred, \Xb}}{N}}} 
    \nonumber \\ \leq & 
    \sqrt{\frac{2 \log \E_{\Xb} \sbr{\mathfrak{s}\rbr{\Hred, \Xb}}}{N}}.
\end{align}    
Since $\Pgt\rbr{M} \leq \wt{\mu} \leq \frac{1}{2}$ for all $h \in \Hred$, we have that, conditioned on $\Hred$,
\begin{align*}
    \E_{\Xb} \sbr{\mathfrak{s}\rbr{\Hred, \Xb}} \leq & 
    \sum_{r=0}^N \binom{N}{r} \wt{\mu}^r \rbr{1-\wt{\mu}}^{N-r} \cdot K^K \cdot K^r
    \\ \leq &
    K^K \sum_{r=0}^N \binom{N}{r} \rbr{\wt{\mu}K}^r \rbr{1-\wt{\mu}}^{N-r}
    \\ = &
    K^K \rbr{1-\wt{\mu} + K\wt{\mu}}^N
    \\ \leq &
    K^K \cdot e^{K N \wt{\mu}}.
\end{align*}    
Plugging this into \Cref{eq:rademacher-upper-from-shatter} yields \Cref{eq:pf_Rndac}. Finally, the randomness in $\Hred$ is quantified by $\wt\mu, \mu$, and upper bounded by \Cref{prop:non-robust-upper-bound}. 
\end{proof}

\section{Supplementary Application: Domain Adaptation}\label{apx:case_ood}

As a supplementary example, we demonstrate the possible failure of DA-ERM, and alternatively how DAC regularization can serve as a remedy. Concretely, we consider an illustrative linear regression problem in the domain adaptation setting: with training samples drawn from a source distribution $P^s$ and generalization (in terms of excess risk) evaluated over a related but different target distribution $P^t$. 
With distinct $\E_{P^s}\sbr{y|\xb}$ and $\E_{P^t}\sbr{y|\xb}$, we assume the existence of an unknown but unique inclusionwisely maximal invariant feature subspace $\Xcal_r \subset \Xcal$ such that $P^s\sbr{y|\xb \in \Xcal_r} = P^t\sbr{y|\xb \in \Xcal_r}$, we aim to demonstrate the advantage of the DAC regularization over the ERM on augmented training set, with a provable separation in the respective excess risks.

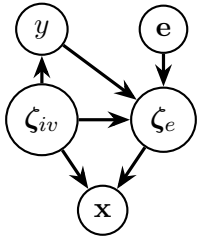
\begin{figure}[!ht]
    \begin{center}
    \begin{tikzpicture}
    \begin{scope}[every node/.style={circle,thick,draw}]
        \node (x) at (0,0) {$\xb$};
        \node (xc) at (-0.8,1.2) {$\zetab_{iv}$};
        \node (xe) at (0.8,1.2) {$\zetab_e$};
        \node (e) at (0.8,2.4) {$\eb$};
        \node (y) at (-0.8,2.4) {$y$};
    \end{scope}
    
    \begin{scope}[>={Stealth[black]},
                  every node/.style={fill=white,circle},
                  every edge/.style={draw=black,very thick}]
        \path [->] (xc) edge (y);
        \path [->] (y) edge (xe);
        \path [->] (xc) edge (xe);
        \path [->] (e) edge (xe);
        \path [->] (xc) edge (x);
        \path [->] (xe) edge (x);
    \end{scope}
    \end{tikzpicture}
    \end{center}
    \caption{Causal graph shared by $P^s$ and $P^t$.}
    \label{fig:data-distribution-causal-graph}
\end{figure}

\paragraph{Source and target distributions.}
Formally, the source and target distributions are concretized with the causal graph in \Cref{fig:data-distribution-causal-graph}. For both $P^s$ and $P^t$, the observable feature $\xb$ is described via a linear generative model in terms of two latent features, the `invariant' feature $\zetab_{iv} \in \R^{d_{iv}}$ and the `environmental' feature $\zetab_e \in \R^{d_e}$:
\begin{align*}
    \xb = g(\zetab_{iv}, \zetab_e) \triangleq \Sb \bmat{\zetab_{iv}; \zetab_e} = \Sb_{iv} \zetab_{iv}  + \Sb_e \zetab_e,
\end{align*}
where $\Sb = \bmat{\Sb_{iv},\Sb_e} \in \R^{d \times (d_{iv} + d_e)}$ ($d_{iv}+d_e \leq d$) consists of orthonormal columns.
Let the label $y$ depends only on the invariant feature $\zetab_{iv}$ for both domains,
\begin{align*}
    y = \rbr{\thetab^{*}}^{\top} \xb + z 
    = \rbr{\thetab^{*}}^{\top} \Sb_{iv} \zetab_{iv} + z,
    \quad 
    z \sim \Ncal\rbr{0, \sigma^2}, 
    \quad
    z \perp \zetab_{iv},
\end{align*}
for some $\thetab^* \in \range\rbr{\Sb_{iv}}$ such that $P^s\sbr{y|\zetab_{iv}} = P^t\sbr{y|\zetab_{iv}}$, while the environmental feature $\zetab_e$ is conditioned on $y$, $\zetab_{iv}$, (along with the Gaussian noise $z$), and varies across different domains $\eb$ with $\E_{P^s}\sbr{y|\xb} \neq \E_{P^t}\sbr{y|\xb}$. In other words, with the square loss $l(h(\xb), y) = \frac{1}{2}(h(\xb)-y)^2$, the optimal hypotheses that minimize the expected excess risk over the source and target distributions are distinct. Therefore, learning via the ERM with training samples from $P^s$ can overfit the source distribution, in which scenario identifying the invariant feature subspace $\range\rbr{\Sb_{iv}}$ becomes indispensable for achieving good generalization in the target domain.

For $P^s$ and $P^t$, we assume the following regularity conditions: 
\begin{assumption}[Regularity conditions for $P^s$ and $P^t$]
\label{ass:case_ood_distribution}
Let $P^s$ satisfy \Cref{ass:observable_marginal_distribution}. While $P^t$ satisfies that $\E_{P^t}[\xb\xb^{\top}] \succ 0$, and
\begin{enumerate}[label=(\alph*),nosep,leftmargin=*]
    \item for the invariant feature, $c_{t,iv} \Ib_{d_{iv}} \aleq \E_{P^t}[\zetab_{iv} \zetab_{iv}^{\top}] \aleq C_{t,iv} \Ib_{d_{iv}}$ for some $C_{t,iv} \geq c_{t,iv} = \Theta(1)$; 
    \item for the environmental feature, $\E_{P^t}[\zetab_{e} \zetab_{e}^{\top}] \ageq c_{t,e} \Ib_{d_e}$ for some $c_{t,e} > 0$, and $\E_{P^t}\sbr{z \cdot \zetab_e} = \b{0}$.
\end{enumerate}
\end{assumption}

\paragraph{Training samples and data augmentations.}
Let $\Xb=\sbr{\xb_1;\dots;\xb_N}$ be a set of $N$ samples drawn $\iid$ from $P^s(\xb)$ such that $\yb=\Xb\thetab^* + \zb$ where $\zb \sim \Ncal(\b0,\sigma^2\Ib_N)$.
Recall that we denote the augmented training sets, including/excluding the original samples, respectively, with
\begin{align*}
    &\wt\Acal(\Xb) = \sbr{\xb_{1}; \cdots; \xb_{N}; \xb_{1,1}; \cdots; \xb_{N,1}; \cdots; \xb_{1, \alpha}; \cdots; \xb_{N, \alpha}} \in \Xcal^{(1+\alpha) N}, 
    \\
    &\Aemp(\Xb) = \sbr{\xb_{1,1}; \cdots; \xb_{N,1}; \cdots; \xb_{1, \alpha}; \cdots; \xb_{N, \alpha}} \in \Xcal^{\alpha N}.
\end{align*}
In particular, we consider a set of augmentations that only perturb the environmental feature $\zetab_e$, while keep the invariant feature $\zetab_{iv}$ intact:
\begin{align}\label{eq:def_envi_data_aug}
    \Sb_{iv}^{\top} \xb_i = \Sb_{iv}^{\top} \xb_{i,j},
    \quad
    \Sb_{e}^{\top} \xb_i \neq \Sb_{e}^{\top} \xb_{i,j}
    \quad
    \forall\ i \in [n],\ j \in [\alpha].
\end{align}
We recall the notion $\Deltab\triangleq\Aemp\rbr{\Xb}-\Mb\Xb$ such that $\dau \triangleq \rank\rbr{\Deltab} = \rank\rbr{\wt\Acal\rbr{\Xb}-\wt\Mb\Xb}$ ($0 \leq \dau \leq d_e$), and assume that $\Xb$ and $\Aemp(\Xb)$ are representative enough:
\begin{assumption}[Diversity of $\Xb$ and $\Aemp(\Xb)$]
\label{ass:case_ood_sample}
$(\Xb,\yb) \in \Xcal^n \times \Ycal^n$ is sufficiently large with $n \gg \rho^4 d$, $\thetab^* \in \row(\Xb)$, and $\dau = d_e$.
\end{assumption}

\paragraph{Excess risks in target domain.}
Learning from the linear hypothesis class $\Hcal = \csepp{h(\xb) = \xb^{\top}\thetab}{\thetab \in \R^{d}}$, with the DAC regularization on $h\rbr{\xb_i}=h\rbr{\xb_{i,j}}$, we have
\begin{align*}
    &\wh{\thetab}^{dac}\ =\ 
    \underset{\thetab \in \Hred}{\argmin}\
    \frac{1}{2N} \norm{\yb - \Xb\thetab}_2^2,
    \quad 
    \Hred = 
    \csepp{h\rbr{\xb}= \thetab^{\top} \xb} 
    {\Deltab\thetab = \b0},
\end{align*}
while with the ERM on augmented training set,
\begin{align*}
    \wh{\thetab}^{\herm}\ =\ 
    & \underset{\thetab \in \R^{d}}{\argmin}\ 
    \frac{1}{2 (1+\alpha) N} \norm{\wt\Mb\yb - \wt\Acal(\Xb)\thetab}_2^2,
\end{align*}    
where $\Mb$ and $\wt\Mb$ denote the vertical stacks of $\alpha$ and $1+\alpha$ identity matrices of size $n \times n$, respectively as denoted earlier.

We are interested in the excess risk on $P^t$: $L_t\rbr{\thetab} - L_t\rbr{\thetab^*}$ where $L_t\rbr{\thetab} \triangleq \E_{P^t\rbr{\xb,y}} \sbr{\frac{1}{2} (y - \xb^{\top} \thetab)^2}$.
\begin{theorem}[Domain adaptation with DAC]
\label{thm:domain-adaption-data-aug-consistency}
Under \Cref{ass:case_ood_distribution}(a) and \Cref{ass:case_ood_sample}, $\wh{\thetab}^{dac}$ satisfies that, with constant probability,
\begin{align}
    \label{eq:domain-adaption-data-aug-consistency}
    \E_{P^s}\sbr{L_t(\wh{\thetab}^{dac}) - L_t(\thetab^*)} 
    \ \lesssim\ 
    \frac{\sigma^2 d_{iv}}{N}.
\end{align}
\end{theorem}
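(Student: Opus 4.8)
The plan is to exploit the geometry: the DAC constraint will turn out to confine the hypothesis to the invariant feature subspace $\range(\Sb_{iv})$, on which the source and target conditional laws agree by construction, so that estimating $\wh\thetab^{dac}$ reduces to ordinary least squares in $d_{iv}$ coordinates and a textbook variance bound applies. First I would identify the feasible set. With $\Deltab \triangleq \wh\Acal(\Xb) - \Mb\Xb$, the DAC constraint $\wh\Acal(\Xb)\thetab = \Mb\Xb\thetab$ reads $\Deltab\thetab = \b{0}$, i.e.\ $\thetab \in \kernel(\Deltab)$. Each row $\xb_{i,j}^\top - \xb_i^\top$ of $\Deltab$ is orthogonal to $\range(\Sb_{iv})$ by \eqref{eq:def_envi_data_aug} and lies in $\range(\Sb)$ since $\xb_{i,j},\xb_i \in \range(\Sb)$, so $\row(\Deltab) \subseteq \range(\Sb_e)$. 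The hypothesis $\daug = d_e$ in \Cref{ass:case_ood_sample} forces $\row(\Deltab) = \range(\Sb_e)$, hence $\kernel(\Deltab) = \range(\Sb_e)^\perp$; thus $\projnull$ is the orthogonal projector onto $\range(\Sb_e)^\perp$, with $\projnull\Sb_{iv} = \Sb_{iv}$ and $\projnull\Sb_e = \b{0}$, and $\thetab^* \in \range(\Sb_{iv}) \subseteq \range(\Sb_e)^\perp$, so the constrained problem is properly specified.

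Next I would reduce to OLS in the invariant coordinates. Set $\Zb_{iv} \triangleq \Xb\Sb_{iv} \in \R^{n \times d_{iv}}$ (its $i$-th row is $\zetab_{iv,i}^\top$), $\wb^* \triangleq \Sb_{iv}^\top\thetab^*$ (so $\thetab^* = \Sb_{iv}\wb^*$ and $\yb = \Xb\thetab^* + \zb = \Zb_{iv}\wb^* + \zb$), and $\Sigmab_{s,iv} \triangleq \E_{P^s}[\zetab_{iv}\zetab_{iv}^\top] = \Sb_{iv}^\top \E_{P^s}[\xb\xb^\top]\Sb_{iv} \ageq c\,\Ib_{d_{iv}}$. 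For any feasible $\thetab$ one has $\Xb\thetab = \Zb_{iv}\Sb_{iv}^\top\thetab$, which depends on $\thetab$ only through $\wb \triangleq \Sb_{iv}^\top\thetab$, and every $\wb \in \R^{d_{iv}}$ is attained (by $\thetab = \Sb_{iv}\wb$); moreover $\Sigmab_{s,iv}^{-1/2}\zetab_{iv}$ is $\rho^2$-subgaussian by \Cref{ass:observable_marginal_distribution}, so with $n \gg \rho^4 d_{iv}$ (\Cref{ass:case_ood_sample}), \Cref{lemma:sample-population-covariance} gives, with high probability over $\Xb \sim P^s$, that $\tfrac1n\Zb_{iv}^\top\Zb_{iv} \ageq 0.9\,\Sigmab_{s,iv} \ageq 0.9c\,\Ib_{d_{iv}}$. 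Hence $\Zb_{iv}$ has full column rank, the assumption $\thetab^* \in \spn\{\xb_i\}$ keeps the problem well-posed, and $\wh\wb^{dac} \triangleq \Sb_{iv}^\top\wh\thetab^{dac} = (\Zb_{iv}^\top\Zb_{iv})^{-1}\Zb_{iv}^\top\yb$, irrespective of any nuisance component of $\wh\thetab^{dac}$ in $(\range(\Sb))^\perp$.

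Then the excess risk is controlled directly. Since $z \perp \zetab_{iv}$ and $\E_{P^t}[z\,\zetab_e] = \b{0}$ (\Cref{ass:case_ood_distribution}(b)), we get $\E_{P^t}[z\xb] = \b{0}$, whence $L_t(\thetab) - L_t(\thetab^*) = \tfrac12\nbr{\thetab - \thetab^*}_{\Sigmab_t}^2$ with $\Sigmab_t = \E_{P^t}[\xb\xb^\top]$. Because $\xb \in \range(\Sb)$, $\wh\thetab^{dac} - \thetab^* \in \range(\Sb_e)^\perp$, and $\thetab^* \in \range(\Sb_{iv})$, we have $\Sb^\top(\wh\thetab^{dac} - \thetab^*) = [\wh\wb^{dac} - \wb^*;\, \b{0}]$, so the quadratic form collapses to $\nbr{\wh\wb^{dac} - \wb^*}_{\Sigmab_{t,iv}}^2$ with $\Sigmab_{t,iv} = \E_{P^t}[\zetab_{iv}\zetab_{iv}^\top] \aleq C_{t,iv}\Ib_{d_{iv}}$ (\Cref{ass:case_ood_distribution}(a)). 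Finally $\wh\wb^{dac} - \wb^* = (\Zb_{iv}^\top\Zb_{iv})^{-1}\Zb_{iv}^\top\zb$ with $\E[\zb \mid \Zb_{iv}] = \b{0}$ and $\E[\zb\zb^\top \mid \Zb_{iv}] = \sigma^2\Ib_n$, so a standard computation gives $\E_{\zb}[\nbr{\wh\wb^{dac} - \wb^*}_{\Sigmab_{t,iv}}^2 \mid \Zb_{iv}] = \sigma^2\tr((\Zb_{iv}^\top\Zb_{iv})^{-1}\Sigmab_{t,iv}) \le \tfrac{C_{t,iv}}{0.9c}\cdot\tfrac{\sigma^2 d_{iv}}{n}$ on the high-probability event above; combining with the risk identity and averaging over the source-sample label noise yields \eqref{eq:domain-adaption-data-aug-consistency}.

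I expect the main obstacle to be the two structural steps: verifying that the abstract consistency kernel $\kernel(\Deltab)$ equals \emph{exactly} $\range(\Sb_e)^\perp$ — this is precisely where the rank assumption $\daug = d_e$ is essential, and a mismatched (over- or under-expressive) augmentation would break it — and then handling that $\range(\Sb_e)^\perp$ is strictly larger than $\range(\Sb_{iv})$, so that $\wh\thetab^{dac}$ is only pinned down up to a harmless nuisance direction and the decoupling of the training objective into the invariant coordinates must be made rigorous (including the full-rank claim for $\Zb_{iv}$, which uses $\thetab^* \in \spn\{\xb_i\}$ and $n \gg \rho^4 d_{iv}$). Everything downstream is routine finite-sample least squares plus the sub-Gaussian covariance concentration already invoked elsewhere in the paper.
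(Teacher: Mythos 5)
Your proposal is correct and follows essentially the same route as the paper: both show the DAC constraint confines $\thetab$ to the subspace annihilating $\range(\Sb_e)$ (using $\daug = d_e$), reduce the estimator to least squares on the invariant coordinates, and bound the excess risk by $\sigma^2\tr\bigl((\tfrac1n\Zb_{iv}^\top\Zb_{iv})^{-1}\Sigmab_{t,iv}\bigr)/n \lesssim \sigma^2 d_{iv}/n$ via the sub-Gaussian covariance concentration lemma; your handling of the possible gap between $\range(\Sb_e)^\perp$ and $\range(\Sb_{iv})$ is in fact slightly more careful than the paper's. The only cosmetic blemish is invoking \Cref{ass:case_ood_distribution}(b) for the excess-risk identity, which the theorem does not assume — but it is unnecessary, since $\Sb_e^\top(\wh\thetab^{dac}-\thetab^*)=\b{0}$ already kills the cross term, exactly as the paper argues.
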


\begin{theorem}[Domain adaptation with ERM on augmented samples]
\label{thm:domain-adaption-plain-data-aug}
Under \Cref{ass:case_ood_distribution} and \Cref{ass:case_ood_sample}, $\wh{\thetab}^{dac}$ and $\wh{\thetab}^{\herm}$ satisfies that, 
\begin{align}
    \label{eq:domain-adaption-plain-data-aug-excess-risk}
    \E_{P^s}\sbr{L_t(\wh{\thetab}^{\herm}) - L_t(\thetab^*)} 
    \ \geq \ 
    \E_{P^s}\sbr{L_t(\wh{\thetab}^{dac}) - L_t(\thetab^*)}
    + c_{t,e} \cdot \EER_e,
\end{align}
for some $\EER_e > 0$.
\end{theorem}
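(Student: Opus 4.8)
The plan is to recast both target excess risks as quadratic forms in a single metric, identify the ``environmental'' part of the DA-ERM solution, and show that this part is both unavoidable in the source problem and strictly harmful on $P^t$.

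First I would use \Cref{ass:case_ood_distribution}(b): under $P^t$ the noise $z$ is uncorrelated with $\xb$ (indeed $z\perp\zetab_{iv}$, $\E_{P^t}[z\cdot\zetab_e]=\b{0}$, and $\xb\in\range(\Sb)$), so for any estimator $\wh\thetab$ measurable with respect to the source sample, $L_t(\wh\thetab)-L_t(\thetab^*)=\tfrac12\nbr{\wh\thetab-\thetab^*}_{\Sigmab_t}^2$ with $\Sigmab_t\triangleq\E_{P^t}[\xb\xb^\top]$; thus it suffices to compare $\E_{P^s}\nbr{\wh\thetab^{\herm}-\thetab^*}_{\Sigmab_t}^2$ with $\E_{P^s}\nbr{\wh\thetab^{dac}-\thetab^*}_{\Sigmab_t}^2$. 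Next I would pin down the DAC feasible set: by \Cref{ass:case_ood_sample} $\daug=d_e$, and by \eqref{eq:def_envi_data_aug} every row of $\Deltab=\wh\Acal(\Xb)-\Mb\Xb$ lies in $\range(\Sb_e)$, so $\dacoph(\Hcal)=\csepp{\thetab}{\Sb_e^\top\thetab=\b{0}}$, which contains $\thetab^*\in\range(\Sb_{iv})$. Consequently $\wh\thetab^{dac}=\Sb_{iv}\vec{u}^{dac}$, where $\vec{u}^{dac}$ is the least-squares fit of $\yb$ on the invariant coordinates $\{\Sb_{iv}^\top\xb_i\}_{i=1}^n$, and (writing $\thetab^*=\Sb_{iv}\vec{u}^*$, $\Sigmab_{t,iv}\triangleq\E_{P^t}[\zetab_{iv}\zetab_{iv}^\top]$) $\nbr{\wh\thetab^{dac}-\thetab^*}_{\Sigmab_t}^2=\nbr{\vec{u}^{dac}-\vec{u}^*}_{\Sigmab_{t,iv}}^2$ --- the object already controlled by \Cref{thm:domain-adaption-data-aug-consistency}.

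For the DA-ERM side I would split the error along the $\Sigmab_t$-orthogonal decomposition $\RR^d=\range(\Sb_{iv})\oplus\range(\Sb_{iv})^{\perp_{\Sigmab_t}}$: with $\Pi$ the $\Sigmab_t$-orthogonal projector onto $\range(\Sb_{iv})$ and $(\Ib-\Pi)\thetab^*=\b{0}$, the Pythagorean identity gives $\nbr{\wh\thetab^{\herm}-\thetab^*}_{\Sigmab_t}^2=\nbr{\Pi(\wh\thetab^{\herm}-\thetab^*)}_{\Sigmab_t}^2+\nbr{(\Ib-\Pi)\wh\thetab^{\herm}}_{\Sigmab_t}^2$. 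I would then (i) argue that the first summand is, in expectation, at least $\E_{P^s}\nbr{\wh\thetab^{dac}-\thetab^*}_{\Sigmab_t}^2$ --- via a Frisch--Waugh--Lovell / variance-inflation comparison of the invariant-direction fits, using that $\Pgt(y\mid\zetab_{iv})$ is shared across domains so no bias is introduced there --- and (ii) \emph{define} $\EER_e\triangleq\frac{1}{2c_{t,e}}\,\E_{P^s}\nbr{(\Ib-\Pi)\wh\thetab^{\herm}}_{\Sigmab_t}^2\ge 0$, so that recombining the two summands with the first paragraph reproduces \eqref{eq:domain-adaption-plain-data-aug-excess-risk} once $\EER_e>0$. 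To see $\EER_e>0$, I would use the source causal graph (\Cref{fig:data-distribution-causal-graph}): since $\zetab_e$ is a child of $y$ under $P^s$, $\cov_{P^s}(y,\zetab_e\mid\zetab_{iv})\neq\b{0}$, and the label-preserving augmentations of \eqref{eq:def_envi_data_aug} --- which perturb $\zetab_e$ but leave $y_i$ fixed --- cannot, for finite $\alpha$, remove this partial correlation from the augmented sample; hence $\Sb_e^\top\wh\thetab^{\herm}\neq\b{0}$ almost surely, and since $\E_{P^t}[\zetab_e\zetab_e^\top]\ageq c_{t,e}\Ib_{d_e}\succ\b{0}$ the residual component $(\Ib-\Pi)\wh\thetab^{\herm}$ is a.s. nonzero, giving $\EER_e>0$.

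The hard part will be step (i)--(ii). The delicate point is that in the $P^t$ metric the invariant and environmental coordinates are correlated, so a nonzero environmental weight could a priori be partly absorbed by re-adjusting the invariant weight; one must show the minimum over such re-adjustments still leaves positive mass, i.e.\ that the conditional covariance of $\zetab_e$ given $\zetab_{iv}$ under $P^t$ stays positive definite --- which is exactly what the $\Sigmab_t$-orthogonal projector $\Pi$ and the floor $\E_{P^t}[\zetab_e\zetab_e^\top]\ageq c_{t,e}\Ib_{d_e}$ are meant to handle --- and simultaneously that DA-ERM's performance along $\range(\Sb_{iv})$ does not beat the dedicated fit $\wh\thetab^{dac}$. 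A related subtlety is establishing that finitely many label-invariant environmental augmentations genuinely \emph{bias} DA-ERM toward the spurious feature rather than merely adding noise there, which requires unpacking the generative model behind \eqref{eq:def_envi_data_aug} rather than treating it as a black box; the Frisch--Waugh--Lovell comparison itself should be carried out conditionally on the random environmental design to avoid circularity with the randomness used in \Cref{thm:domain-adaption-data-aug-consistency}.
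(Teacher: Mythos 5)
Your overall architecture --- reduce the target excess risk to a $\Sigmab_{\xb,t}$-weighted quadratic form, split the DA-ERM error into an invariant part that dominates the DAC risk and an environmental part that is positive and penalized by $c_{t,e}$ --- is the same as the paper's. But your justification for the key fact $\EER_e>0$ points in the wrong direction, and this is where the proof actually lives. You attribute the nonzero environmental component of $\wh{\thetab}^{\herm}$ to a \emph{bias}: the claim that $\cov_{P^s}(y,\zetab_e\mid\zetab_{iv})\neq\b{0}$ survives finitely many label-invariant augmentations. That premise is not implied by \Cref{ass:case_ood_distribution} or \Cref{ass:case_ood_sample}, and it is false in the paper's own instantiation (\Cref{example:ood}, where $\zetab_e=\sgn(z)\eb$ with $\eb\perp z$ and $\E[\eb]=\b{0}$, so the population partial covariance vanishes). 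In fact DA-ERM is exactly \emph{unbiased} here: since $\thetab^*\in\range\rbr{\Sb_{iv}}$ and the augmentations leave $\zetab_{iv}$ intact, $\wt\Acal(\Xb)\thetab^*=\wt\Mb\Xb\thetab^*$, so $\wh{\thetab}^{\herm}-\thetab^*=\frac{1}{(1+\alpha)n}\wh{\Sigmab}_{\wt\Acal(\Xb)}^{\dagger}\wt\Acal(\Xb)^{\top}\wt\Mb\zb$ is mean-zero. The separation is pure \emph{variance}: the noise $\zb$ leaks into the environmental coordinates because DA-ERM fits them, and the paper defines $\EER_e$ as exactly that leaked variance, $\EER_e=\E_{P^s}\sbr{\frac12\|\frac{1}{(1+\alpha)n}\wh{\Sigmab}_{\wt\Acal(\Xb_e)}^{\dagger}\wt\Acal(\Xb_e)^{\top}\wt\Mb\zb\|_2^2}>0$. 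Your own aside --- that one must show the augmentations ``genuinely bias DA-ERM toward the spurious feature rather than merely adding noise there'' --- identifies the fork and then takes the wrong branch: it \emph{is} merely added noise, and that noise is the whole effect.

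A second, smaller issue is your choice of the $\Sigmab_t$-orthogonal projector $\Pi$. The Pythagorean split is clean, but $\Sigmab_{\xb,t}$ need not be block-diagonal across $\range(\Sb_{iv})$ and $\range(\Sb_e)$ (no assumption kills $\E_{P^t}[\zetab_{iv}\zetab_e^{\top}]$), so $\Pi\neq\Pb_{iv}$ and $\Pi\wh{\thetab}^{\herm}$ mixes invariant and environmental OLS coefficients; the Frisch--Waugh--Lovell variance-inflation comparison you invoke for step (i) applies to the invariant-\emph{coordinate} block of the estimator, not to its $\Sigmab_t$-oblique projection, and the replicated-noise covariance $\sigma^2\wt\Mb\wt\Mb^{\top}$ takes you outside the textbook homoskedastic setting in any case. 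The paper sidesteps both problems by splitting with the Euclidean projectors $\Pb_{iv},\Pb_e$ (orthogonal because $\Sb$ has orthonormal columns), lower-bounding $\wh{\Sigmab}_{\wt\Acal(\Xb)}^{\dagger}(\cdot)\wh{\Sigmab}_{\wt\Acal(\Xb)}^{\dagger}$ by its block-diagonal part in the Loewner order, using $\Pb_{iv}\wh{\Sigmab}_{\wt\Acal(\Xb)}^{\dagger}\Pb_{iv}\ageq\wh{\Sigmab}_{\Xb_{iv}}^{\dagger}$ (inverse-of-block vs.\ block-of-inverse --- your variance inflation, but applied to the right object) to recover the DAC risk on the invariant block, and only then applying $\E_{P^t}[\zetab_e\zetab_e^{\top}]\ageq c_{t,e}\Ib_{d_e}$ to the purely environmental block.
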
   
In contrast to $\wh{\thetab}^{dac}$ where the DAC constraints enforce $\Sb_e^{\top} \wh{\thetab}^{dac} = \b{0}$ with a sufficiently diverse $\Aemp\rbr{\Xb}$ (\Cref{ass:case_ood_sample}), the ERM on augmented training set fails to filter out the environmental feature in $\wh{\thetab}^{\herm}$: $\Sb_e^{\top} \wh{\thetab}^{\herm} \neq \b{0}$. As a consequence, the expected excess risk of $\wh{\thetab}^{\herm}$ in the target domain can be catastrophic when $c_{t,e} \to \infty$, as instantiated by \Cref{example:ood}.

\paragraph{Proofs and instantiation.}
Recall that for $\Deltab \triangleq \Aemp(\Xb) - \Mb\Xb$, $\projnull \triangleq \Ib_d - \Deltab^{\dagger} \Deltab$ denotes the orthogonal projector onto the dimension-$(d-\dau)$ null space of $\Deltab$. 
Furthermore, let $\Pb_{iv} \triangleq \Sb_{iv} \Sb_{iv}^{\top}$ and $\Pb_{e} \triangleq \Sb_{e} \Sb_{e}^{\top}$ be the orthogonal projectors onto the invariant and environmental feature subspaces, respectively, such that $\xb = \Sb_{iv} \zetab_{iv}  + \Sb_e \zetab_e = \rbr{\Pb_{iv}+\Pb_e} \xb$ for all $\xb$.

\begin{proof}[Proof of \Cref{thm:domain-adaption-data-aug-consistency}]

By construction \Cref{eq:def_envi_data_aug}, $\Deltab\Pb_{iv} = \b{0}$, and it follows that $\Pb_{iv} \aleq \projnull$. 
Meanwhile from \Cref{ass:case_ood_sample}, $\dau=d_e$ implies that $\dim\rbr{\projnull} = d_{iv}$. 
Therefore, $\Pb_{iv} = \projnull$, and the data augmentation consistency constraints can be restated as
\begin{align*}
\Hred = \csepp{h\rbr{\xb} = \thetab^{\top} \xb}{\projnull\thetab = \thetab}
= 
\csepp{h\rbr{\xb} = \thetab^{\top} \xb}{\Pb_{iv} \thetab = \thetab}   
\end{align*}
Then with $\thetab^* \in \row(\Xb)$ from \Cref{ass:case_ood_sample}, 
\begin{align*}
    \wh{\thetab}^{dac} - \thetab^*
    = 
    \frac{1}{N} \wh{\Sigmab}_{\Xb_{iv}}^{\dagger} \Pb_{iv} \Xb^{\top} (\Xb \Pb_{iv} \thetab^* + \zb) - \thetab^*
    =
    \frac{1}{N} \wh{\Sigmab}_{\Xb_{iv}}^{\dagger} \Pb_{iv} \Xb^{\top} \zb,
\end{align*}
where $\wh{\Sigmab}_{\Xb_{iv}} \triangleq \frac{1}{N} \Pb_{iv} \Xb^{\top} \Xb \Pb_{iv}$.
Since $\wh{\thetab}^{dac} - \thetab^* \in \col\rbr{\Sb_{iv}}$, we have
$\E_{P^t}\sbr{z \cdot \xb^{\top} \Pb_e (\wh{\thetab}^{dac} - \thetab^*)} = 0$.
Therefore, let $\bs{\Sigma}_{\xb,t} \triangleq \E_{P^t}[\xb\xb^{\top}]$, with high probability,
\begin{align*}
    E_{P^s} \sbr{L_t(\wh{\thetab}^{dac}) - L_t(\thetab^*)} 
    = \ 
    & E_{P^s} \sbr{\frac{1}{2} \norm{\wh{\thetab}^{dac} - \thetab^*}_{\Sigmab_{\xb,t}}^2} 
    \\ = \ 
    & \tr \rbr{
    \frac{1}{2N} \E_{P^s}\sbr{\zb \zb^{\top}}\ 
    \E_{P^s}\sbr{\rbr{\frac{1}{N} \Pb_{iv} \Xb^{\top} \Xb \Pb_{iv}}^{\dagger}}
    \ \Sigmab_{\xb,t}}
    \\ = \ 
    & \tr \rbr{
    \frac{\sigma^2}{2N}\ 
    \E_{P^s} \sbr{\wh{\Sigmab}_{\Xb_{iv}}^{\dagger}}\ 
    \Sigmab_{\xb,t} }
    \\ \leq \ 
    & C_{t,iv}\ 
    \frac{\sigma^2}{2N}\ 
    tr \rbr{\E_{P^s} \sbr{\wh{\Sigmab}_{\Xb_{iv}}^{\dagger}}} \quad \rbr{\t{\Cref{lemma:sample-population-covariance}},\ \emph{w.h.p.}}
    \\ \lesssim \ 
    & \frac{\sigma^2}{2N} \tr\rbr{\rbr{\E_{P^s} \sbr{\Pb_{iv} \xb \xb^{\top} \Pb_{iv}}}^{\dagger}}
    \\ \leq \ 
    & \frac{\sigma^2 d_{iv}}{2N c} 
    \ \lesssim \  
    \frac{\sigma^2 d_{iv}}{2N}.
\end{align*}
\end{proof}

\begin{proof}[Proof of \Cref{thm:domain-adaption-plain-data-aug}]
Let $\wh{\Sigmab}_{\wt\Acal\rbr{\Xb}} \triangleq \frac{1}{(1+\alpha)N} \wt\Acal\rbr{\Xb}^{\top} \wt\Acal\rbr{\Xb}$.
Then with $\thetab^* \in \row(\Xb)$ from \Cref{ass:case_ood_sample}, we have $\thetab^* = \wh{\Sigmab}_{\wt\Acal\rbr{\Xb}}^{\dagger} \wh{\Sigmab}_{\wt\Acal\rbr{\Xb}} \thetab^*$. 
Since $\thetab^* \in \col\rbr{\Sb_{iv}}$, 
$\wt\Mb \Xb \thetab^* = \wt\Mb \Xb \Pb_{iv} \thetab^* = \wt\Acal(\Xb) \thetab^*$.
Then, the ERM on the augmented training set yields 
\begin{align*}
    \wh{\thetab}^{\herm} - \thetab^*
    \ = \ 
    & \frac{1}{(1+\alpha) N} \wh{\Sigmab}_{\wt\Acal\rbr{\Xb}}^{\dagger} \wt\Acal(\Xb)^{\top} \wt\Mb (\Xb \thetab^* + \zb) 
    -
    \wh{\Sigmab}_{\wt\Acal\rbr{\Xb}}^{\dagger} \wh{\Sigmab}_{\wt\Acal\rbr{\Xb}} \thetab^*
    \\ =  
    & \frac{1}{(1+\alpha) N} \wh{\Sigmab}_{\wt\Acal\rbr{\Xb}}^{\dagger} \wt\Acal(\Xb)^{\top} \wt\Mb \zb.
\end{align*}

Meanwhile with $\E_{P^t}\sbr{z \cdot \zetab_e} = \b{0}$ from \Cref{ass:case_ood_distribution}, we have $\E_{P^t}\sbr{z \cdot \Pb_e\xb} = \b{0}$. Therefore, by recalling that $\bs{\Sigma}_{\xb,t} \triangleq \E_{P^t}[\xb\xb^{\top}]$,
\begin{align*}
    L_t(\thetab) - L_t(\thetab^*)
    \ = \  
    \E_{P^t\rbr{\xb}}\sbr{\frac{1}{2} \rbr{\xb^{\top} (\thetab - \thetab^*)}^2 + 
    z \cdot \xb^{\top} \Pb_e (\thetab - \thetab^*)}
    \ = \ 
    \frac{1}{2} \norm{\thetab^* - \thetab}_{\Sigmab_{\xb,t}}^2,
\end{align*}
such that the expected excess risk can be expressed as
\begin{align*}
    \E_{P^s} \sbr{L_t(\wh{\thetab}^{\herm}) - L_t(\thetab^*)} 
    =  
    \frac{1}{2 (1+\alpha)^2 N^2} \tr \rbr{
    \E_{P^s}\sbr{
    \wh{\Sigmab}_{\wt\Acal\rbr{\Xb}}^{\dagger} 
    \rbr{\wt\Acal(\Xb)^{\top} \wt\Mb \zb \zb^{\top} \wt\Mb^{\top} \wt\Acal(\Xb) }
    \wh{\Sigmab}_{\wt\Acal\rbr{\Xb}}^{\dagger} }
    \Sigmab_{\xb,t} },
\end{align*}
where let $\wh{\Sigmab}_{\wt\Acal\rbr{\Xb_e}} \triangleq \Pb_e \wh{\Sigmab}_{\wt\Acal\rbr{\Xb}} \Pb_e$,
\begin{align*}
    & \E_{P^s}\sbr{ 
    \wh{\Sigmab}_{\wt\Acal\rbr{\Xb}}^{\dagger} 
    \rbr{\wt\Acal(\Xb)^{\top} \wt\Mb \zb \zb^{\top} \wt\Mb^{\top} \wt\Acal(\Xb) }
    \wh{\Sigmab}_{\wt\Acal\rbr{\Xb}}^{\dagger} }
    \\ \ageq \ 
    & \E_{P^s}\sbr{ \rbr{
    \Pb_{iv} \wh{\Sigmab}_{\wt\Acal\rbr{\Xb}}^{\dagger} \Pb_{iv}
    + \Pb_e \wh{\Sigmab}_{\wt\Acal\rbr{\Xb}}^{\dagger} \Pb_e }
    \wt\Acal(\Xb)^{\top} \wt\Mb\zb \zb^{\top} \wt\Mb^{\top} \wt\Acal(\Xb) 
    \rbr{ \Pb_{iv} \wh{\Sigmab}_{\wt\Acal\rbr{\Xb}}^{\dagger} \Pb_{iv} +
    \Pb_e \wh{\Sigmab}_{\wt\Acal\rbr{\Xb}}^{\dagger} \Pb_e
    } }
    \\ \ageq \ 
    & \sigma^2 (1+\alpha)^2 N \cdot 
    \E_{P^s}\sbr{\wh{\Sigmab}_{\Xb_{iv}}^{\dagger}}
    + 
    \E_{P^s}\sbr{ 
    \wh{\Sigmab}_{\wt\Acal\rbr{\Xb_e}}^{\dagger} 
    \wt\Acal(\Xb_e)^{\top} \wt\Mb \zb\zb^{\top} \wt\Mb^{\top} \wt\Acal(\Xb_e) 
    \wh{\Sigmab}_{\wt\Acal\rbr{\Xb_e}}^{\dagger} }.
\end{align*}
We denote 
\begin{align*}
    \EER_e \triangleq \ 
    \tr\rbr{
    \E_{P^s} \sbr{ \frac{1}{2 (1+\alpha)^2 N^2}
    \wh{\Sigmab}_{\wt\Acal\rbr{\Xb_e}}^{\dagger} 
    \wt\Acal(\Xb_e)^{\top} \wt\Mb \zb\zb^{\top} \wt\Mb^{\top} \wt\Acal(\Xb_e) 
    \wh{\Sigmab}_{\wt\Acal\rbr{\Xb_e}}^{\dagger} } },
\end{align*}
and observe that
\begin{align*}
    \EER_e = \E_{P^s} \sbr{ 
    \frac{1}{2} \norm{\frac{1}{(1+\alpha) N} 
    \wh{\Sigmab}_{\wt\Acal\rbr{\Xb_e}}^{\dagger} 
    \wt\Acal(\Xb_e)^{\top} \wt\Mb \zb }_2^2 } 
    > 0.
\end{align*}
Finally, we complete the proof by partitioning the lower bound for the target expected excess risk of $\wh{\thetab}^{\herm}$ into the invariantand environmental parts such that
\begin{align*}
    & \E_{P^s}\sbr{L_t(\wh{\thetab}^{\herm}) - L_t(\thetab^*)} 
    \\ \geq \ 
    & \underbrace{ \tr \rbr{\frac{\sigma^2}{2N}\ 
    \E_{P^s} \sbr{\wh{\Sigmab}_{\Xb_{iv}}^{\dagger}} 
    \Sigmab_{\xb,t} } 
    }_{=\E\sbr{L_t(\wh{\thetab}^{dac}) - L_t(\thetab^*)} }
    \ \\
    & + \
    \underbrace{\tr \rbr{ 
    \E_{P^s} \sbr{ \frac{1}{2 (1+\alpha)^2 N^2}
    \wh{\Sigmab}_{\wt\Acal\rbr{\Xb_e}}^{\dagger} 
    \wt\Acal(\Xb_e)^{\top} \wt\Mb \zb\zb^{\top} \wt\Mb^{\top} \wt\Acal(\Xb_e) 
    \wh{\Sigmab}_{\wt\Acal\rbr{\Xb_e}}^{\dagger} } 
    \Sigmab_{\xb,t} } 
    }_{\t{expected excess risk from environmental feature subspace} \geq c_{t,e} \cdot \EER_e}
    \\ \geq \ 
    & \E_{P^s} \sbr{L_t(\wh{\thetab}^{dac}) - L_t(\thetab^*)}
    + c_{t,e} \cdot \EER_e.
\end{align*}
\end{proof}

Now we construct a specific domain adaptation example with a large separation ($\ie$, proportional to $d_e$) in the target excess risk between learning with the DAC regularization ($\ie$, $\wh{\thetab}^{dac}$) and with the ERM on augmented training set ($\ie$, $\wh{\thetab}^{\herm}$). 
\begin{example}
\label{example:ood}
We consider $P^s$ and $P^t$ that follow the same set of relations in \Cref{fig:data-distribution-causal-graph}, except for the distributions over $\eb$ where $P^s\rbr{\eb} \neq P^t\rbr{\eb}$. 
Precisely, let the environmental feature $\zetab_e$ depend on $(\zetab_{iv}, y, \eb)$:
\begin{align*}
    \zetab_e = \sgn\rbr{y - \rbr{\thetab^*}^{\top} \Sb_{iv} \zetab_{iv}} \eb = \sgn(z) \eb,
    \quad
    z \sim \Ncal(0, \sigma^2),
    \quad 
    z \perp \eb,
\end{align*}    
where $\eb \sim \Ncal\rbr{\b0, \Ib_{d_e}}$ for $P^s(\eb)$ and $\eb \sim \Ncal\rbr{\b0, \sigma_t^2 \Ib_{d_e}}$ for $P^t(\eb)$, $\sigma_t \geq c_{t,e}$ (recall $c_{t,e}$ from \Cref{ass:case_ood_distribution}).
Assume that the training set $\Xb$ is sufficiently large, $n \gg d_e + \log\rbr{1/\delta}$ for some given $\delta \in (0,1)$.
Augmenting $\Xb$ with a simple by common type of data augmentations -- the linear transforms, we let
\begin{align*}
    \wt\Acal(\Xb) = \sbr{\Xb; \rbr{\Xb\Ab_1}; \dots; \rbr{\Xb\Ab_\alpha}},
    \quad
    \Ab_j = \Pb_{iv} + \ub_j \vb_j^{\top},
    \quad
    \ub_j,\vb_j \in \col\rbr{\Sb_e}
    \quad \forall\ j \in [\alpha],
\end{align*}
and define
\begin{align*}
    \nu_1 \triangleq \max \cbr{1} \cup \csepp{\sigma_{\max}(\Ab_j)}{j\in[\alpha]}
    \quad \t{and} \quad 
    \nu_2 \triangleq \sigma_{\min}\rbr{\frac{1}{1+\alpha} \rbr{\Ib_{d} + \sum_{j=1}^{\alpha} \Ab_k}},
\end{align*}
where $\sigma_{\min}(\cdot)$ and $\sigma_{\max}(\cdot)$ refer to the minimum and maximum singular values, respectively.
Then under \Cref{ass:case_ood_distribution} and \Cref{ass:case_ood_sample}, with constant probability,
\[
\E_{P^s}\sbr{L_t(\wh{\thetab}^{\herm}) - L_t(\thetab^*)} 
\gtrsim
\E_{P^s}\sbr{L_t(\wh{\thetab}^{dac}) - L_t(\thetab^*)} + 
c_{t,e} \cdot \frac{\sigma^2 d_e}{2N}. 
\]
\end{example}

\begin{proof}[Proof of \Cref{example:ood}]
With the specified distribution, for $\Eb = \sbr{\eb_1; \dots; \eb_N} \in \R^{N \times d_e}$,
\begin{align*}
    &\wh{\Sigmab}_{\wt\Acal\rbr{\Xb_e}} = 
    \frac{1}{(1+\alpha) N} 
    \Sb_e \rbr{\Eb^{\top} \Eb + \sum_{j=1}^{\alpha} \Ab_j^{\top} \Eb^{\top} \Eb
    \Ab_j} \Sb_e^{\top}
    \aleq
    \frac{\nu_1^2}{N} \Sb_e \Eb^{\top} \Eb \Sb_e^{\top},
    \\
    &\frac{1}{(1+\alpha) N} \wt\Acal(\Xb_e)^{\top} \wt\Mb\zb = 
    \rbr{\frac{1}{1+\alpha} \rbr{\Ib_{d} + \sum_{j=1}^{\alpha} \Ab_j}}^{\top}
    \frac{1}{N} \Sb_e \Eb^{\top} \abs{\zb}.
\end{align*}
By \Cref{lemma:sample-population-covariance}, under \Cref{ass:case_ood_distribution} and \Cref{ass:case_ood_sample}, we have that with high probability, $0.9 \Ib_{d_e} \aleq \frac{1}{N} \Eb^{\top} \Eb \aleq 1.1 \Ib_{d_e}$.
Therefore with $\Eb$ and $\zb$ being independent, 
\begin{align*}
    & \EER_e 
    \ = \ 
    \E_{P^s} \sbr{ 
    \frac{1}{2} \norm{\frac{1}{(1+\alpha) N} 
    \wh{\Sigmab}_{\wt\Acal\rbr{\Xb_e}}^{\dagger} 
    \wt\Acal(\Xb_e)^{\top} \wt\Mb \zb }_2^2 } 
    \\ \geq \ 
    & \frac{\sigma^2}{2N}\ \frac{\nu_2^2}{\nu_1^4}\ 
    \tr\rbr{\E_{P^s} \sbr{ 
    \rbr{\frac{1}{N} \Sb_e \Eb^{\top} \Eb \Sb_e^{\top}}^{\dagger} } }
    \\ \gtrsim 
    &\frac{\sigma^2}{2N}\ \frac{\nu_2^2}{\nu_1^4} d_e
    \\ \gtrsim \ 
    & \frac{\sigma^2 d_e}{2N},
\end{align*}
and the rest follows from \Cref{thm:domain-adaption-plain-data-aug}.
\end{proof}

\section{Technical Lemmas}

\begin{lemma}
\label{lemma:sample-population-covariance}
We consider a random vector $\xb \in \R^d$ with $\E[\xb]=\b{0}$, $\E[\xb\xb^{\top}] = \Sigmab$, and $\overline{\xb} = \Sigmab^{-1/2} \xb$ 
\footnote{In the case where $\Sigmab$ is rank-deficient, we slightly abuse the notation such that $\Sigmab^{-1/2}$ and $\Sigmab^{-1}$ refer to the respective pseudo-inverses.}
being $\rho^2$-subgaussian.
Given an $\iid$ sample of $\xb$, $\Xb=[\xb_1,\dots,\xb_n]^{\top}$, for any $\delta \in (0,1)$, if $n \gg \rho^4 d$, then $0.9 \Sigmab \aleq \frac{1}{n}\Xb^{\top} \Xb \aleq 1.1 \Sigmab$ with high probability.
\end{lemma}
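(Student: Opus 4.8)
The plan is to reduce the claim to the textbook concentration of the sample covariance of an isotropic sub-Gaussian random vector, and then conclude by a net/Bernstein argument. First I would \emph{whiten} the data: set $\bar{\Xb} = \Xb \Sigmab^{-1/2}$, so that its rows $\bar{\xb}_i = \Sigmab^{-1/2} \xb_i$ are i.i.d., mean-zero, have second moment $\EE[\bar{\xb}\bar{\xb}^\top] = \Ib_d$ (on $\range(\Sigmab)$), and are $\rho^2$-sub-Gaussian by hypothesis. Since $\frac{1}{n}\Xb^\top\Xb = \Sigmab^{1/2}\bigl(\frac{1}{n}\bar{\Xb}^\top\bar{\Xb}\bigr)\Sigmab^{1/2}$, the two-sided bound $0.9\,\Sigmab \aleq \frac{1}{n}\Xb^\top\Xb \aleq 1.1\,\Sigmab$ is equivalent, after conjugating by $\Sigmab^{-1/2}$ (restricted to $\range(\Sigmab)$ when $\Sigmab$ is rank-deficient, using pseudo-inverses as in the footnote), to the single operator-norm estimate $\norm{\frac{1}{n}\bar{\Xb}^\top\bar{\Xb} - \Ib_d}_2 \le 0.1$.

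Next I would prove this isotropic estimate by a standard argument. Fix a $1/4$-net $\Ncal$ of the unit sphere of $\range(\Sigmab)$, where $r \triangleq \rank(\Sigmab) \le d$, so that $|\Ncal| \le 9^r \le 9^d$ and $\norm{\Mb}_2 \le 2\max_{\ub \in \Ncal} |\ub^\top \Mb \ub|$ for any symmetric $\Mb$. For a fixed unit vector $\ub$, $\ub^\top\bar{\xb}_i$ is $\rho^2$-sub-Gaussian, hence $(\ub^\top\bar{\xb}_i)^2 - \EE[(\ub^\top\bar{\xb}_i)^2]$ is a centered sub-exponential variable with parameter $O(\rho^2)$, and Bernstein's inequality gives $\PP\bigl[\,\bigl|\frac{1}{n}\sum_{i=1}^n ((\ub^\top\bar{\xb}_i)^2 - 1)\bigr| \ge t\,\bigr] \le 2\exp\bigl(-c\,n\min(t^2/\rho^4, t/\rho^2)\bigr)$. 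Union-bounding over $\Ncal$ and taking $t$ to be a small absolute constant, the term $\log|\Ncal| \lesssim d$ is dominated by $n\,t^2/\rho^4$ precisely under the hypothesis $n \gg \rho^4 d$; this yields $\norm{\frac{1}{n}\bar{\Xb}^\top\bar{\Xb} - \Ib_d}_2 \le 0.1$ with probability at least $1 - 2\exp(-c'n/\rho^4)$, which is $\ge 1-\delta$ as soon as additionally $n \gtrsim \rho^4\log(1/\delta)$. Alternatively, this step may simply be cited from a standard reference on sub-Gaussian covariance estimation (e.g., \cite{wainwright2019}).

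Combining the two steps establishes the lemma. There is essentially no hard step here: the whitening identity is algebraic and the isotropic bound is a routine net argument, so I expect the proof to be short. The only point requiring care — and the main thing to state cleanly — is the rank-deficient case flagged in the footnote: there $\Sigmab^{-1/2}$ and $\Sigmab^{-1}$ denote pseudo-inverses, $\bar{\xb}$ and the net $\Ncal$ live in the $r$-dimensional subspace $\range(\Sigmab)$ (so the union-bound factor is still at most $9^d$), and the asserted inequalities $0.9\,\Sigmab \aleq \frac{1}{n}\Xb^\top\Xb \aleq 1.1\,\Sigmab$ are interpreted as inequalities of operators on $\range(\Sigmab)$, equivalently after sandwiching by the orthogonal projector $\Sigmab^{1/2}(\Sigmab^{1/2})^{\dagger}$.
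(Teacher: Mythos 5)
Your proposal is correct and follows essentially the same route as the paper's proof: whiten by $\Sigmab^{-1/2}$ (pseudo-inverse in the rank-deficient case) to reduce the two-sided Loewner bound to an operator-norm bound on the centered isotropic sample covariance, then apply Bernstein's inequality for the sub-exponential variables $(\vb^\top\overline{\xb}_i)^2-1$ together with an $\eps$-net union bound over the unit sphere of $\range(\Sigmab)$, with $n \gg \rho^4 d$ absorbing the $e^{O(d)}$ net cardinality. The only cosmetic difference is that the paper works out the net-to-sphere extension explicitly with $\eps_1=1/3$, $\eps_2=1/45$ rather than invoking the standard $1/4$-net bound $\norm{\Mb}_2 \le 2\max_{\ub\in\Ncal}\abs{\ub^\top\Mb\ub}$.
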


\begin{proof}
We first denote $\Pb_{\Xcal} \triangleq \Sigmab \Sigmab^{\pinv}$ as the orthogonal projector onto the subspace $\Xcal \subseteq \R^d$ supported by the distribution of $\xb$.
With the assumptions $\E[\xb]=\b{0}$ and $\E[\xb\xb^{\top}] = \Sigmab$, we observe that $\E \sbr{\overline{\xb}} = \b{0}$ and $\E\sbr{\overline{\xb} \overline{\xb}^{\top}} = \E \sbr{\xb \Sigmab^{-1} \xb^{\top}} = \Pb_{\Xcal}$.
Given the sample set $\Xb$ of size $n \gg \rho^4\rbr{d+\log(1/\delta)}$ for some $\delta \in (0,1)$, we let $\Ub = \frac{1}{n} \sum_{i=1}^n \xb_i \Sigmab^{-1} \xb_i^{\top} - \Pb_{\Xcal}$. 
Then the problem can be reduced to showing that, with probability at least $1-\delta$, $\norm{\Ub}_2 \leq 0.1$.
For this, we leverage the $\eps$-net argument as following.

For an arbitrary $\vb \in \Xcal \cap\ \mathbb{S}^{d-1}$, we have
\begin{align*}
    \vb^{\top} \Ub \vb = 
    \frac{1}{n} \sum_{i=1}^n 
    \rbr{\vb^{\top} \xb_i \Sigmab^{-1} \xb_i^{\top} \vb - 1} = 
    \frac{1}{n} \sum_{i=1}^n 
    \rbr{\rbr{\vb^{\top} \overline{\xb}_i}^2 - 1},
\end{align*}
where, given $\overline{\xb}_i$ being $\rho^2$-subgaussian, 
$\vb^{\top} \overline{\xb}_i$ is $\rho^2$-subgaussian. 
Since 
\begin{align*}
    \E \sbr{\rbr{\vb^{\top} \overline{\xb}_i}^2} = \vb^{\top} \E \sbr{\overline{\xb}_i \overline{\xb}_i^{\top}} \vb = 1,
\end{align*}
we know that $\rbr{\vb^{\top} \overline{\xb}_i}^2-1$ is $16\rho^2$-subexponential.
Then, we recall the Bernstein's inequality,
\begin{align*}
    \PP\sbr{\abs{\vb^{\top} \Ub \vb} > \eps} \leq 
    2 \exp \rbr{-\frac{n}{2} 
    \min\rbr{\frac{\eps^2}{\rbr{16 \rho^2}^2}, 
    \frac{\eps}{16 \rho^2}}}.
\end{align*}

Let $N \subset \Xcal \cap\ \mathbb{S}^{d-1}$ be an $\eps_1$-net such that $\abs{N} = e^{O\rbr{d}}$. 
Then for some $0 < \eps_2 \leq 16 \rho^2$, by the union bound,
\begin{align*}
    \PP \sbr{\underset{\vb \in N}{\max}: \abs{\vb^{\top} \Ub \vb} > \eps_2} 
    \leq \ 
    & 2 \abs{N} \exp \rbr{-\frac{n}{2}
    \min\rbr{\frac{\eps_2^2}{\rbr{16 \rho^2}^2}, 
    \frac{\eps_2}{16 \rho^2}} } 
    \\ \leq \ 
    & \exp \rbr{O\rbr{d} -\frac{n}{2} \cdot \frac{\eps_2^2}{\rbr{16 \rho^2}^2} } \leq \delta
\end{align*}
whenever $n > \frac{2 \rbr{16 \rho^2}^2}{\eps_2^2} \rbr{\Theta\rbr{d} + \log\frac{1}{\delta}}$. By taking $\delta = \exp\rbr{-\frac{1}{4}\rbr{\frac{\eps_2}{16 \rho^2}}^2 n}$, we have that $\underset{\vb \in N}{\max}\abs{\vb^{\top} \Ub \vb} \leq \eps_2$ with high probability when $n > 4 \rbr{\frac{16 \rho^2}{\eps_2}}^2 \Theta\rbr{d}$, and taking $n \gg \rho^4 d$ is sufficient.

Now for any $\vb \in \Xcal \cap\ \mathbb{S}^{d-1}$, there exists some $\vb' \in N$ such that $\norm{\vb - \vb'}_2 \leq \eps_1$. 
Therefore,
\begin{align*}
    \abs{\vb^{\top} \Ub \vb} 
    \ = \
    & \abs{\vb'^{\top} \Ub \vb' + 
    2 \vb'^{\top} \Ub \rbr{\vb - \vb'} + 
    \rbr{\vb - \vb'}^{\top} \Ub \rbr{\vb - \vb'} }
    \\ \leq \ 
    & \rbr{ \underset{\vb \in N}{\max}: \abs{\vb^{\top} \Ub \vb} } + 
    2 \norm{\Ub}_2 \norm{\vb'}_2 \norm{\vb-\vb'}_2 + 
    \norm{\Ub}_2 \norm{\vb-\vb'}_2^2
    \\ \leq \ 
    & \rbr{ \underset{\vb \in N}{\max}: \abs{\vb^{\top} \Ub \vb} } + 
    \norm{\Ub}_2
    \rbr{2 \eps_1 + \eps_1^2}.
\end{align*}
Taking the supremum over $\vb \in \mathbb{S}^{d-1}$, with probability at least $1-\delta$,
\begin{align*}
    \underset{\vb \in \Xcal \cap\ \mathbb{S}^{d-1}}{\max}: \abs{\vb^{\top} \Ub \vb}
    = 
    \norm{\Ub}_2
    \leq 
    \eps_2 + \norm{\Ub}_2 \rbr{2 \eps_1 + \eps_1^2},
    \qquad
    \norm{\Ub}_2
    \leq 
    \frac{\eps_2}{2 - \rbr{1+\eps_1}^2}.
\end{align*}
With $\eps_1 = \frac{1}{3}$ and $\eps_2 = \frac{1}{45}$, we have $\frac{\eps_2}{2 - \rbr{1+\eps_1}^2} = \frac{1}{10}$.

Overall, if $n \gg \rho^4 d$, then with high probability, we have $\norm{\Ub}_2 \leq 0.1$.
\end{proof}

\begin{lemma}\label{lemma:tech_gaussian_width_lipschitz}
Let $U \subseteq \R^d$ be an arbitrary subspace in $\R^d$, and $\gb \sim \Ncal\rbr{\b0,\Ib_d}$ be a Gaussian random vector. Then for any continuous and $C_l$-Lipschitz function $\varphi: \R \to \R$ ($\ie$, $\abs{\varphi(u)-\varphi(u')} \leq C_l \cdot \abs{u-u'}$ for all $u,u' \in \R$), 
\begin{align*}
    \E_{\gb} \sbr{\underset{\ub \in U}{\sup}\ \gb^{\top} \varphi(\ub)} \leq C_l \cdot \E_{\gb} \sbr{\underset{\ub \in U}{\sup}\ \gb^{\top} \ub},
\end{align*}
where $\varphi$ acts on $\ub$ entry-wisely, $\rbr{\varphi(\ub)}_j = \varphi(u_j)$.
In other words, the Gaussian width of the image set $\varphi(U) \triangleq \cbr{\varphi(\ub) \in \R^d ~|~ \ub \in U}$ is upper bounded by that of $U$ scaled by the Lipschitz constant.
\end{lemma}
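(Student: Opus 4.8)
The plan is to read this as a direct instance of the Sudakov--Fernique Gaussian comparison inequality. First I would reduce to a finite index set: since $U$ is a separable subset of $\R^d$ and both $\varphi$ and $\ub \mapsto \gb^\top \ub$ are continuous, for every realization of $\gb$ one has $\sup_{\ub \in U} \gb^\top \varphi(\ub) = \sup_{\ub \in D} \gb^\top \varphi(\ub)$ for a fixed countable dense $D \subseteq U$, and likewise without $\varphi$; writing $D$ as an increasing union of finite sets $V_1 \subseteq V_2 \subseteq \cdots$ and invoking monotone convergence (the first term $\sup_{\ub \in V_1}$ of each sequence is integrable, being dominated by a finite sum of Gaussians) reduces both sides to suprema over a common finite $V = \{\ub^{(1)}, \dots, \ub^{(m)}\} \subseteq U$.

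On a finite $V$, define the two families of centered Gaussian random variables $X_i \triangleq \gb^\top \varphi(\ub^{(i)})$ and $Y_i \triangleq C_l\, \gb^\top \ub^{(i)}$ for $i \in [m]$; each family is a Gaussian vector (a linear image of $\gb$) with mean $\b0$ since $\E \gb = \b0$. The one real computation is the comparison of their increment variances: $\E(X_i - X_j)^2 = \nbr{\varphi(\ub^{(i)}) - \varphi(\ub^{(j)})}_2^2 = \sum_{\ell} \rbr{\varphi(u^{(i)}_\ell) - \varphi(u^{(j)}_\ell)}^2$ while $\E(Y_i - Y_j)^2 = C_l^2 \nbr{\ub^{(i)} - \ub^{(j)}}_2^2 = C_l^2 \sum_{\ell} \rbr{u^{(i)}_\ell - u^{(j)}_\ell}^2$; applying the $C_l$-Lipschitz hypothesis coordinatewise, $\rbr{\varphi(u^{(i)}_\ell) - \varphi(u^{(j)}_\ell)}^2 \le C_l^2 \rbr{u^{(i)}_\ell - u^{(j)}_\ell}^2$, and summing over $\ell$ gives $\E(X_i - X_j)^2 \le \E(Y_i - Y_j)^2$ for all $i, j$. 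Sudakov--Fernique then yields $\E \max_{i} X_i \le \E \max_{i} Y_i = C_l\, \E \max_{i} \gb^\top \ub^{(i)}$, which is exactly the claim restricted to $V$. Passing back through the reduction of the first step and identifying the two sides with the Gaussian widths of $\varphi(U)$ and $U$ finishes the proof.

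There is no substantive obstacle here; the two points that need attention are (i) verifying that both $(X_i)_i$ and $(Y_i)_i$ are genuinely centered Gaussian vectors, which is immediate, and (ii) the passage from the (possibly infinite) set $U$ to finite subsets, which only uses separability of $U$, continuity of $\varphi$, and monotone convergence. If a fully self-contained argument is preferred over citing Sudakov--Fernique, the same conclusion follows from the Gaussian interpolation method: with an independent copy $\tilde{\gb}$ of $\gb$ and a smooth convex approximation $F$ of $\max$, differentiate $t \mapsto \E\, F\rbr{\sqrt{t}\, (X_i)_i + \sqrt{1-t}\, (C_l\, \tilde{\gb}^\top \ub^{(i)})_i}$ in $t$ and use the increment comparison above together with the sign of $\partial^2_{x_i x_j} F$; I would only include this expanded version if a derivation from scratch is wanted.
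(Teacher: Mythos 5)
Your proof is correct, but it takes a genuinely different route from the paper's. The paper argues directly: it symmetrizes the supremum into $\tfrac12 \E\sbr{\sup_{\ub,\ub'} \gb^{\top}(\varphi(\ub)-\varphi(\ub'))}$, bounds the inner product by $\sum_j |g_j|\,|\varphi(u_j)-\varphi(u'_j)|$, applies the Lipschitz property coordinatewise, and then folds the absolute values back into $\gb^{\top}(\ub-\ub')$ to recover the Gaussian width of $U$ --- essentially a hand-rolled Gaussian contraction principle in the style of Ledoux--Talagrand. You instead discretize to a finite index set and invoke Sudakov--Fernique on the two centered Gaussian families $X_i = \gb^{\top}\varphi(\ub^{(i)})$ and $Y_i = C_l\,\gb^{\top}\ub^{(i)}$, with the increment comparison $\E(X_i-X_j)^2 = \norm{\varphi(\ub^{(i)})-\varphi(\ub^{(j)})}_2^2 \le C_l^2 \norm{\ub^{(i)}-\ub^{(j)}}_2^2 = \E(Y_i-Y_j)^2$ following from the coordinatewise Lipschitz bound; this is exactly the hypothesis Sudakov--Fernique needs (no variance matching required, so you correctly avoid Slepian). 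What your approach buys: it outsources the delicate step to a standard comparison theorem, and in particular it sidesteps the paper's penultimate equality $\E\sbr{\sup_{\ub,\ub'}\sum_j |g_j|\,|u_j-u'_j|} = \E\sbr{\sup_{\ub,\ub'}\gb^{\top}(\ub-\ub')}$, which for a general bounded index set only holds as an inequality in the unhelpful direction and requires extra care to justify. What the paper's approach buys is self-containedness --- no external comparison inequality is cited --- at the cost of that fragile step. Your finite-approximation preamble (separability, monotone convergence from an integrable finite max) is sound, though for the sets this lemma is actually applied to (continuous images of compact sets) one could also get it by continuity of the two suprema alone.
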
 

\begin{proof}
\begin{align*}
    \E_{\gb} \sbr{\underset{\ub \in U}{\sup}\ \gb^{\top} \varphi(\ub)}
    =
    & \frac{1}{2} \E_{\gb} \sbr{\underset{\ub \in U}{\sup}\ \gb^{\top} \varphi(\ub) + \underset{\ub' \in U}{\sup}\ \gb^{\top} \varphi(\ub)} \\
    =
    & \frac{1}{2} \E_{\gb} \sbr{\underset{\ub,\ub' \in U}{\sup}\ \gb^{\top} \rbr{\varphi(\ub)-\varphi(\ub')}} \\
    \leq 
    & \frac{1}{2} \E_{\gb} \sbr{\underset{\ub,\ub' \in U}{\sup}\ \sum_{j=1}^d \abs{g_j} \abs{\varphi(u_j)-\varphi(u'_j)}} 
    \quad 
    \rbr{\t{since}\ \varphi\ \t{is $C_l$-Lipschitz}} \\
    \leq 
    & \frac{C_l}{2} \E_{\gb} \sbr{\underset{\ub,\ub' \in U}{\sup}\ \sum_{j=1}^d \abs{g_j} \abs{u_j-u'_j}} \\
    = 
    & \frac{C_l}{2} \E_{\gb} \sbr{\underset{\ub,\ub' \in U}{\sup}\ \gb^{\top} \rbr{\ub-\ub'}} \\
    = 
    & \frac{C_l}{2} \E_{\gb} \sbr{\underset{\ub \in U}{\sup}\ \gb^{\top} \ub + \underset{\ub' \in U}{\sup}\ \gb^{\top} \rbr{-\ub'}} \\
    = 
    & C_l \cdot \E_{\gb} \sbr{\underset{\ub \in U}{\sup}\ \gb^{\top} \ub}
\end{align*}
\end{proof} 

\section{Experiment Details}\label{apdx:exp_detail}

In this section, we provide the details of our experiments. Our code is adapted from the publicly released repo: \url{https://github.com/kekmodel/FixMatch-pytorch}.

\textbf{Dataset:} Our training dataset is derived from CIFAR-100, where the original dataset contains 50,000 training samples of 100 different classes. Out of the original 50,000 samples, we randomly select 10,000 labeled data as training set (i.e., 100 labeled samples per class). To see the impact of different training samples, we also trained our model with dataset that contains 1,000 and 20,000 samples. Evaluations are done on standard test set of CIFAR-100, which contains 10,000 testing samples. 

\textbf{Data Augmentation:} During the training time, given a training batch, we generate corresponding augmented samples by RandAugment \citep{cubuk2020randaugment}. We set the number of augmentations per sample to 7, unless otherwise mentioned.

To generate an augmented image, the RandAugment draws $n$ transformations uniformaly at random from 14 different augmentations, namely \{identity, autoContrast, equalize, rotate, solarize, color, posterize, contrast, brightness, sharpness, shear-x, shear-y, translate-x, translate-y\}. The RandAugment provides each transformation with a single scalar (1 to 10) to control the strength of each of them, which we always set to 10 for all transformations. By default, we set $n=2$ (i.e., using 2 random transformations to generate an augmented sample). To see the impact of different augmentation strength, we choose $n\in\cbr{1,2, 5, 10}$. Examples of augmented samples are shown in \Cref{fig:augmentation_strength}.

\textbf{Parameter Setting:} The batch size is set to 64 and the entire training process takes $2^{15}$ steps. During the training, we adopt the SGD optimizer with momentum set to 0.9, with learning rate for step $i$ being $0.03 \times \cos{\rbr{\frac{i \times 7\pi}{2^{15}\times 16}}}$.  

\textbf{Additional Settings for the semi-supervised learning results:} For the results on semi-supervised learning, besides the 10,000 labeled samples, we also draw additionally samples (ranging from 5,000 to 20,000) from the training set of the original CIFAR-100. We remove the labels of those additionally sampled images, as they serve as ``unlabeled" samples in the semi-supervised learning setting. The FixMatch implementation follows the publicly available on in \url{https://github.com/kekmodel/FixMatch-pytorch}.

\end{document}